\DeclareMathOperator*{\argmax}{arg\,max}
\newcommand{\modelname}{PiVeM}
\newtheorem{theorem}{Theorem}[section]
\newtheorem{lemma}[theorem]{Lemma}
\theoremstyle{definition}
\newtheorem{definition}{Definition}[section]
\theoremstyle{remark}
\theoremstyle{objective}
\newtheorem{objective}{Objective}[section]
\useunder{\uline}{\ul}{}
\title[Piecewise-Velocity Model for Learning Continuous-time Dynamic Node Representations]{Piecewise-Velocity Model for Learning Continuous-time Dynamic Node Representations}
\author[A. Celikkanat et al.]{%
Abdulkadir \c{C}elikkanat, Nikolaos Nakis, Morten Mørup\\
\institute{Technical University of Denmark\\ Kongens Lyngby 2800, Denmark}\\
\email{abce@dtu.dk nnak@dtu.dk mmor@dtu.dk}
}
\begin{document}

\maketitle

\begin{abstract}
Networks have become indispensable and ubiquitous structures in many fields to model the interactions among different entities, such as friendship in social networks or protein interactions in biological graphs. A major challenge is to understand the structure and dynamics of these systems. Although networks evolve through time, most existing graph representation learning methods target only static networks. Whereas approaches have been developed for the modeling of dynamic networks, there is a lack of efficient continuous time dynamic graph representation learning methods that can provide accurate network characterization and visualization in low dimensions while explicitly accounting for prominent network characteristics such as homophily and transitivity. In this paper, we propose the \textsc{Pi}ecewise-\textsc{Ve}locity \textsc{M}odel (\textsc{\modelname}) for the representation of continuous-time dynamic networks. It learns dynamic embeddings in which the temporal evolution of nodes is approximated by piecewise linear interpolations based on a latent distance model with piecewise constant node-specific velocities. The model allows for analytically tractable expressions of the associated Poisson process likelihood with scalable inference invariant to the number of events. We further impose a scalable Kronecker structured Gaussian Process prior to the dynamics accounting for community structure, temporal smoothness, and disentangled (uncorrelated) latent embedding dimensions optimally learned to characterize the network dynamics. We show that \textsc{\modelname} can successfully represent network structure and dynamics in ultra-low two-dimensional embedding spaces. We further extensively evaluate the performance of the approach on various networks of different types and sizes and find that it outperforms existing relevant state-of-art methods in downstream tasks such as link prediction. In summary, \textsc{\modelname} enables easily interpretable dynamic network visualizations and characterizations that can further improve our understanding of the intrinsic dynamics of time-evolving networks.
\end{abstract}
\section{Introduction}
With technological advancements in data storage and production systems, we have witnessed the massive growth of graph (or network) data in recent years, with many prominent examples, including social, technological, and biological networks from diverse disciplines \cite{newman}. They propose an exquisite way to store and represent the interactions among data points and machine learning techniques on graphs have thus gained considerable attention to extract meaningful information from these complex systems and perform various predictive tasks. In this regard, \textit{Graph Representation Learning (GRL)} techniques have become a cornerstone in the field through their exceptional performance in many downstream tasks such as node classification and edge prediction. Unlike the classical techniques relying on the extraction and design of handcrafted feature vectors peculiar to given networks, GRL approaches aim to design algorithms that can automatically learn features optimally preserving various characteristics of networks in their induced latent space.

Many networks evolve through time and are liable to modifications in structure with newly arriving nodes or emerging connections, the GRL methods have primarily addressed static networks, in other words, a snapshot of the networks at a specific time. However, recent years have seen increasing efforts toward modeling dynamic complex networks, see also \cite{kim2018review} for a review. Whereas most approaches have concentrated their attention on discrete-time temporal networks, which have built upon a collection of time-stamped networks (c.f. \cite{ishiguro2010dynamic,herlau2013modeling,heaukulani2013dynamic, deepwalk-perozzi14, node2vec-kdd16, durante2014bayesian, durante2016locally,kim2018review, ldm_2}) modeling of networks in continuous time has also been studied (c.f. \cite{, hawkes_1,hawkes_2,hawkes_3,fan2021continuous}). These approaches have been based on latent class \cite{ishiguro2010dynamic, hawkes_1,hawkes_2,hawkes_3,herlau2013modeling} and latent feature modeling approaches \cite{heaukulani2013dynamic, deepwalk-perozzi14, node2vec-kdd16, durante2014bayesian, durante2016locally, fan2021continuous, kim2018review, ldm_2}, including advanced dynamic graph neural network representations \cite{dyrep, gnn_1}. 

Although these procedures have enabled the characterization of evolving networks for downstream tasks such as link prediction and node classification, existing dynamic latent feature models are either in discrete time or do not explicitly account for network homophily and transitivity in terms of their latent representations. Whereas latent class models typically provide interpretable representations at the level of groups, latent feature models in general rely on high-dimensional latent representations that are not easily amenable to visualization and interpretation. A further complication of most existing dynamic modeling approaches is their scaling typically growing in complexity by the number of observed events and number of network dyads.


This work addresses the embedding problem of nodes in a continuous-time latent space and seeks to model network interaction patterns using low-dimensional representations accurately.
We model the node interactions with Nonhomogeneous Poisson Point Processes whose densities are defined based on the relative distances among the node trajectories in the latent space. The node movements are characterized by node-specific piecewise velocity vectors, such that each node acquires a dynamic representation pursuing a continuous path in the latent space throughout the timeline. The main contributions of the paper can be summarized as follows:
\begin{itemize}
    \item We propose a novel scalable GRL method, the \textsc{Pi}ecewise-\textsc{Ve}locity \textsc{M}odel (\textsc{\modelname}), to flexibly learn continuous-time dynamic node representations. The temporal evolutions of networks are represented by piecewise linear motions of the nodes' embeddings in the latent space.
     \item We present a framework balancing the trade-off between the smoothness of node trajectories in the latent space and model capacity accounting for the temporal evolution.
    \item We show that the \textsc{\modelname} can embed nodes accurately in very low dimensional spaces, i.e., $D=2$, such that it serves as a dynamic network visualization tool facilitating human insights into networks' complex, evolving structures.
    \item The performance of the introduced approach is extensively evaluated in various downstream tasks, such as network reconstruction and link prediction. We show that it outperforms well-known baseline methods on a wide range of datasets. Besides, we propose an efficient model optimization strategy enabling the \textsc{\modelname} to scale to large networks.
\end{itemize}

\textbf{Source code and other materials.} The datasets, the implementation, and all the generated animations can be found at the address: \url{https://abdcelikkanat.github.io/projects/pivem/}.
\section{Related Work}
The work on dynamic modeling of complex networks has spurred substantial attention in recent years and covers approaches for the modeling of dynamic structures at the level of groups (i.e., latent class models) and dynamic representation learning approaches based on latent feature models, including graph neural networks (GNNs). Whereas most attention has been given to discrete-time dynamic networks, a substantial body of work has also covered continuous-time modeling, as outlined below.

\textbf{Dynamic Latent Class Models.} Initial efforts for modeling continuously evolving networks has combined latent class models defined by the stochastic block models \citep{holland1983stochastic,doi:10.1198/016214501753208735} with Hawkes processes \citep{hp1,hp2}. In the work of \cite{hawkes_1}, co-dependent (through time) Hawkes processes were combined with the Infinite Relational Model \citep{irm} (Hawkes IRM), yielding a non-parametric Bayesian approach capable of expressing reciprocity between inferred groups of actors. A drawback of such a model is the computational cost of the imposed Markov-chain Monte-Carlo optimization, as well as, its limitation on modeling only reciprocation effects. Scalability issues were addressed in \citep{hawkes_2} via the Block Hawkes Model (BHM), which utilizes variational inference and simplifies the Hawkes IRM model by associating only the inferred block structure pairs with a univariate point process. Recently, the BHM model was extended to decoupling interactions between different pairs of nodes belonging to the same block pair, through the use of independent univariate Hawkes processes, defining the Community Hawkes Independent Pairs model \citep{hawkes_3}. Whereas the above works have been based on continuous time modeling of dynamic networks, the dynamic-IRM (dIRM) of \cite{ishiguro2010dynamic} focused on the modeling of discrete-time networks by inducing an infinite Hidden Markov Model (IHMM) to account for transitions over time of nodes between communities. In \cite{herlau2013modeling}, a dynamic hierarchical block model was proposed based on the modeling of change points admitting dynamic node relocation within a Gibbs fragmentation tree. Despite the various advantages of such models, networks are constrained to be regarded and analyzed at a block level which in many cases is restrictive.

\textbf{Dynamic Latent Feature Models.} Prominent works around node-level representations of continuous-time networks \cite{survey1,survey2} have originally considered feature propagation within the discrete time network topology \cite{heaukulani2013dynamic} or extended the random-walk frameworks \cite{deepwalk-perozzi14, node2vec-kdd16} to the temporal case yielding the \textsc{CTDNE} \citep{rw_3} model.
\textsc{CTDNE} provides a single temporal-aware node embedding, meaning that network and node evolution are unable to be visualized and explored. A more flexible approach was designed in \cite{dyrep} (\textsc{DyRep}), where temporal node embeddings are learned under a so-called latent mediation process, combining an association process describing the dynamics of the network with a communication process describing the dynamics on the network. It uses deep recurrent architectures to parameterize the intensity function of the point process, and thus the embedding space suffers from a lack of explainability. \textsc{HTNE} \cite{htne} introduces a model utilizing a Hawkes process relying on node embeddings. Unlike many approaches concentrating only on the structural modifications occurring between nodes, \textsc{MMDNE} \cite{mmdne} explicitly considers such pairwise micro, and network scale macro dynamics and uses a temporal node representation learning algorithm relying on a temporal attention point process. Graph neural networks (GNNs) can be extended to the analysis of continuous networks via the Temporal Graph Network (TGN) \citep{gnn_1} where the classical encoder-decoder architecture is coupled with a memory cell. 

In the context of latent feature dynamic network models, Gaussian Processes (GP) have been used to characterize the smoothness of the temporal dynamics. This includes the discrete-time dynamic models considered in \cite{durante2014bayesian} in which latent factors were endowed a GP prior based on radial basis kernels imposing temporal smoothness within the latent representation. The approach was extended in \cite{durante2016locally} to impose stochastic differential equations for the evolution of latent factors. In \citep{fan2021continuous}, GPs were used for the modeling of continuous-time dynamic networks based on Poisson and Hawkes processes, including exogenous as well as endogenous features specified by a radial basis function prior.

Latent Distance Models (LDM) \cite{exp1} have recently been shown to outperform prominent GRL methods utilizing very-low dimensions in the static case \citep{nakis22hbdm,HMLDM}. LDMs for temporal networks have been mostly studied in the discrete case \citep{kim2018review}, considering mainly diffusion dynamics to make predictions, as firstly studied in \cite{ldm_1} and extended with popularity and activity effects \cite{ldm_2}. While all these models express homophily and transitivity in the dynamic case, they fail to account for continuous dynamics. 

Our work is inspired by these previous approaches for the modeling of dynamic complex networks. Specifically, we make use of the latent distance model formulation to account for homophily and transitivity, the Poisson Process for the characterization of continuous-time dynamics, and a Gaussian Process prior based on the radial-basis-function kernel to account for temporal smoothness within the latent representation. Inspired by latent class models, we further impose a structured low-rank representation of nodes based on soft-assigning nodes to communities exhibiting similar temporal dynamics. Notably, we exploit how LDMs as opposed to GNN approaches in general, can provide easily interpretable yet accurate network representations in ultra-low dimensional spaces ($D=2$), facilitating accurate dynamic network visualization and interpretation.
\section{Proposed Approach}
Our main objective is to represent every node of a given network, $\mathcal{G}=(\mathcal{V}, \mathcal{E})$, into a low-dimensional metric space, $(\mathsf{X}, d_{\mathsf{X}})$, in which the pairwise node proximities will be characterized by their distances in a continuous-time latent space (Objective \ref{objective}). Since we address the continuous-time dynamic networks, the interactions among nodes through time can vary, with new links appearing or disappearing at any time. More precisely, we will presently consider undirected continuous-time networks:

\begin{definition}\label{defn:dynamic_network}
A \textit{continuous-time dynamic undirected graph} on a time interval $\mathcal{I}_T :=[0,T]$ is an ordered pair $\mathcal{G}=(\mathcal{V}, \mathcal{E})$ where $\mathcal{V}=\{1,\ldots,N\}$ is a set of nodes and $\mathcal{E} \subseteq \{ \{i,j, t\} \in \mathcal{V}^2\times \mathcal{I}_T | 1 \leq i < j \leq N \}$ is a set of \textit{events} or \textit{edges}.
\end{definition}
We will use the symbol, $N$, to denote the number of nodes in the vertex set and $\mathcal{E}_{ij}[t_l, t_u]\subseteq \mathcal{E}$ to indicate the set of edges between nodes $i$ and $j$ occurring on the interval $[t_l,t_u]\subseteq\mathcal{I}_T$.


\subsection{Nonhomogeneous Poisson Point Processes}
The \textit{Poisson Point Processes (PPP)s} are one of the natural choices widely used to model the number of random events occurring in time or the locations in a spatial space. PPPs are parameterized by a quantity known as the rate or the intensity indicating the average density of the points in the underlying space of the Poisson process. If the intensity depends on the time or location, the point process is called \textit{Nonhomogeneous PPP} (Defn. \ref{def:nhpp}), and it is typically adapted for applications in which the event points are not uniformly distributed \cite{ppp_roy}.

\begin{definition}\label{def:nhpp}[Nonhomogeneous PPP]
A counting process $\{M(t), t \geq 0\}$ is called a \textit{nonhomogeneous Poisson process} with \textit{intensity function} $\lambda(t)$, $t\geq0$ if \textbf{(i)} $M(0)=0$, \textbf{(ii)} $M(t)$ has independent increments: i.e., $\left(M(t_1)-M(t_0)\right),\ldots,\left(M(t_{B})-M(t_{B-1})\right)$ are independent random variables for each $0\leq t_0 < \cdots < t_B$, and \textbf{(iii)} $M(t_u) - M(t_l)$ is Poisson distributed with mean $\int_{t_l}^{t_u}\lambda(t)dt$.
\end{definition}

In this paper, we consider continuous-time dynamic undirected networks such that the events (or links/edges) among nodes can occur at any point in time. As we will examine in the following sections, these interactions do not necessarily exhibit any recurring characteristics; instead, they vary over time in many real networks. In this regard, we assume that the number of links, $M[t_l, t_u]$, between a pair of nodes $(i,j)\in\mathcal{V}^2$ ($i<j$ since the graph is undirected) follows a nonhomogeneous Poisson point process (NHPP) with intensity function $\lambda_{ij}(t)$ on the time interval $[t_l, t_u)$, and the log-likelihood function can be written by
\begin{align}\label{eq:log_likelihood}
    \mathcal{L}(\Omega):=\log p(\mathcal{G}| \Omega) = \sum_{\substack{i<j\\i,j\in\mathcal{V}}}\left(\sum_{e_{ij \in \mathcal{E}_{ij}}}\log\lambda_{ij}(e_{ij}) - \int_{0}^{T}\lambda_{ij}(t)dt\right)
\end{align}
where $\mathcal{E}_{i,j}\subseteq \mathcal{E}[0,T]$ is the set of links of node pair $(i,j)\in \mathcal{V}^2$ on the timeline $\mathcal{I}_T:=[0,T]$ for a network $\mathcal{G}=(\mathcal{V},\mathcal{E})$, and $\Omega=\{\lambda_{ij}\}_{1\leq i < j \leq N}$ indicates the set of intensity functions. 

\subsection{Problem Formulation}\label{subsec:problem_formulation}
Without loss of generality, it can be assumed that the timeline starts from $0$ and is bounded by $T\in\mathbb{R}^{+}$. Since the interactions among nodes can occur at any time point on $\mathcal{I}_{T}=[0,T]$, we would like to identify an accurate continuous-time node representation $\{r(i,t)\}_{(i,t)\in\mathcal{V}\times\mathcal{I}_T}$ defined using a low-dimensional latent space $\mathbb{R}^{D}$ $(D \ll N)$ where $\mathbf{r}:\mathcal{V}\times\mathcal{I}_{T} \rightarrow \mathbb{R}^{D}$ is a map indicating the embedding or representation of node $i\in\mathcal{V}$ at time point $t\in\mathcal{I}_T$. We define our objective more formally as follows:

\begin{objective}\label{objective}
Let $\mathcal{G}=(\mathcal{V},\mathcal{E})$ be a continuous-time dynamic network and $\bm{\lambda}^*:\mathcal{V}^2\times \mathcal{I}_{T} \longrightarrow \mathbb{R}$ be an unknown intensity function of a nonhomogeneous Poisson point process. For a given metric space $(\mathsf{X}, d_{\mathsf{X}})$, our purpose is to learn a function or representation $\mathbf{r}:\mathcal{V}\times \mathcal{I}_T \rightarrow \mathsf{X}$ satisfying
\begin{align}\label{eq:objective}
    \frac{1}{(t_u-t_l)}\int_{t_l}^{t_u}\psi^+\!\!\left(d_{\mathsf{X}}\big(\mathbf{r}(i,t),\mathbf{r}(j,t)\big)\right)dt \approx \frac{1}{(t_u-t_l)}\int_{t_l}^{t_u}\bm{\lambda}^*(i,j,t)dt 
\end{align}
for a continuous function $\psi^+:\mathbb{R}\rightarrow\mathbb{R}^+$ for all $(i,j)\in\mathcal{V}^2$ pairs, and for every interval $[t_l,t_u]\subseteq\mathcal{I}_{T}$. 
\end{objective}
In this work, we consider the Euclidean metric on a $D$-dimensional real vector space, $\mathsf{X}:=\mathbb{R}^{D}$ and the embedding of node $i\in\mathcal{V}$ at time $t\in\mathcal{I}_T$ will be denoted by $\mathbf{r}_{i}(t)\in\mathbb{R}^{D}$. 


\subsection{\textsc{\modelname}: Piecewise-Velocity Model For Learning Continuous-time Embeddings}
We learn continuous-time node representations by employing the canonical exponential link-function defining the intensity function as
\begin{align}\label{eq:intensity_function}
\lambda_{ij}(t) := \exp{\left(\beta_i + \beta_j - ||\mathbf{r}_{i}(t) - \mathbf{r}_{j}(t)||^2\right)}
\end{align}
where $\mathbf{r}_{i}(t)\in\mathbb{R}^D$ and $\beta_i\in\mathbb{R}$ denote the embedding vector at time $t$ and the bias term of node $i\in\mathcal{V}$, respectively. Importantly, for a pair of nodes, we would like to have embeddings close enough to each other when they have high interactions during a particular time interval and far away from each other if they have less or no links. For given bias terms, it can be seen by the following Lemma \ref{lem:emb_pos}, that the definition of the intensity function provides a guarantee for our goal given in Equation \eqref{eq:objective}.

\begin{lemma}\label{lem:emb_pos}
For given fixed bias terms $\{\beta_i\}_{i\in\mathcal{V}}$, the node embeddings, $\{\mathbf{r}_{i}(t)\}_{i\in\mathcal{V}}$, learned by the objective given in Equation \ref{eq:log_likelihood} within a bounded set of radius $R_t$ during a time interval $[t_l, t_u]$ satisfy
\begin{align*}
\log\!\left(\frac{(t_u \!- t_l) }{ -\log p_{ij}^{0} }\right) \!+\! (\beta_{i} \!+\! \beta_j) \leq \frac{1}{(t_u \!-\! t_l)}\!\int_{t_l}^{t_u}\!\!\!\!|| \mathbf{r}_{i}(t)\!-\!\mathbf{r}_{j}(t) ||^2 dt \!\leq\! \log\!\left(\!\frac{(t_u - t_l) }{-\!\log(1-p_{ij}^{>}) }\!\!\right) \!\!+\! (\beta_{i} \!+\! \beta_j) \!+\! R_t
\end{align*}
where 
$p_{ij}^0$ and $p_{ij}^>$ are the probabilities of having zero and more than zero links for the nodes $i$ and $j$.
\end{lemma}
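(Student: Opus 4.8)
The idea is to reduce both inequalities to a single statement about the integrated intensity over the window, $\Lambda_{ij} := \int_{t_l}^{t_u}\lambda_{ij}(t)\,dt$, and then to sandwich the time--averaged squared distance $\overline{d^2_{ij}} := \tfrac{1}{t_u-t_l}\int_{t_l}^{t_u}\|\mathbf{r}_i(t)-\mathbf{r}_j(t)\|^2\,dt$ using Jensen's inequality on one side and the boundedness hypothesis on the other. First, by part (iii) of Definition~\ref{def:nhpp} the number of links of the pair $(i,j)$ on $[t_l,t_u]$ is Poisson distributed with mean $\Lambda_{ij}$, so the void probability is $p_{ij}^{0}=e^{-\Lambda_{ij}}$ and hence $p_{ij}^{>}=1-e^{-\Lambda_{ij}}$; consequently
\begin{align*}
-\log p_{ij}^{0} \;=\; \Lambda_{ij} \;=\; -\log\!\big(1-p_{ij}^{>}\big),
\end{align*}
and it suffices to show $\log\!\big((t_u-t_l)/\Lambda_{ij}\big)+(\beta_i+\beta_j)\le \overline{d^2_{ij}}\le \log\!\big((t_u-t_l)/\Lambda_{ij}\big)+(\beta_i+\beta_j)+R_t$.

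For the lower bound I would substitute the link function \eqref{eq:intensity_function} to write $\Lambda_{ij}/(t_u-t_l)=e^{\beta_i+\beta_j}\cdot\tfrac{1}{t_u-t_l}\int_{t_l}^{t_u}e^{-\|\mathbf{r}_i(t)-\mathbf{r}_j(t)\|^2}\,dt$, then apply Jensen's inequality to the convex function $x\mapsto e^{-x}$ against the normalized Lebesgue measure on $[t_l,t_u]$; this yields $\Lambda_{ij}/(t_u-t_l)\ge e^{\beta_i+\beta_j-\overline{d^2_{ij}}}$, and taking logarithms and rearranging gives the left inequality.

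For the upper bound Jensen points the wrong way, so this is where the boundedness hypothesis must be used: since the trajectories stay inside a set of radius $R_t$ throughout the window, $\|\mathbf{r}_i(t)-\mathbf{r}_j(t)\|^2\le R_t$ for all $t$ there (up to a fixed factor coming from the precise convention adopted for the radius, which can be absorbed into $R_t$). Because the squared distance is also nonnegative we get $\overline{d^2_{ij}}-\|\mathbf{r}_i(t)-\mathbf{r}_j(t)\|^2\le \overline{d^2_{ij}}\le R_t$ pointwise, hence $\tfrac{1}{t_u-t_l}\int_{t_l}^{t_u}e^{\,\overline{d^2_{ij}}-\|\mathbf{r}_i(t)-\mathbf{r}_j(t)\|^2}\,dt\le e^{R_t}$; multiplying through by $e^{\beta_i+\beta_j-\overline{d^2_{ij}}}$ turns the left side back into $\Lambda_{ij}/(t_u-t_l)$ and gives $\Lambda_{ij}/(t_u-t_l)\le e^{\beta_i+\beta_j-\overline{d^2_{ij}}+R_t}$, whence the right inequality after taking logarithms and rearranging.

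The Poisson void--probability identity and the Jensen step are routine; the only genuine obstacle is the upper bound, since it requires a reverse Jensen-type estimate, and this is precisely where compactness of the latent domain enters. The crude pointwise bound above is enough and is exactly what produces the additive slack $R_t$. What remains is bookkeeping: matching the constant to the paper's convention for the radius of the containing set, tracking signs when rearranging, and noting that $p_{ij}^{0}$ and $p_{ij}^{>}$ are taken under the model's intensity so that $p_{ij}^{0}=e^{-\Lambda_{ij}}$ holds as an exact identity (Definition~\ref{def:nhpp}) rather than an approximation; one should also observe that $\Lambda_{ij}>0$ whenever $t_u>t_l$, so all the logarithms above are well defined.
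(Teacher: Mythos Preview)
Your proposal is correct and follows essentially the same route as the paper: both sides are reduced to bounds on $\Lambda_{ij}/(t_u-t_l)=e^{\beta_i+\beta_j}\cdot\tfrac{1}{t_u-t_l}\int e^{-\|\mathbf{r}_i-\mathbf{r}_j\|^2}$, the lower bound comes from Jensen applied to $x\mapsto e^{-x}$, and the upper bound is the same crude reverse-Jensen estimate (packaged in the paper as a separate lemma) using nonnegativity of the squared distance together with $\overline{d_{ij}^2}\le \sup_t\|\mathbf{r}_i(t)-\mathbf{r}_j(t)\|^2=:R_t$. Your hedge about the radius convention is unnecessary, since the paper's $R_t$ is exactly that supremum of the squared distance.
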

\begin{proof}
Please see the appendix for the proof.
\end{proof}

When we take a look at the lower bound provided in Lemma \ref{lem:emb_pos} for a pair of nodes having no links ($p_{ij}^0\rightarrow 1$) within a particular time interval $[t_l,t_u]$, the lower bound converges to infinity, so nodes are positioned in distant locations. Similarly, the upper bound provides us an intuition about the position of the nodes for the case in which they exhibit a high number of interactions ($p_{ij}^{>}\rightarrow 1$). The log term squeezes the distance, and the nodes are forced to be positioned close to each other. As we will see in the following parts, we restrict the node representations within a bounded region of radius $R_t$ by imposing a prior function in order to restrain the movements in the latent space.
We utilize the squared Euclidean distance in Equation \eqref{eq:intensity_function}, which is not a metric, but we presently impose it as a distance \cite{HMLDM,NIPS2017_59dfa2df} for computational convenience, see Lemma \ref{lemma:integral} in Appendix.

Notably, constraining the approximation of the unknown intensity function by a metric space imposes the homophily property (i.e., similar nodes in the graph are placed close to each other in embedding space). 
It can also be seen that the transitivity property holds up to some extent (i.e., if node $i$ is similar to $j$ and $j$ similar to $k$, then $i$ should also be similar to $k$) since we can bound the squared Euclidean distance \cite{hoff_social_networks,HMLDM}.
Note that the bias terms $\{
\beta_i\}_{i\in\mathcal{V}}$ are responsible for the node-specific effects such as degree heterogeneity  \cite{nakis22hbdm,hoff_social_networks}, and they provide additional flexibility to the model by acting as scaling factor for the corresponding nodes so that, for instance, a hub node might have a high number of interactions simultaneously without getting close to the others in the latent space. 

Since our primary purpose is to learn continuous-time node representations in a latent space, we define the representation of node $i\in \mathcal{V}$ at time $t\in\mathcal{I}_T$ based on a linear model by $\mathbf{r}_{i}(t) := \mathbf{x}_i^{(0)} + \mathbf{v}_it$. Here, $\mathbf{x}_i^{(0)}$ can be considered as the initial position and $\mathbf{v}_i$ the velocity of the corresponding node. However, the linear model provides a minimal capacity for tracking the nodes and modeling their representations. 
Therefore, we reinterpret the given timeline $\mathcal{I}_T:=[0,T]$ by dividing it into $B$ equally-sized bins, $[t_{b-1}, t_b)$, $(1 \leq b \leq B)$ such that $[0, T] = [0, t_1) \cup \cdots \cup [t_{B-1}, t_B]$ where $t_0:=0$ and $t_{B}:=T$. By applying the linear model for each subinterval, we obtain a piecewise linear approximation of general intensity functions strengthening the models' capacity. As a result, we can write the position of node $i$ at time $t\in\mathcal{I}_T$ as follows:
\begin{align}\label{eq:piecewise_definition}
    \mathbf{r}_i(t) := \mathbf{x}^{(0)}_i + \Delta_B\mathbf{v}_i^{(1)} + \Delta_B\mathbf{v}_i^{(2)} + \cdots + (t \ \text{mod}(\Delta_B))\mathbf{v}_i^{\left(\lfloor t/\Delta_B \rfloor+1\right)}
\end{align}
where $\Delta_B$ indicates the bin widths, $T / B$, and $\text{mod}(\cdot)$ is the modulo operation used to compute the remaining time. Note that the piece-wise interpretation of the timeline allows us to track better the path of the nodes in the embedding space, and it can be seen by Theorem \ref{thm:piecewise_cont} that we can obtain more accurate trails by augmenting the number of bins.

\begin{theorem}\label{thm:piecewise_cont}
Let $\mathbf{f}(t):[0, T]\rightarrow\mathbb{R}^D$ be a continuous embedding of a node. For any given $\epsilon>0$, there exists a continuous, piecewise-linear node embedding, $\mathbf{r}(t)$, satisfying $||\mathbf{f}(t) - \mathbf{r}(t)||_2 < \epsilon$ for all $t\in[0,T]$ where $\mathbf{r}(t):=\mathbf{r}^{(b)}(t)$ for all $(b-1)\Delta_B\leq t < b\Delta_B$, $\mathbf{r}(t):=\mathbf{r}^{(B)}(t)$ for $t=T$ and $\Delta_B = T/B$ for some $B\in\mathbb{N}^{+}$.
\end{theorem}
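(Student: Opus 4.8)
The plan is to reduce the statement to the standard fact that a continuous function on a compact interval is uniformly approximated by its piecewise-linear interpolant, and then to observe that such an interpolant already has the constrained parametric form of Equation~\eqref{eq:piecewise_definition}. First I would invoke uniform continuity of $\mathbf{f}$ on the compact set $[0,T]$: given $\epsilon>0$, there is $\delta>0$ with $\|\mathbf{f}(s)-\mathbf{f}(t)\|_2<\epsilon$ whenever $|s-t|<\delta$. I would then pick any $B\in\mathbb{N}^{+}$ with $\Delta_B = T/B < \delta$ and set $t_b = b\Delta_B$ for $b=0,\dots,B$.

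Next I would define $\mathbf{r}$ on each bin $[t_{b-1},t_b)$ to be the affine segment from $\mathbf{f}(t_{b-1})$ to $\mathbf{f}(t_b)$, i.e.\ $\mathbf{r}^{(b)}(t) = \mathbf{f}(t_{b-1}) + \tfrac{t-t_{b-1}}{\Delta_B}\bigl(\mathbf{f}(t_b)-\mathbf{f}(t_{b-1})\bigr)$, with the value at $t=T$ coming from $\mathbf{r}^{(B)}$ as in the statement, so that $\mathbf{r}(T)=\mathbf{f}(T)$. This $\mathbf{r}$ is continuous since adjacent segments agree at the shared knot $t_b$, and it matches the form \eqref{eq:piecewise_definition} by taking $\mathbf{x}^{(0)}_i = \mathbf{f}(0)$ and per-bin velocity $\mathbf{v}_i^{(b)} = \bigl(\mathbf{f}(t_b)-\mathbf{f}(t_{b-1})\bigr)/\Delta_B$: the telescoping sum of the completed bins together with the partial term $(t\bmod\Delta_B)\mathbf{v}_i^{(\lfloor t/\Delta_B\rfloor+1)}$ reproduces $\mathbf{r}^{(b)}(t)$ exactly.

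For the error bound I would fix $t\in[t_{b-1},t_b)$, write $\alpha = (t-t_{b-1})/\Delta_B\in[0,1)$ so that $\mathbf{r}(t) = (1-\alpha)\mathbf{f}(t_{b-1}) + \alpha\mathbf{f}(t_b)$ is a convex combination, and estimate
\begin{align*}
\|\mathbf{f}(t)-\mathbf{r}(t)\|_2
&= \bigl\|(1-\alpha)\bigl(\mathbf{f}(t)-\mathbf{f}(t_{b-1})\bigr) + \alpha\bigl(\mathbf{f}(t)-\mathbf{f}(t_b)\bigr)\bigr\|_2 \\
&\le (1-\alpha)\,\|\mathbf{f}(t)-\mathbf{f}(t_{b-1})\|_2 + \alpha\,\|\mathbf{f}(t)-\mathbf{f}(t_b)\|_2 < \epsilon,
\end{align*}
using $|t-t_{b-1}|<\Delta_B<\delta$ and $|t-t_b|\le\Delta_B<\delta$; the case $t=T$ is immediate since $\mathbf{r}(T)=\mathbf{f}(T)$. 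Running the uniform-continuity step with $\epsilon/2$ in place of $\epsilon$ would additionally make the remark that "more bins give better trails" quantitative through the modulus of continuity of $\mathbf{f}$.

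I do not expect a genuine obstacle here: the argument is the classical compactness plus convexity-of-the-norm estimate. The only points that need care are verifying that the piecewise-linear interpolant genuinely conforms to the constrained representation \eqref{eq:piecewise_definition} — a single shared $\mathbf{x}^{(0)}_i$ and one constant velocity per bin, with continuity at the knots emerging automatically rather than being imposed separately — and checking that the $t=T$ boundary convention in the statement is consistent with this construction.
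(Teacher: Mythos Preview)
Your proposal is correct and follows essentially the same route as the paper: invoke uniform continuity on the compact interval (the paper cites Heine--Cantor), interpolate $\mathbf{f}$ linearly at the equispaced knots $t_b=b\Delta_B$ with velocities $\mathbf{v}^{(b)}=(\mathbf{f}(t_b)-\mathbf{f}(t_{b-1}))/\Delta_B$, and bound the error on each bin. The only cosmetic difference is in the final estimate: the paper chooses the modulus so that $|t-\tilde t|\le\Delta_B$ forces $\|\mathbf{f}(t)-\mathbf{f}(\tilde t)\|_2<\epsilon/2$ and then uses a plain triangle inequality to get $\epsilon/2+\epsilon/2$, whereas you keep $\epsilon$ and exploit the convex-combination form $(1-\alpha)\mathbf{f}(t_{b-1})+\alpha\mathbf{f}(t_b)$ to obtain $(1-\alpha)\epsilon+\alpha\epsilon=\epsilon$ directly; both are standard and yield the same conclusion.
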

\begin{proof}
Please see the appendix for the proof.
\end{proof}

\textbf{Prior probability.} In order to control the smoothness of the motion in the latent space, we employ a Gaussian Process (GP) \cite{gp_rasmussen} prior over the initial position $\mathbf{x}^{(0)}\in\mathbb{R}^{N\times D}$ and velocity vectors $\mathbf{v}\in\mathbb{R}^{B \times N\times D}$. Hence, we suppose that $\text{vect}(\mathbf{x^{(0)}}) \oplus \text{vect}(\mathbf{v})\sim \mathcal{N}(\bm{0}, \bm{\Sigma})$ where $\bm{\Sigma}:=\mathbf{\lambda}^2(\sigma_{\Sigma}^2\mathbf{I} + \mathbf{K})$ is the covariance matrix with a scaling factor $\lambda \in \mathbb{R}$. We utilize, $\sigma_{\Sigma\in\mathbb{R}}$, to denote the noise of the covariance, and $\text{vect}(\mathbf{z})$ is the vectorization operator stacking the columns to form a single vector. To reduce the number of parameters of the prior and enable scalable inference, we define $\mathbf{K}$ as a Kronecker product of three matrices $\mathbf{K}:=\mathbf{B}\otimes\mathbf{C}\otimes\mathbf{D}$ respectively accounting for temporal-, node-, and dimension specific covariance structures. Specifically, we define $\mathbf{B}:=\big[c_{\mathbf{x}^0}\big] \oplus \big[ \exp( -{(c_{b} - \tilde{c}_{\tilde{b}})^2} / {2\sigma_{\mathbf{B}}^2}) \big]_{1\leq b,\tilde{b} \leq B}$ is a ${(B+1)\times (B+1)}$ matrix intending to capture the smoothness of velocities across time-bins where $c_{b}=\frac{t_{b-1}+t_b}{2}$ is the center of the corresponding bin, and the matrix is constructed by combining the radial basis function kernel (RBF) with a scalar term $c_{\mathbf{x}^0}$ corresponding to the initial position being decoupled from the structure of the velocities. The node specific matrix, $\mathbf{C} \in\mathbb{R}^{N\times N}$, is constructed as a product of a low-rank matrix $\mathbf{C} := \mathbf{Q}\mathbf{Q}^{\top}$ where the row sums of $\mathbf{Q}\in\mathbb{R}^{N \times k}$ equals to $1$ $(k \ll N)$, and it aims to extract covariation patterns of the motion of the nodes. Finally, we simply set the dimensionality matrix to the identity: $\mathbf{D} := \mathbf{I}\in\mathbb{R}^{D\times D}$ in order to have uncorrelated dimensions.

To sum up, we can express our objective relying on the piecewise velocities with the prior as follows:
\begin{align}\label{eq:objective_pivem}
    \hat{\Omega}= \argmax_{\Omega}\sum_{\substack{i<j\\i,j\in\mathcal{V}}
    }\left(\sum_{e_{ij \in \mathcal{E}_{ij}}}\log\lambda_{ij}(e_{ij}) - \int_{0}^{T}\lambda_{ij}(t)dt\right) + \log\mathcal{N}\left(\left[\begin{array}{c}
    \mathbf{x}^{(0)}\\
    \mathbf{v}
    \end{array} \right]; \mathbf{0}, \bm{\Sigma}\right)
\end{align}
where $\Omega=\{\bm{\beta}, \mathbf{x}^{(0)}, \mathbf{v}, \sigma_{\Sigma}, \sigma_{\mathbf{B}}, c_{\mathbf{x}^0}, \mathbf{Q}\}$ is the set of hyper-parameters, and $\lambda_{ij}(t)$ is the intensity function as defined in Equation \eqref{eq:intensity_function} based on the node embeddings, $\mathbf{r}_i(t)\in\mathbb{R}^D$. 

\begin{figure}[!h]
\centering
\includegraphics[width=0.9\textwidth]{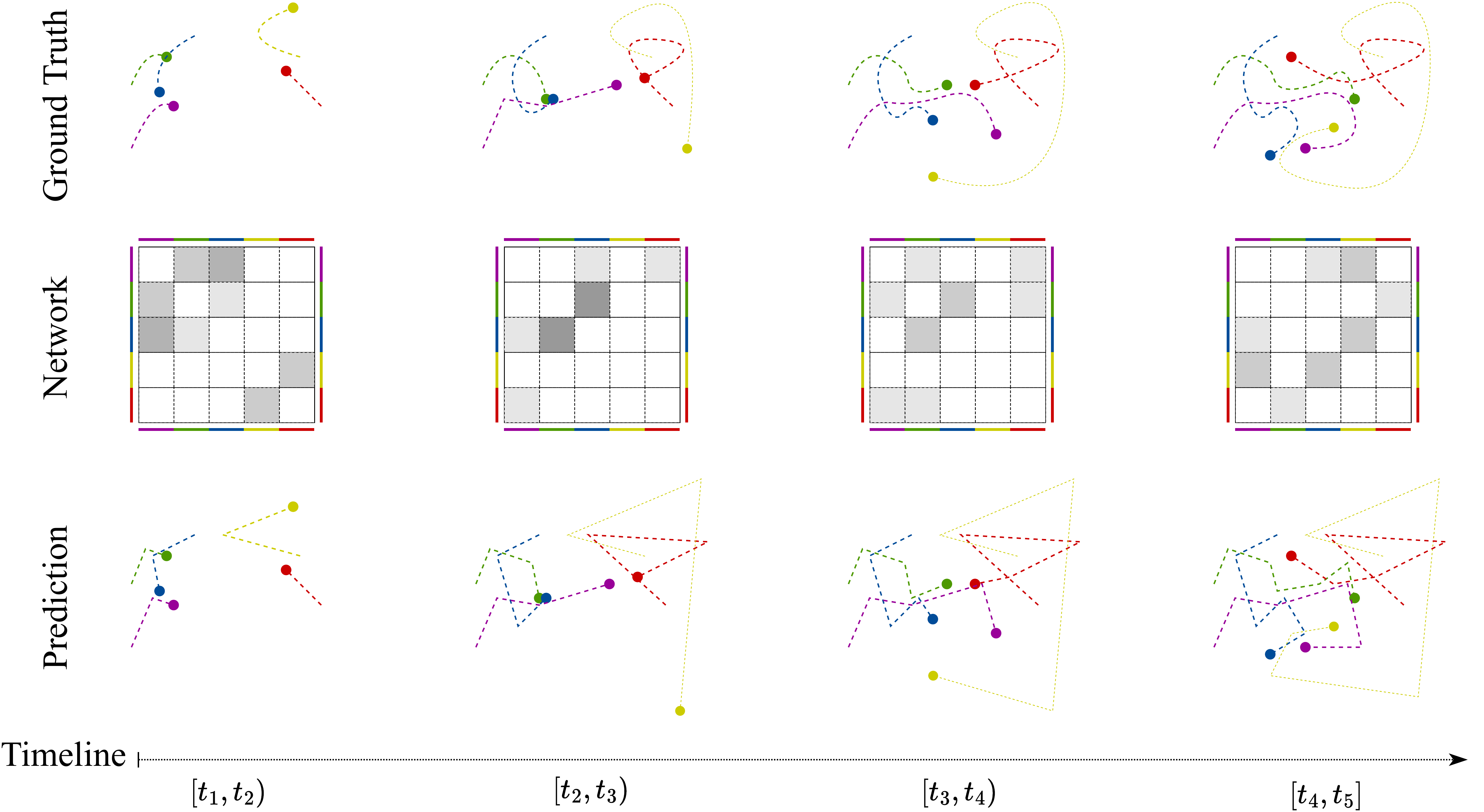}
\caption{Illustrative comparison of the ground-truth embeddings, the adjacency matrices here for illustrative purposes constructed based on aggregating the links appearing within the corresponding time intervals and learned node representations.}
\label{fig:overview}
\end{figure}
We provide the general overview of the \textsc{PiVeM} method in Figure \ref{fig:overview}.
The first row shows how the ground truth node embeddings evolve through time, and the dashed curves in the latent space show the paths they have followed. The middle row represents the adjacency matrices of the network constructed by aggregating the links occurring within the corresponding time intervals $[t_{init}, t_{last}]$ for illustrative purposes (notably, the model operates in continuous time and accounts for the temporal position of each edge). Each entry of the adjacency matrices is shaded with respect to the number of links in the intervals, so darker regions represent a higher number of links. Finally, the last row illustrates the learned representations and their motion histories in the latent space.

\subsection{Optimization}
Our objective given in Equation \eqref{eq:objective_pivem} is not a convex function, so the learning strategy that we follow is of great significance in order to escape from the local minima and for the quality of the representations. We start by randomly initializing the model's hyper-parameters from $[-1, 1]$ except for the velocity tensor, which is set to $0$ at the beginning.  We adapt the sequential learning strategy in learning these parameters.  In other words, we first optimize the initial position and bias terms together, $\{\mathbf{x}^{(0)},\bm{\beta}\}$, for a given number of epochs; then, we include the velocity tensor, $\{\mathbf{v}\}$,  in the optimization process and repeat the training for the same number of epochs. Finally, we add the prior parameters and learn all model hyper-parameters together. We have employed \textit{Adam optimizer} \cite{kingma2017adam} with learning rate $0.1$.

\textbf{Computational issues and complexity.} Note that we need to evaluate the log-intensity term in Equation \eqref{eq:objective_pivem} for each $(i,j)\in\mathcal{V}^2$ and event time $e_{ij}\in\mathcal{E}_{ij}$. Therefore, the computational cost required for the whole network is bounded by $\mathcal{O}\left(|\mathcal{V}|^2|\mathcal{E}|\right)$. However, we can alleviate the computational cost by pre-computing certain coefficients at the beginning of the optimization process so that the complexity can be reduced to $\mathcal{O}\left(|\mathcal{V}|^2B\right)$. We also have an explicit formula for the computation of the integral term since we utilize the squared Euclidean distance so that it can be computed in at most $\mathcal{O}(|\mathcal{V}|^2)$ operations. Instead of optimizing the whole network at once, we  apply a batching strategy over the set of nodes in order to reduce the memory requirements. As a result, we sample $\mathcal{S}$ nodes for each epoch. Hence, the overall complexity for the log-likelihood function is $\mathcal{O}\left(\mathcal{S}^2B\mathcal{I}\right)$ where $\mathcal{I}$ is the number of epochs and $\mathcal{S} \ll |\mathcal{V}|$. Similarly, the prior can be computed in at most $\mathcal{O}(B^3D^3K^2\mathcal{S})$ operations by using various algebraic properties such as \textit{Woodbury matrix identity} and \textit{Matrix Determinant lemma} \cite{aggarwal2020linear}. To sum up, the complexity of the proposed approach is $\mathcal{O}(B\mathcal{S}^2\mathcal{I} + B^3D^3K^2\mathcal{S}\mathcal{I})$ (Please see the appendix for the derivations and other details).

\section{Experiments}
In this section, we extensively evaluate the performance of the proposed \textsc{Pi}ecewise-\textsc{Ve}locity \textsc{M}odel with respect to the well-known baselines in challenging tasks over various datasets of sizes and types. We consider all networks as undirected, and the event times of links are scaled to the interval $[0,1]$ for the consistency of experiments. We use the finest granularity level of the given input timestamps, such as seconds and milliseconds. We provide a brief summary of the networks below, but more details and various statistics are reported in Table \ref{tab:dataset_statistics} in the appendix. For all the methods, we learn node embeddings in two-dimensional space $(D=2)$ since one of the objectives of this work is to produce dynamic node embeddings facilitating human insights into a complex network.

\textbf{Experimental Setup.} We first split the networks into two sets, such that the events occurring in the last $10\%$ of the timeline are taken out for the prediction. Then, we randomly choose $10\%$ of the node pairs among all possible dyads in the network for the graph completion task, and we ensure that each node in the residual network contains at least one event keeping the number of nodes fixed. If a pair of nodes only contains events in the prediction set and if these nodes do not have any other links during the training time, they are removed from the networks. 

For conducting the experiments, we generate the labeled dataset of links as follows: For the positive samples, we construct small intervals of length $2\times10^{-3}$ for each event time (i.e., $[e-10^{-3}, e+10^{3}]$ where $e$ is an event time). We randomly sample an equal number of time points and corresponding node pairs to form negative instances. If a sampled event time is not located inside the interval of a positive sample, we follow the same strategy to build an interval for it, and it is considered a negative instance. Otherwise, we sample another time point and a dyad. Note that some networks might contain a very high number of links, which leads to computational problems for these networks. Therefore, we subsample $10^4$ positive and negative instances if they contain more than this.

\textbf{Synthetic networks.} We generate two artificial networks in order to evaluate the behavior of the models in controlled experimental settings. \textbf{(i)} \textsl{Synthetic($\pi$)} is sampled from the prior distribution stated in Subsection \ref{subsec:problem_formulation}. The hyper-parameters, $\bm{\beta}$, $K$ and $B$ are set to $\bm{0}$, $20$ and $100$, respectively. \textbf{(ii)} \textsl{Synthetic($\mu$)} is constructed based on the temporal block structures. The timeline is divided into $10$ sub-intervals, and the nodes are randomly split into $20$ groups for each interval. The links within each group are sampled from the Poisson distribution with the constant intensity of $5$. 

\textbf{Real networks.} The \textbf{(iii)} \textsl{Hypertext} network \cite{dataset_hypertext} was built on the radio badge records showing the interactions of the conference attendees for $2.5$ days, and each event time indicates $20$ seconds of active contact. Similarly, \textbf{(iv)} the \textsl{Contacts} network \cite{dataset_contacts} was generated concerning the interactions of the individuals in an office environment. \textbf{(v)} \textsl{Forum} \cite{dataset_fb_forum} is comprised of the activity data of university students on an online social forum system. \textbf{(vi)} \textsl{College} \cite{dataset_college} indicates the private messages among the students on an online social platform. Finally, \textbf{(vii)} \textsl{Email} \cite{dataset_email-eu} was constructed based on the exchanged e-mail information among the members of European research institutions.

\textbf{Baselines.} We compare the performance of our method with five baselines. We include (i) \textsc{Ldm} with Poisson rate with node-specific biases \cite{hoff2005jasa, hoff_social_networks} since it is a static method having the closest formulation to ours. We randomly initialize the embeddings and bias terms, and train the model with the Adam optimizer \cite{kingma2017adam} for $500$ epochs and a learning rate of $0.1$. A very well-known GRL method, (ii) \textsc{Node2Vec} \cite{node2vec-kdd16} relies on the explicit generation of the random walks by starting from each node in the network, then it learns node embeddings by inspiring from the SkipGram \cite{MSCC+13} algorithm. It optimizes the softmax function for the nodes lying within a fixed window region with respect to a chosen center node over the produced node sequences. 
In our experiments, we tune the model's parameters ($p$, $q$) from $\{0.25, 0.5, 1, 2, 4\}$. Since it has the ability to run over the weighted networks, we also constructed a weighted graph based on the number of links through time and reported the best score of both versions of the networks. (iii) \textsc{CTDNE} \cite{rw_3} is a dynamic node embedding approach performing temporal random walks over the network. (iv) \textsc{HTNE} \cite{htne} learns embeddings based on the Hawkes process modeling the neighborhood formation sequence induced from the network structure. (v) \textsc{MMDNE} \cite{mmdne} introduces a temporal attention point process to model the newly established links and proposes a general dynamics equation relying on latent node representations to capture the network scale evolutions. 

The continuous-time baseline methods are unable to produce instantaneous node representations and they produce embeddings only for a given time. Therefore, we have utilized the last time of the training set to obtain the representations. We have chosen the recommended values for the common hyper-parameters of \textsc{Node2Vec} and \textsc{CTDNE}, so the number of walks, walk length, and window size parameters have been set to $10$, $80$, and $10$, respectively. We used the implementation provided by the StellarGraph Python package to produce the embeddings for \textsc{CTDNE}. Similarly, we have adapted the suggested hyperparameter settings for \textsc{MMDNE} and \textsc{CTDNE} with $100$ epochs.

For our method, we set the parameter $K=25$, and bins count $B=100$ to have enough capacity to track node interactions. For the regularization term ($\lambda$) of the prior, we first mask $20\%$ of the dyads in the optimization of Equation \eqref{eq:objective_pivem}. Furthermore, we train the model by starting with $\lambda=10^6$, and then we reduce it to one-tenth after each $100$ epoch. The same procedure is repeated until $\lambda=10^{-6}$, and we choose the $\lambda$ value minimizing the log-likelihood of the masked pairs. The final embeddings are then obtained by performing this annealing strategy without any mask until this $\lambda$ value. We repeat this procedure $5$ times, and we consider the best-performing method in learning the embeddings. The 
Coefficient of Variation (CV) of the experiments is always less than $0.5$, and Figure \ref{fig:masked_nll} shows an illustrative example for tuning $\lambda$ over the \textsl{Synthetic($\pi$)} dataset with $5$ random runs.

For the performance comparison of the methods, we provide the Area Under Curve (AUC) scores for the Receiver Operating Characteristic (ROC) and Precision-Recall (PR) curves \cite{scikit-learn}. We compute the intensity of a given instance for \textsc{LDM} and \textsc{PiVeM} for the similarity measure of the node pair. Since $\textsc{Node2Vec}$ and $\textsc{CTDNE}$ rely on the SkipGram architecture \cite{MSCC+13}, we use cosine similarity for them.

\begin{table}[]
\centering
\caption{The performance evaluation for the network reconstruction experiment over various datasets.}
\label{tab:net_rec}
\resizebox{\textwidth}{!}{%
\begin{tabular}{@{}rcccc|cccccccccc@{}}
\toprule
 & \multicolumn{2}{c}{\textsl{Synthetic($\pi$)}} & \multicolumn{2}{c}{\textsl{Synthetic($\mu$)}} & \multicolumn{2}{c}{\textsl{College}} & \multicolumn{2}{c}{\textsl{Contacts}} & \multicolumn{2}{c}{\textsl{Email}} & \multicolumn{2}{c}{\textsl{Forum}} & \multicolumn{2}{c}{\textsl{Hypertext}} \\ \cmidrule(l){2-3}\cmidrule(l){4-5}\cmidrule(l){6-7}\cmidrule(l){8-9}\cmidrule(l){10-11}\cmidrule(l){12-13}\cmidrule(l){14-15}
 & ROC & PR & ROC & PR & ROC & PR & ROC & PR & ROC & PR & ROC & PR & ROC & PR \\\cmidrule(l){2-2}\cmidrule(l){3-3}\cmidrule(l){4-4}\cmidrule(l){5-5}\cmidrule(l){6-6}\cmidrule(l){7-7}\cmidrule(l){8-8}\cmidrule(l){9-9}\cmidrule(l){10-10}\cmidrule(l){11-11}\cmidrule(l){12-12}\cmidrule(l){13-13}\cmidrule(l){14-14}\cmidrule(l){15-15}
\textsc{LDM} & $.563$ & $.539$ & {\ul $.669$} & {\ul $.642$} & \bm{$.951$} & {\ul $.944$} & {\ul $.860$} & {\ul $.835$} & {\ul $.954$} & {\ul $.948$} & \bm{$.909$} & {\ul $.897$} & {\ul $.818$} & {\ul $.797$} \\
\textsc{Node2Vec} & $.519$ & $.507$ & $.503$ & $.509$ & $.711$ & $.655$ & $.812$ & $.756$ & $.853$ & $.828$ & $.677$ & $.619$ & $.696$ & $.648$ \\
\textsc{CTDNE} & $.613$ & $.580$ & $.539$ & $.544$ & $.661$ & $.622$ & $.787$ & $.760$ & $.854$ & $.840$ & $.657$ & $.622$ & $.725$ & $.725$ \\
{\textsc{HTNE}} & {\ul $.614$} & {\ul $.591$} & {$.599$} & {$.571$} & {$.721$} & {$.683$} & {$.846$} & {$.823$} & {$.871$} & {$.867$} & {$.723$} & {$.691$} & {$.775$} & {$.787$} \\
{\textsc{MMDNE}} & {$.582$} & {$.565$} & {$.600$} & {$.576$} & {$.725$} & {$.692$} & {$.844$} & {$.825$} & {$.867$} & {$.863$} & {$.737$} & {$.712$} & {$.778$} & {$.787$} \\
\textsc{PiVeM} & \bm{$.762$} & \bm{$.713$} & \bm{$.905$} & \bm{$.869$} & {\ul $.948$} & \bm{$.948$} & \bm{$.938$} & \bm{$.938$} & \bm{$.978$} & \bm{$.977$} & {\ul $.907$} & \bm{$.902$} & \bm{$.830$} & \bm{$.823$} \\ \bottomrule
\end{tabular}%
}
\end{table}
\begin{table}[]
\centering
\caption{The performance evaluation for the network completion experiment over various datasets.}
\label{tab:net_comp}
\resizebox{\textwidth}{!}{%
\begin{tabular}{rcccc|cccccccccc}
\toprule
 & \multicolumn{2}{c}{\textsl{Synthetic($\pi$)}} & \multicolumn{2}{c}{\textsl{Synthetic($\mu$)}} & \multicolumn{2}{c}{\textsl{College}} & \multicolumn{2}{c}{\textsl{Contacts}} & \multicolumn{2}{c}{\textsl{Email}} & \multicolumn{2}{c}{\textsl{Forum}} & \multicolumn{2}{c}{\textsl{Hypertext}} \\ \cmidrule(l){2-3}\cmidrule(l){4-5}\cmidrule(l){6-7}\cmidrule(l){8-9}\cmidrule(l){10-11}\cmidrule(l){12-13}\cmidrule(l){14-15}
 & ROC & PR & ROC & PR & ROC & PR & ROC & PR & ROC & PR & ROC & PR & ROC & PR \\\cmidrule(l){2-2}\cmidrule(l){3-3}\cmidrule(l){4-4}\cmidrule(l){5-5}\cmidrule(l){6-6}\cmidrule(l){7-7}\cmidrule(l){8-8}\cmidrule(l){9-9}\cmidrule(l){10-10}\cmidrule(l){11-11}\cmidrule(l){12-12}\cmidrule(l){13-13}\cmidrule(l){14-14}\cmidrule(l){15-15}
\textsc{LDM} & $.535$ & $.529$ & $.646$ & $.631$ & {\ul $.931$} & {\ul $.926$} & {\ul $.836$} & $.799$ & {\ul $.948$} & {\ul $.942$} & {\ul $.863$} & {\ul $.858$} & {\ul $.761$} & \bm{$.738$} \\
\textsc{Node2Vec} & $.519$ & $.511$ & {\ul $.747$} & {\ul $.677$} & $.685$ & $.637$ & $.787$ & $.744$ & $.818$ & $.777$ & $.635$ & $.592$ & $.596$ & $.588$ \\
\textsc{CTDNE} & {\ul $.608$} & $.573$ & $.531$ & $.539$ & $.601$ & $.556$ & $.752$ & $.703$ & $.831$ & $.812$ & $.568$ & $.539$ & $.554$ & $.537$ \\
{\textsc{HTNE}} & {$.605$} & {\ul $.583$} & {$.573$} & {$.557$} & {$.673$} & {$.651$} & {$.792$} & {$.759$} & {$.853$} & {$.834$} & {$.596$} & {$.581$} & {$.602$} & {$.633$} \\
{\textsc{MMDNE}} & {$.587$} & {$.570$} & {$.592$} & {$.571$} & {$.677$} & {$.662$} & {$.819$} & {\ul $.811$} & {$.844$} & {$.829$} & {$.596$} & {$.570$} & {$.587$} & {$.614$} \\
\textsc{PiVeM} & \bm{$.750$} & \bm{$.696$} & \bm{$.874$} & \bm{$.851$} & \bm{$.935$} & \bm{$.934$} & \bm{$.873$} & \bm{$.864$} & \bm{$.951$} & \bm{$.953$} & \bm{$.879$} & \bm{$.875$} & \bm{$.770$} & {\ul $.712$}\\\bottomrule
\end{tabular}%
}
\end{table}
\begin{table}[]
\centering
\caption{The performance evaluation for the link prediction experiment over various datasets.}
\label{tab:fut_pred}
\label{tab:my-table}
\resizebox{\textwidth}{!}{%
\begin{tabular}{rcccc|cccccccccc}
\toprule
 & \multicolumn{2}{c}{\textsl{Synthetic($\pi$)}} & \multicolumn{2}{c}{\textsl{Synthetic($\mu$)}} & \multicolumn{2}{c}{\textsl{College}} & \multicolumn{2}{c}{\textsl{Contacts}} & \multicolumn{2}{c}{\textsl{Email}} & \multicolumn{2}{c}{\textsl{Forum}} & \multicolumn{2}{c}{\textsl{Hypertext}} \\ \cmidrule(l){2-3}\cmidrule(l){4-5}\cmidrule(l){6-7}\cmidrule(l){8-9}\cmidrule(l){10-11}\cmidrule(l){12-13}\cmidrule(l){14-15}
 & ROC & PR & ROC & PR & ROC & PR & ROC & PR & ROC & PR & ROC & PR & ROC & PR \\\cmidrule(l){2-2}\cmidrule(l){3-3}\cmidrule(l){4-4}\cmidrule(l){5-5}\cmidrule(l){6-6}\cmidrule(l){7-7}\cmidrule(l){8-8}\cmidrule(l){9-9}\cmidrule(l){10-10}\cmidrule(l){11-11}\cmidrule(l){12-12}\cmidrule(l){13-13}\cmidrule(l){14-14}\cmidrule(l){15-15}
\textsc{LDM} & $.562$ & $.539$ & {\ul $.498$} & \bm{$.642$} & \bm{$.951$} & \bm{$.944$} & $.860$ & $.835$ & {\ul $.954$} & {\ul $.948$} & \bm{$.909$} & \bm{$.897$} & \bm{$.819$} & $.797$ \\
\textsc{Node2Vec} & $.518$ & $.506$ & {\ul $.498$} & $.502$ & $.705$ & $.676$ & $.783$ & $.716$ & $.825$ & $.807$ & $.635$ & $.605$ & $.748$ & $.739$ \\
\textsc{CTDNE} & {\ul $.680$} & {\ul $.629$} & $.481$ & $.487$ & $.691$ & $.711$ & $.842$ & $.815$ & $.824$ & $.815$ & $.664$ & $.642$ & $.699$ & $.734$ \\
{\textsc{HTNE}} & {$.573$} & {$.569$} & {$.491$} & {$.493$} & {$.715$} & {$.684$} & {$.864$} & {$.824$} & {$.838$} & {$.837$} & {$.764$} & {$.747$} & {$.785$} & {\bm{$.820$}} \\
{\textsc{MMDNE}} & {$.591$} & {$.575$} & {\bm{$.506$}} & {\ul $.515$} & {$.717$} & {$.703$} & {\ul $.874$} & {\ul $.847$} & {$.827$} & {$.832$} & {$.762$} & {$.746$} & {\ul $.795$} & {\ul $.813$} \\
\textsc{PiVeM} & \bm{$.716$} & \bm{$.689$} & $.474$ & $.485$ & {\ul $.891$} & {\ul $.887$} & \bm{$.876$} & \bm{$.884$} & \bm{$.964$} & \bm{$.964$} & {\ul $.894$} & {\ul $.895$} & $.756$ & $.767$\\\bottomrule
\end{tabular}%
}
\end{table}

\textbf{Network Reconstruction.} Our goal is to see how accurately a model can capture the interaction patterns among nodes and generate embeddings exhibiting their temporal relationships in a latent space. In this regard, we train the models on the residual network and generate sample sets as described previously. The performance of the models is reported in Table \ref{tab:net_rec}.
Comparing the performance of \textsc{\modelname} against the baselines, we observe favorable results across all networks, highlighting the importance and ability of \textsc{\modelname} to account for and detect structure in a continuous time manner.

\textbf{Network Completion.} The network completion experiment is a relatively more challenging task than the reconstruction. Since we hide $10\%$ of the network, the dyads containing events are also viewed as non-link pairs, and the temporal models should place these nodes in distant locations of the embedding space. However, it might be possible to predict these events accurately if the network links have temporal triangle patterns through certain time intervals.
In Table \ref{tab:net_comp}, we report the AUC-ROC and PR-AUC scores for the network completion experiment. Once more, \textsc{\modelname} outperforms the baselines (in most cases significantly). We again discovered evidence supporting the necessity for modeling and tracking temporal networks with time-evolving embedding representations.

\textbf{Future Prediction.} Finally, we examine the performance of the models in the future prediction task. Here, the models are asked to forecast the $10\%$ future of the timeline. For \textsc{PiVeM}, the similarity between nodes is obtained by calculating the intensity function for the timeline of the training set (i.e., from $0$ to $0.9$), and we keep our previously described strategies for the baselines since they generate the embeddings only for the last training time. Table \ref{tab:fut_pred} presents the performances of the models. It is noteworthy that while \textsc{PiVeM} outperforms the baselines significantly on the \textsl{Synthetic($\pi$)} network, it does not show promising results on \textsl{Synthetic($\mu$)}. Since the first network is compatible with our model, it successfully learns the dominant link pattern of the network. However, the second network conflicts with our model: it forms a completely different structure for every $0.1$ second. For the real datasets, we observe mostly on-par results, especially with \textsc{LDM}. Some real networks contain link patterns that become "static" with respect to the future prediction task.

\begin{figure} 
\centering
\begin{subfigure}[b]{0.25\textwidth}
\centering
    \includegraphics[width=\linewidth]{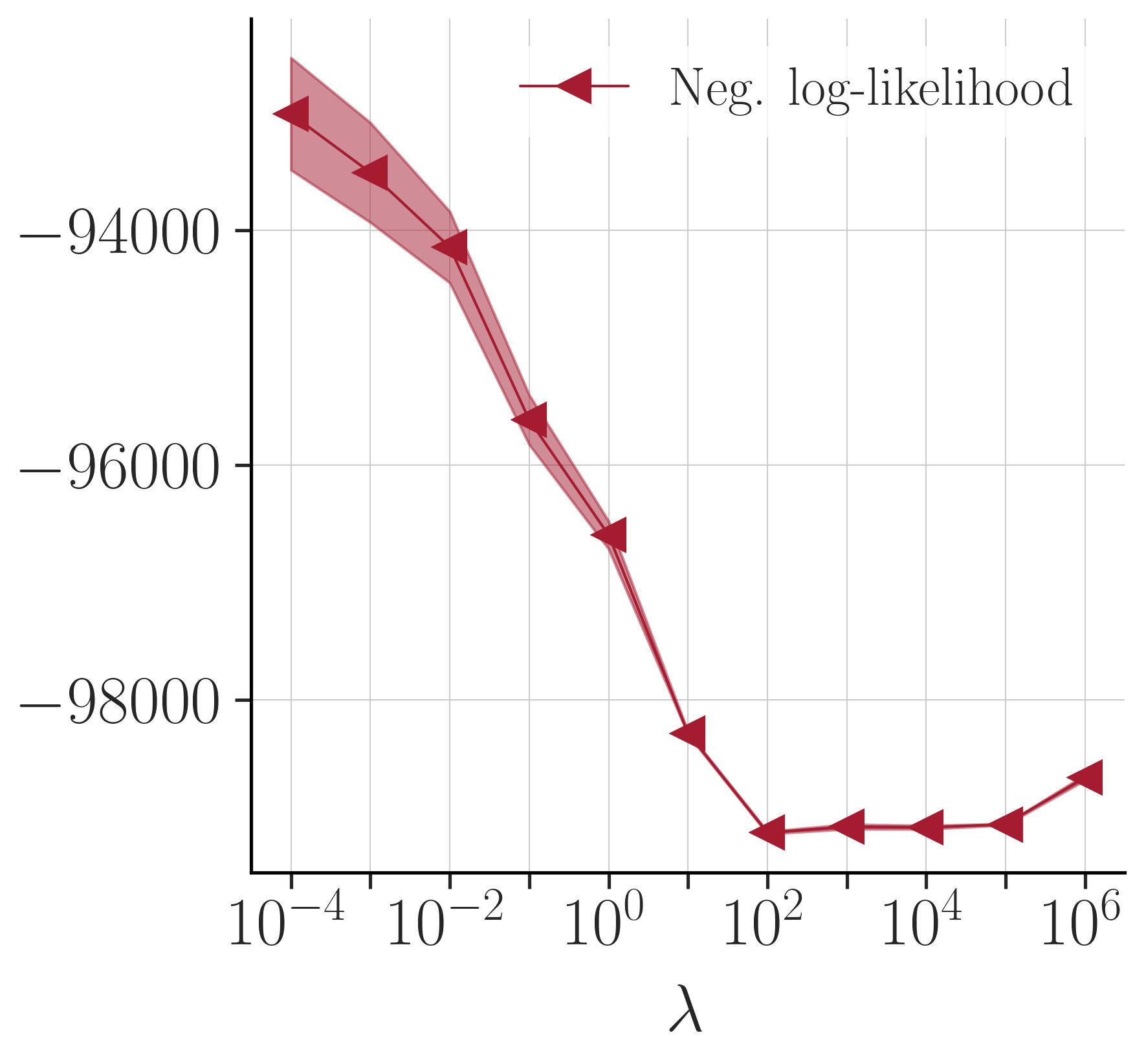}
    \caption{Annealing strategy}
    \label{fig:masked_nll}
  \end{subfigure}
\hspace{1cm} 
\begin{subfigure}[b]{0.25\textwidth}
\centering
    \includegraphics[width=\linewidth]{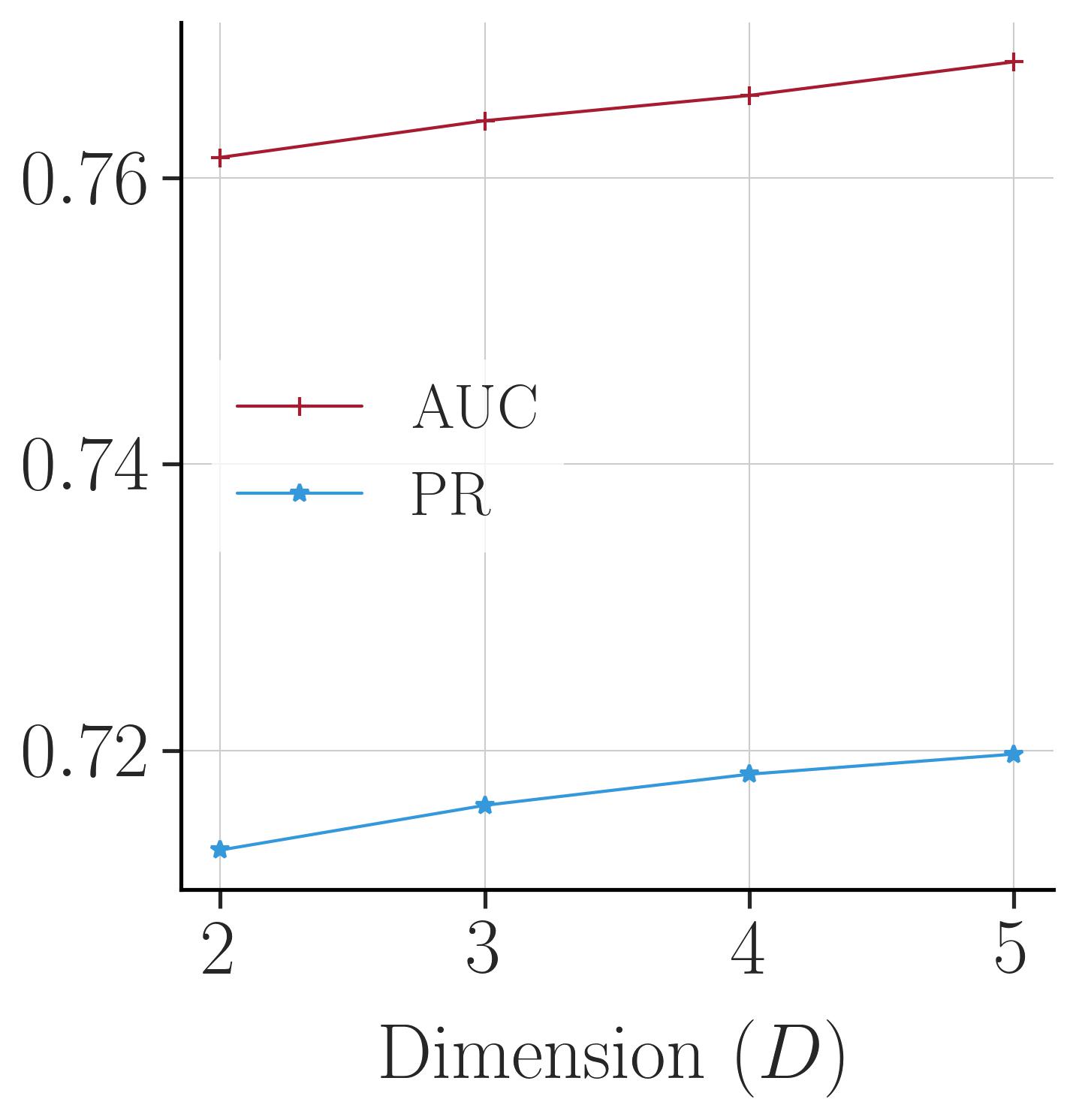}
    \caption{Influence of dimension}
    \label{fig:dim_influence}
  \end{subfigure}
 \hspace{1cm} 
  \begin{subfigure}[b]{0.25\textwidth}
  \centering
    \includegraphics[width=\linewidth]{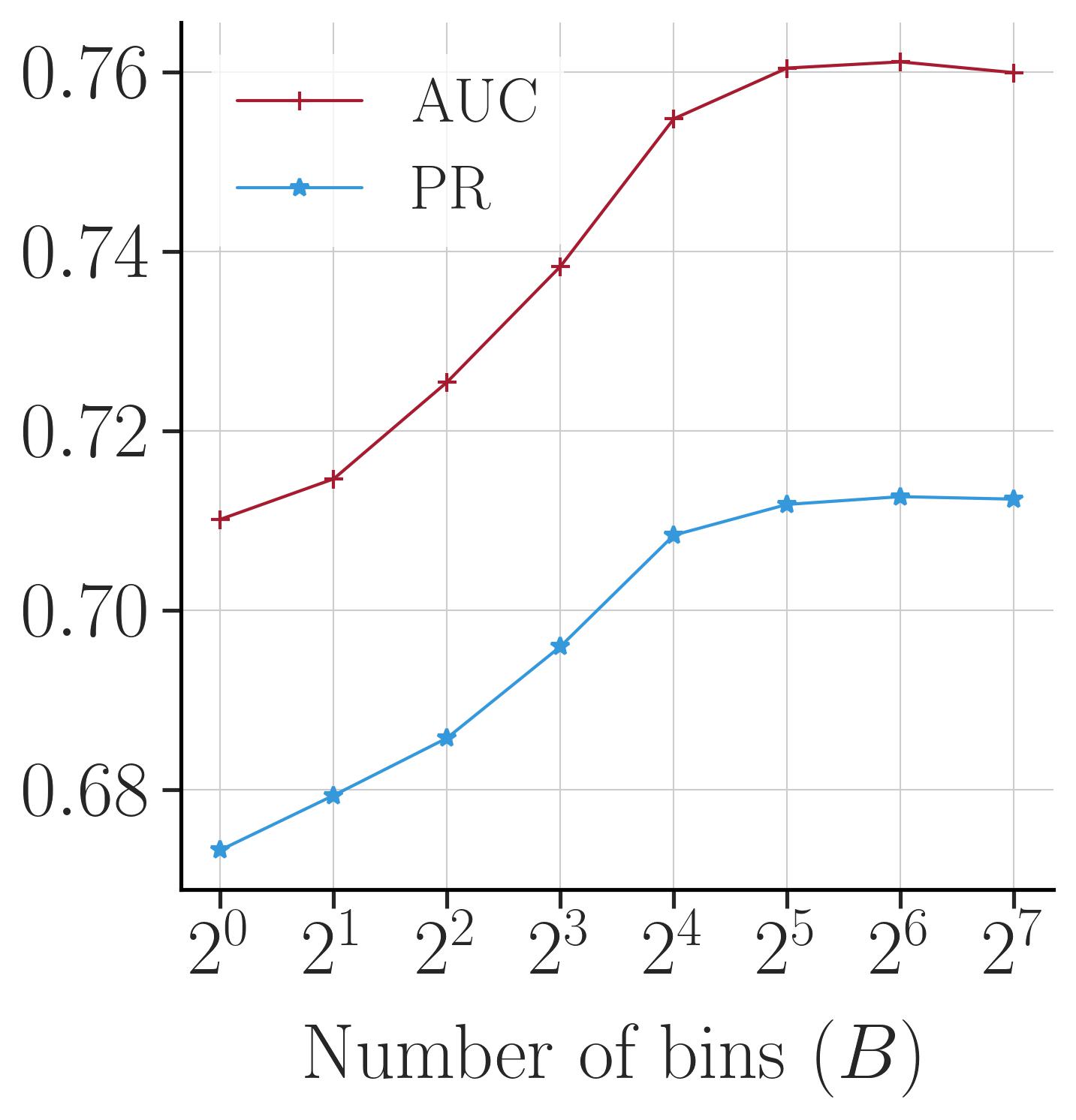}
    \caption{Influence of bin count}
    \label{fig:bin_influence}
  \end{subfigure}
  \caption{Influence of the model hyperparameters over the \textsl{Synthetic($\pi$)} dataset.}
\end{figure}
We have previously described how we set the prior coefficient, $\lambda$, and now we will examine the influence of the other hyperparameters over the \textsl{Synthetic($\pi$)} dataset for network reconstruction.

\textbf{Influence of dimension size (\bm{$D$}).} We report the AUC-ROC and AUC-PR scores in Figure \ref{fig:dim_influence}. When we increase the dimension size, we observe a constant increase in performance. It is not a surprising result because we also increase the model's capacity depending on the dimension. However, the two-dimensional space still provides comparable performances in the experiments, facilitating human insights into networks' complex, evolving structures.

\textbf{Influence of bin count (\bm{$B$}).} 
Figure \ref{fig:bin_influence} demonstrates the effect of the number of bins for the network reconstruction task. We generated the \textsl{Synthetic($\pi$)} network from for $100$ bins, so it can be seen that the performance stabilizes around $2^6$, which points out that \textsc{PiVeM} reaches enough capability to model the interactions among nodes. 

\textbf{Latent Embedding Animation.} It is of great significance from a human perspective to embed nodes into a latent space and visualize them in order to comprehend the intricate interactions among the entities and extract noteworthy information from the data. Although many GRL methods show high performance in the downstream tasks, in general, they require high dimensional spaces, so a postprocessing step later has to be applied in order to visualize the node representations in a small dimensional space. However, such processes cause distortions in the embeddings, which can lead a practitioner to end up with inaccurate arguments about the data.

As we have seen in the experimental evaluations, our proposed approach successfully learns embeddings in the two-dimensional space, and it also produces continuous-time representations. Therefore, it offers the ability to animate how the network evolves through time and can play a crucial role in grasping the underlying characteristics of the networks. As an illustrative example, Figure \ref{fig:visualization} compares the ground truth representations of \textsl{Synthetic($\pi$)} with the learned ones. The synthetic network consists of small communities of $5$ nodes, and each color indicates these groups. Although the problem does not have unique solutions, it can be seen that our model successfully seizes the clustering patterns in the network. We refer the reader to supplementary materials for the full animation.

\begin{figure} 
\rotatebox{90}{\hspace{0.5cm}\textbf{Ground Truth}}
\begin{subfigure}[b]{0.24\textwidth}
\centering
    \includegraphics[width=\textwidth]{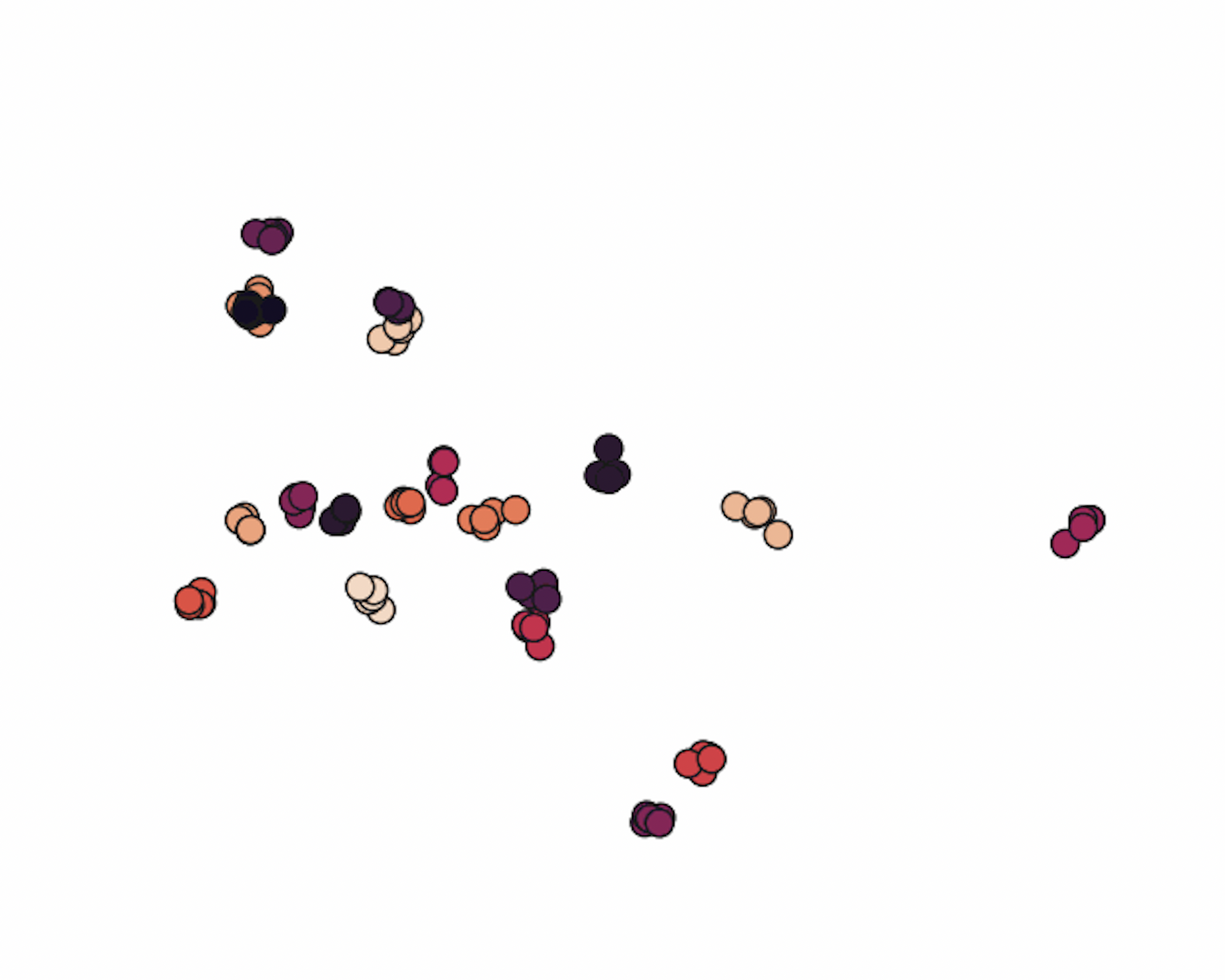}
    \caption{$t=0.1$}
  \end{subfigure}
  \begin{subfigure}[b]{0.24\textwidth}
  \centering
    \includegraphics[width=\textwidth]{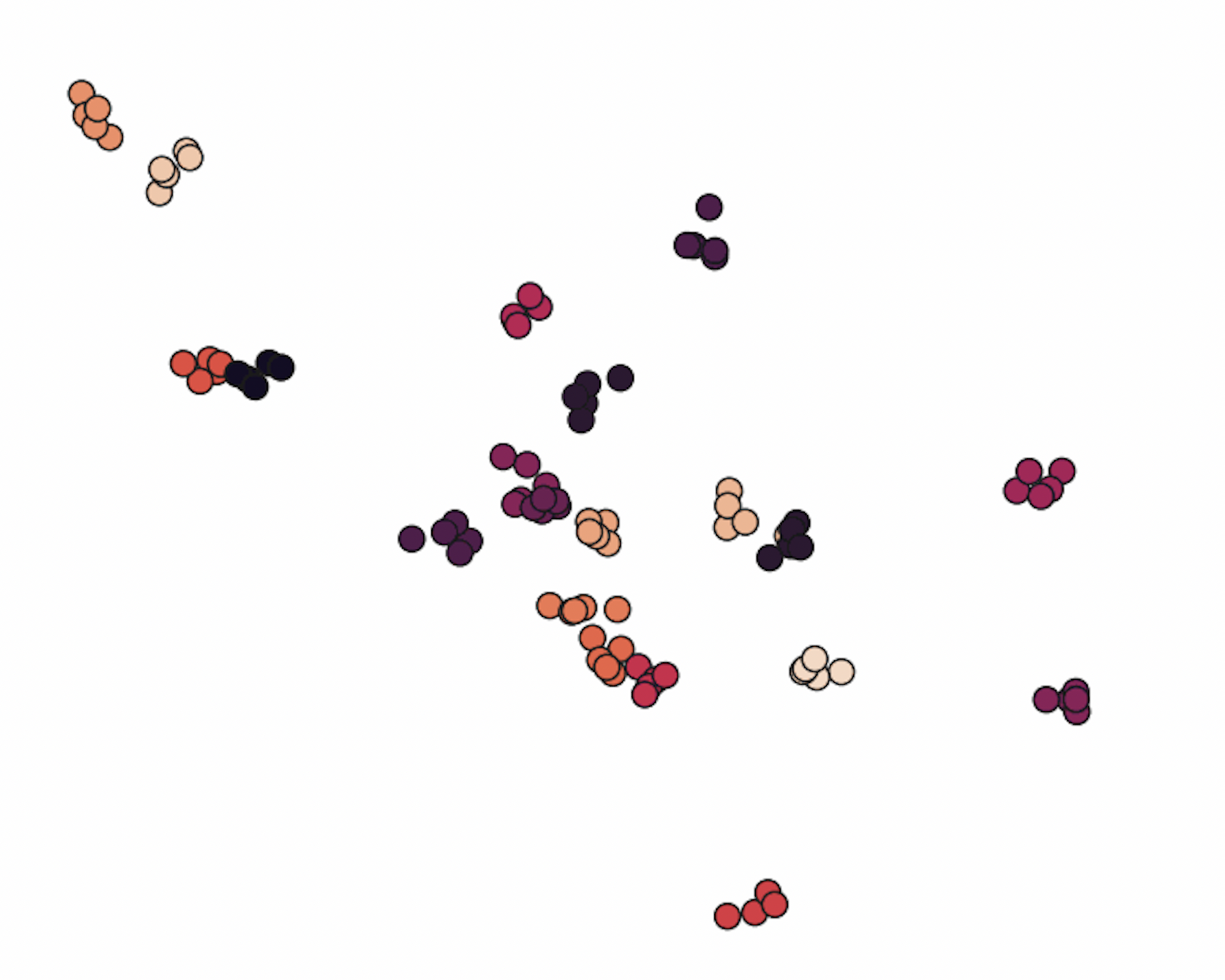}
    \caption{$t=0.4$}
  \end{subfigure}
  \begin{subfigure}[b]{0.24\textwidth}
  \centering
    \includegraphics[width=\textwidth]{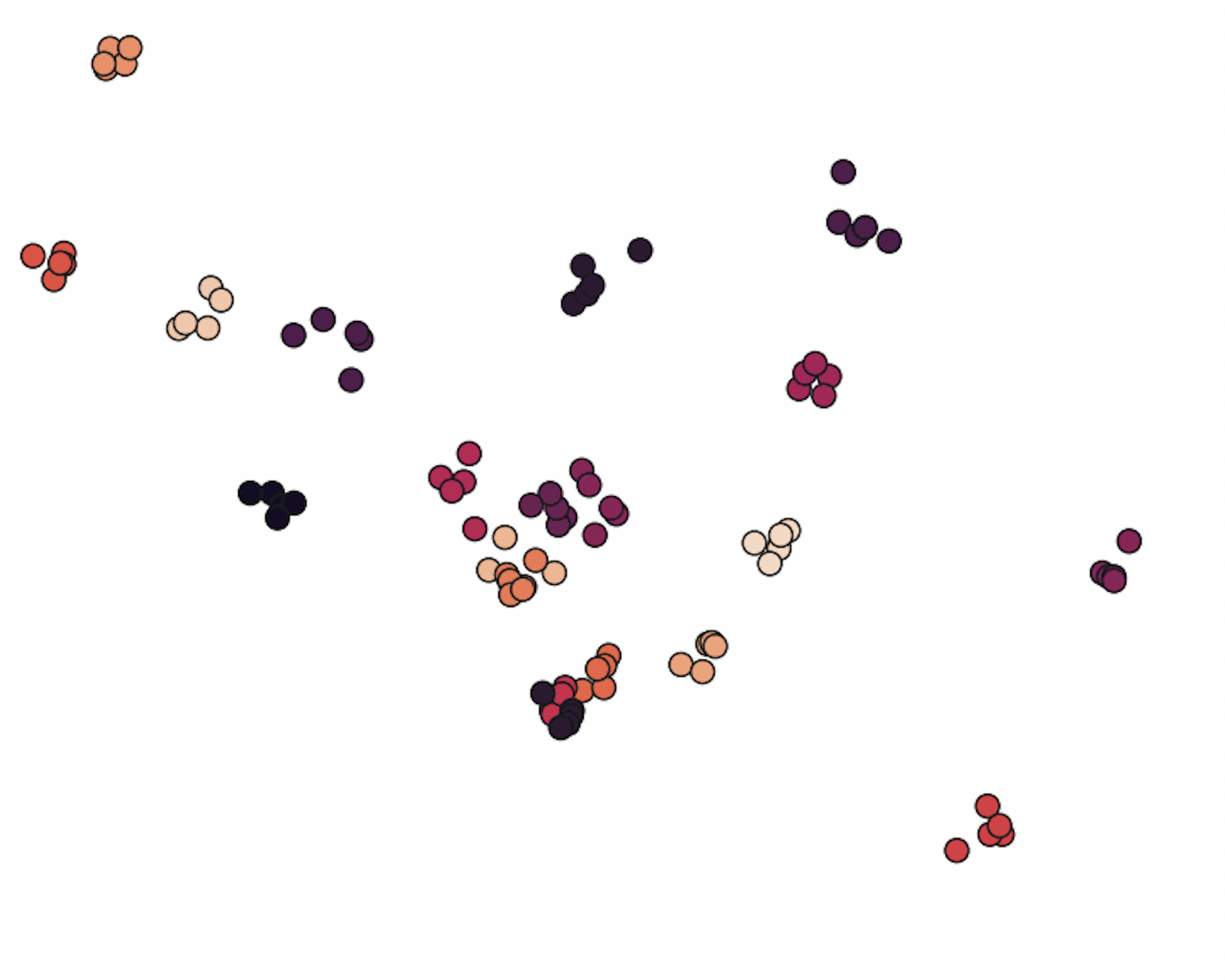}
    \caption{$t=0.7$}
  \end{subfigure}
  \begin{subfigure}[b]{0.24\textwidth}
  \centering
    \includegraphics[width=\textwidth]{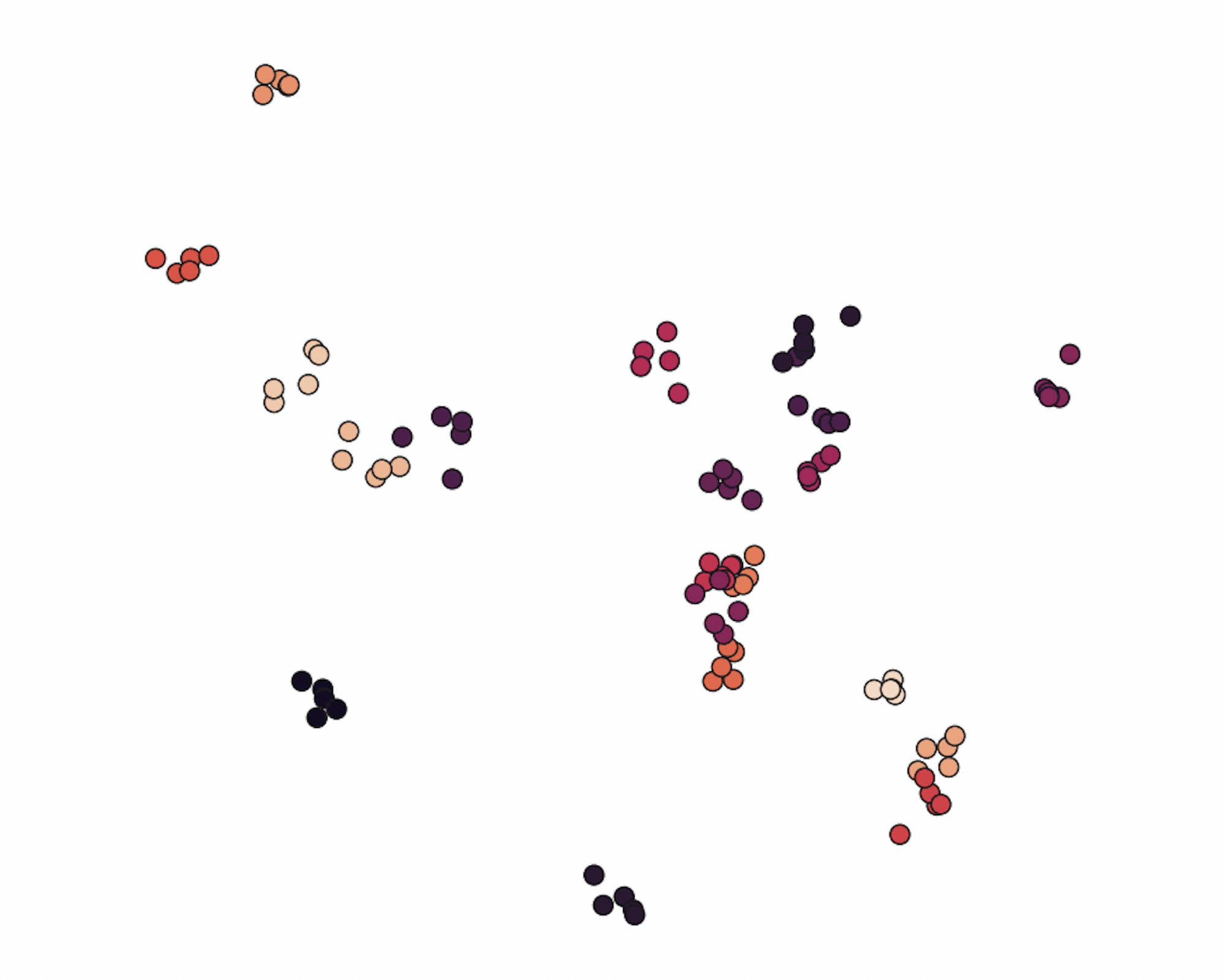}
    \caption{$t=0.9$}
  \end{subfigure}
  \hfill
\rotatebox{90}{\hspace{0.25cm}\textbf{Learned Model}}
\begin{subfigure}[b]{0.24\textwidth}
\centering
    \includegraphics[width=\textwidth]{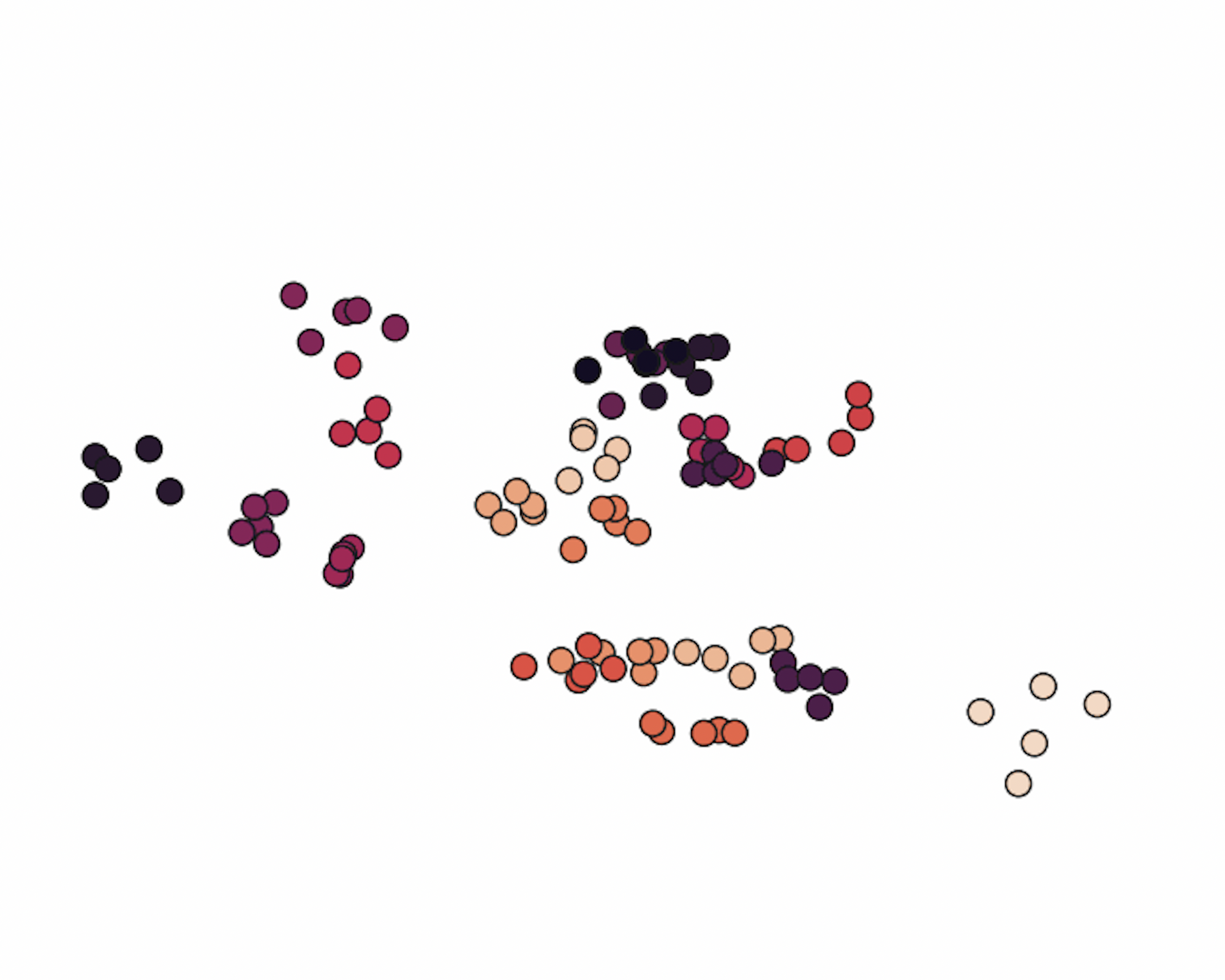}
    \caption{$t=0.1$}
  \end{subfigure}
  \begin{subfigure}[b]{0.24\textwidth}
  \centering
    \includegraphics[width=\textwidth]{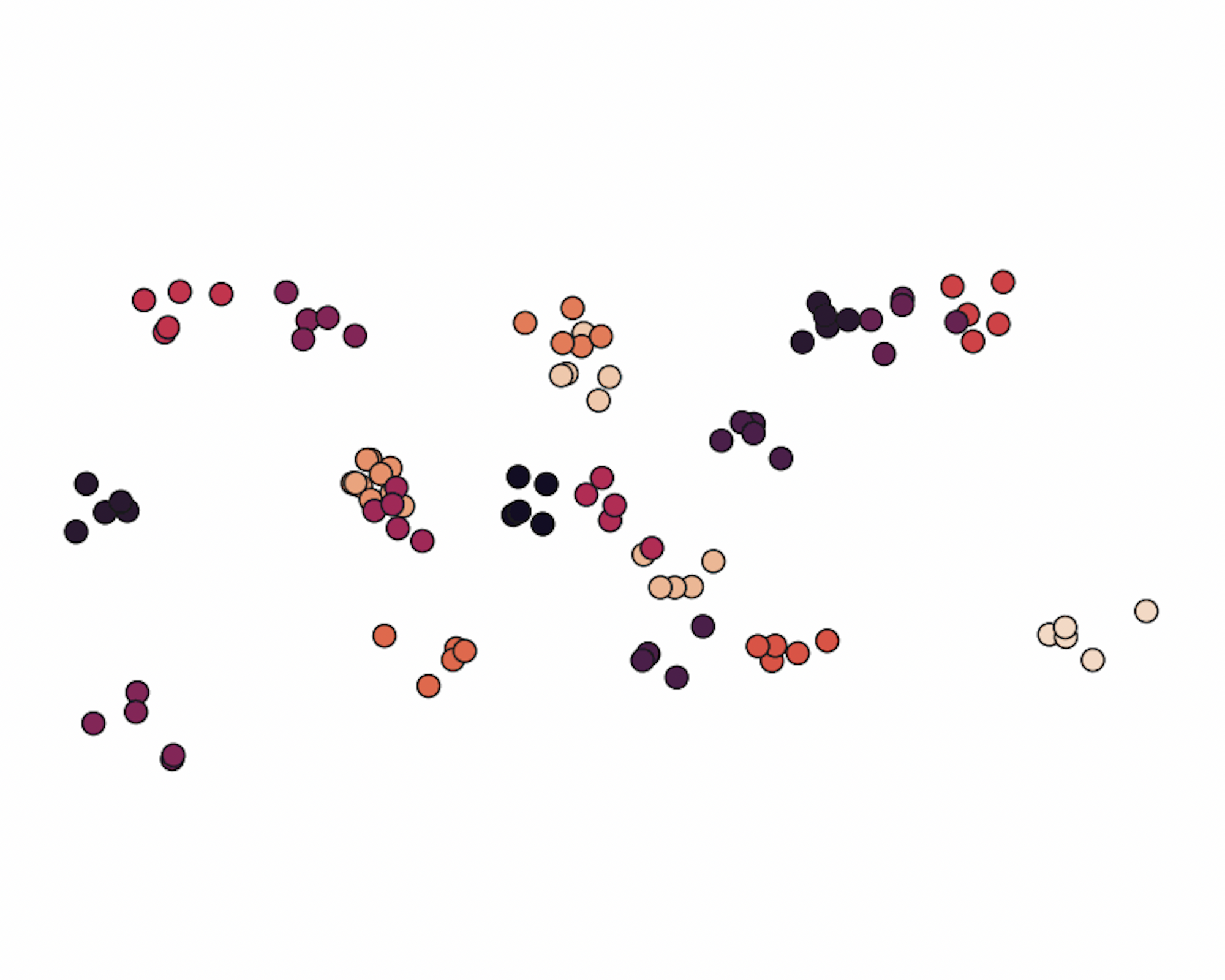}
    \caption{$t=0.4$}
  \end{subfigure}
  \begin{subfigure}[b]{0.24\textwidth}
  \centering
    \includegraphics[width=\textwidth]{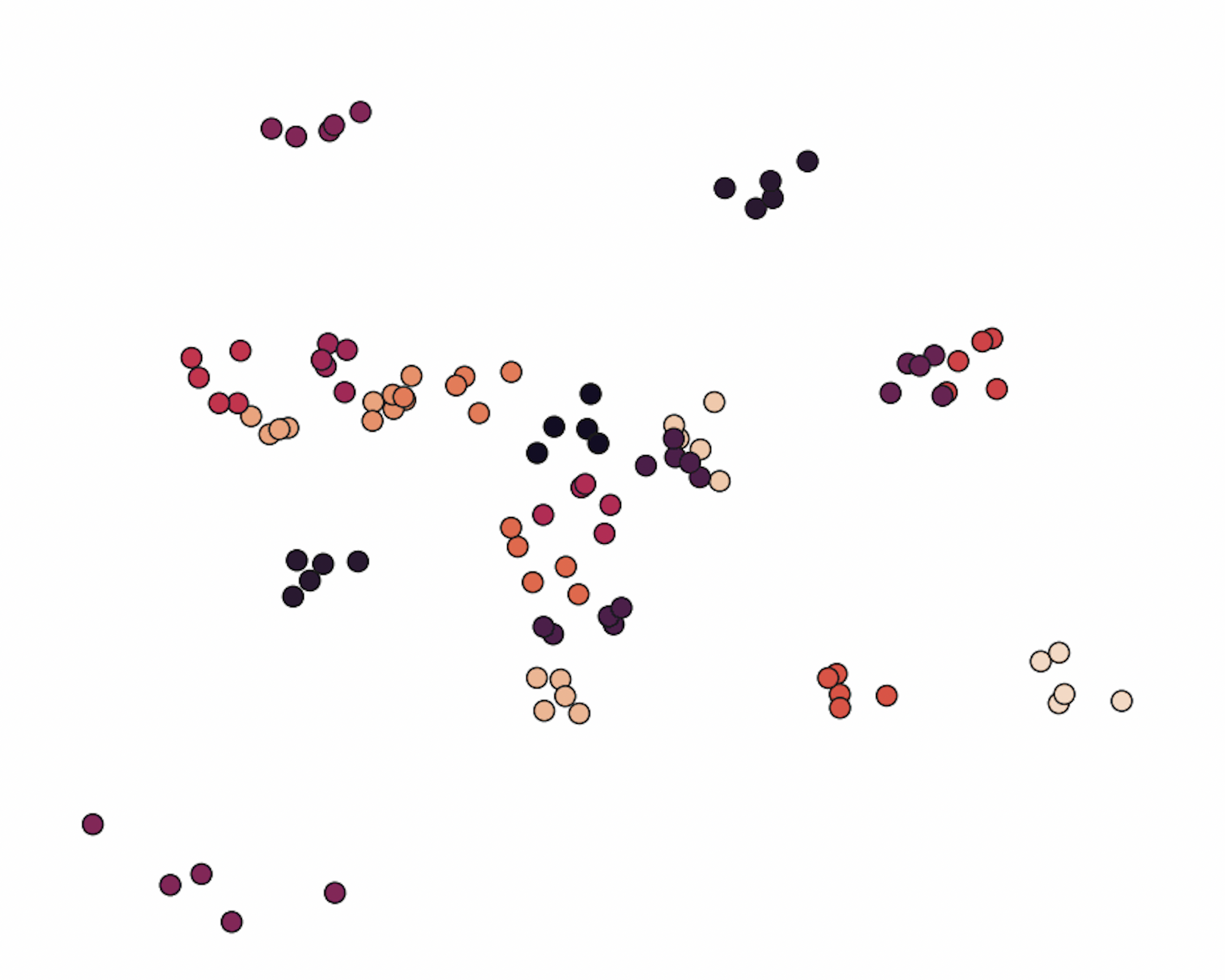}
    \caption{$t=0.7$}
  \end{subfigure}
  \begin{subfigure}[b]{0.24\textwidth}
  \centering
    \includegraphics[width=\textwidth]{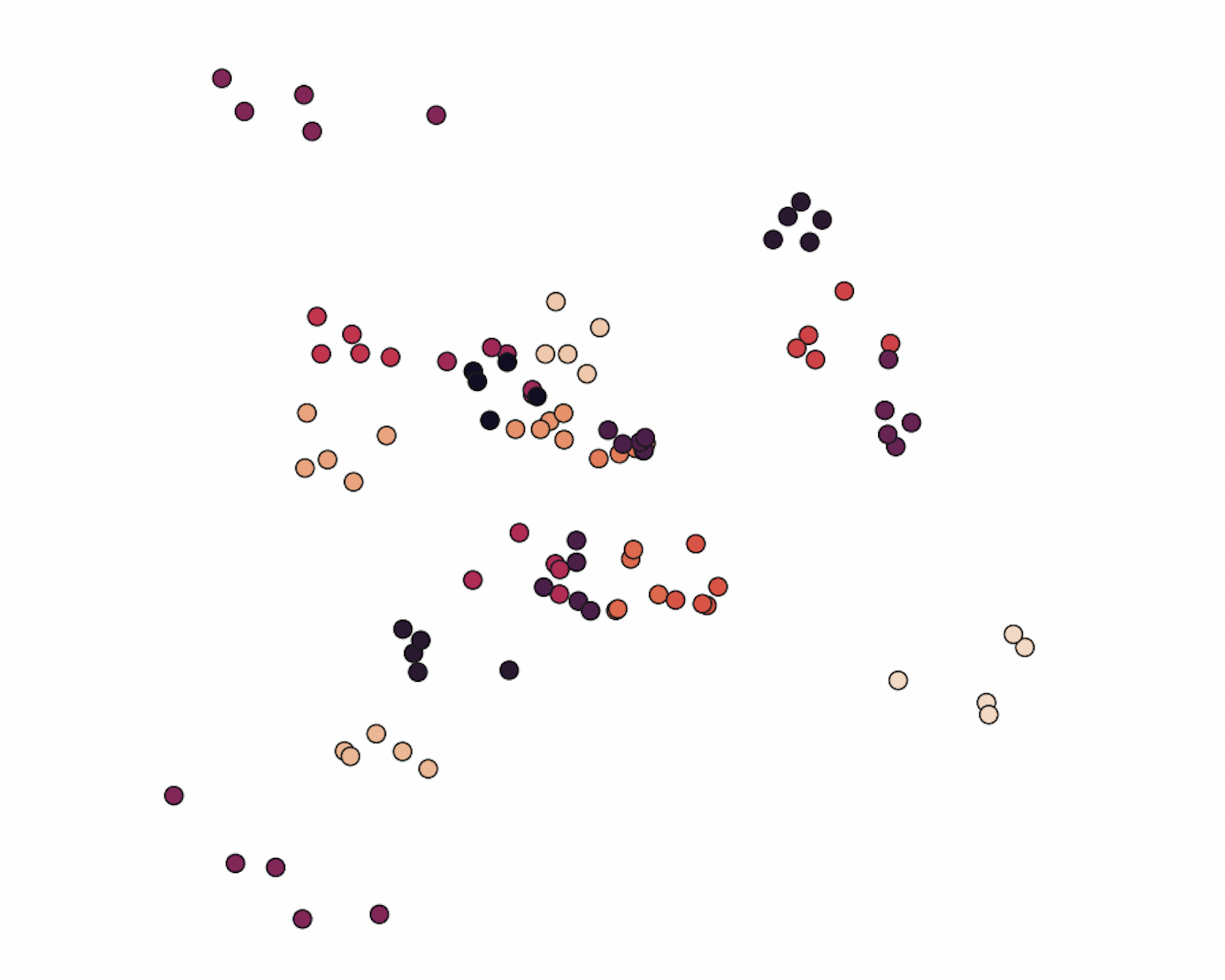}
    \caption{$t=0.9$}
  \end{subfigure}
  \caption{Comparisons of the ground truth and learned representations in two-dimensional space.}
  \label{fig:visualization}
\end{figure}

\section{Conclusion and Limitations}
In this paper, we have proposed a novel continuous-time dynamic network embedding approach, namely, Piecewise Velocity Model (\textsc{PiVeM}). Its performance has been examined in various experiments, such as network reconstruction and completion tasks over various networks with respect to the very well-known baselines. We demonstrated that it could accurately embed the nodes into a two-dimensional space. Therefore, it can be directly utilized to animate the learned node embeddings, and it can be beneficial in extracting the networks' underlying characteristics, foreseeing how they will evolve through time. We theoretically showed that the model could scale up to large networks.

Although our model successfully learns continuous-time representations, it is unable to capture temporal patterns in the network in terms of the GP structure. Therefore, we are planning to employ different kernels instead of RBF, such as periodic kernels in the prior. The optimization strategies of the proposed method might be improved to escape from local minima. As a possible future direction, the algorithm can also be adapted for other graph types, such as directed and multi-layer networks.




\section*{Acknowledgements}
We would like to thank the reviewers for the constructive feedback and their insightful comments. We would also like to thank Sune Lehmann, Louis Boucherie, Lasse Mohr Mikkelsen, Simon Tommerup, August Semrau Andersen and William Diedrichsen Marstrand for the valuable and fruitful discussions. We gratefully acknowledge the Independent Research Fund Denmark for supporting this work [grant number: 0136-00315B].

\bibliographystyle{unsrtnat}
\bibliography{reference}

\appendix
\section{Appendix}
In the main paper, we could not clarify every aspect of the model due to the restrictions on the number of pages. Hence, we will provide more detailed explanations here about the experiments, the computational problems, the proofs of the theoretical arguments, and possible extensions of the model toward the bipartite, weighted, and directed networks.

\subsection{Experiments}
We consider all networks used in the experiments as undirected, and the event times of links are scaled to the interval $[0,1]$ for the consistency of experiments. We use the finest resolution level of the given input timestamps, such as seconds and milliseconds. We provide a brief summary of the networks below, and various statistics are reported in Table \ref{tab:dataset_statistics}. The visualization of the event distributions of the networks through time is depicted in Figure \ref{fig:event_distribution}.

\begin{figure}[!h]
\begin{subfigure}[b]{0.49\textwidth}
    \includegraphics[width=\textwidth]{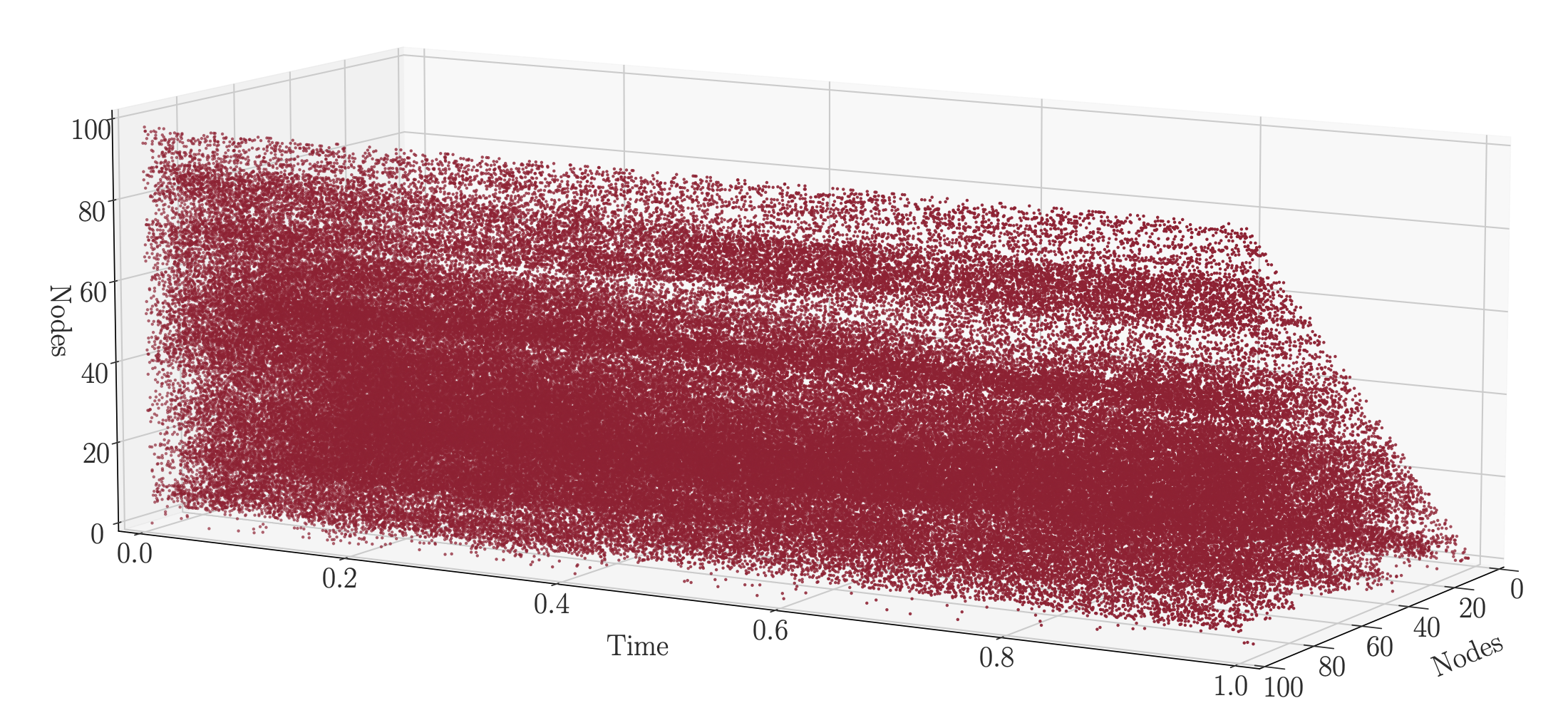}
    \caption{\textsl{Synthetic($\pi$)}}
  \end{subfigure}
  \hfill
  \begin{subfigure}[b]{0.49\textwidth}
    \includegraphics[width=\textwidth]{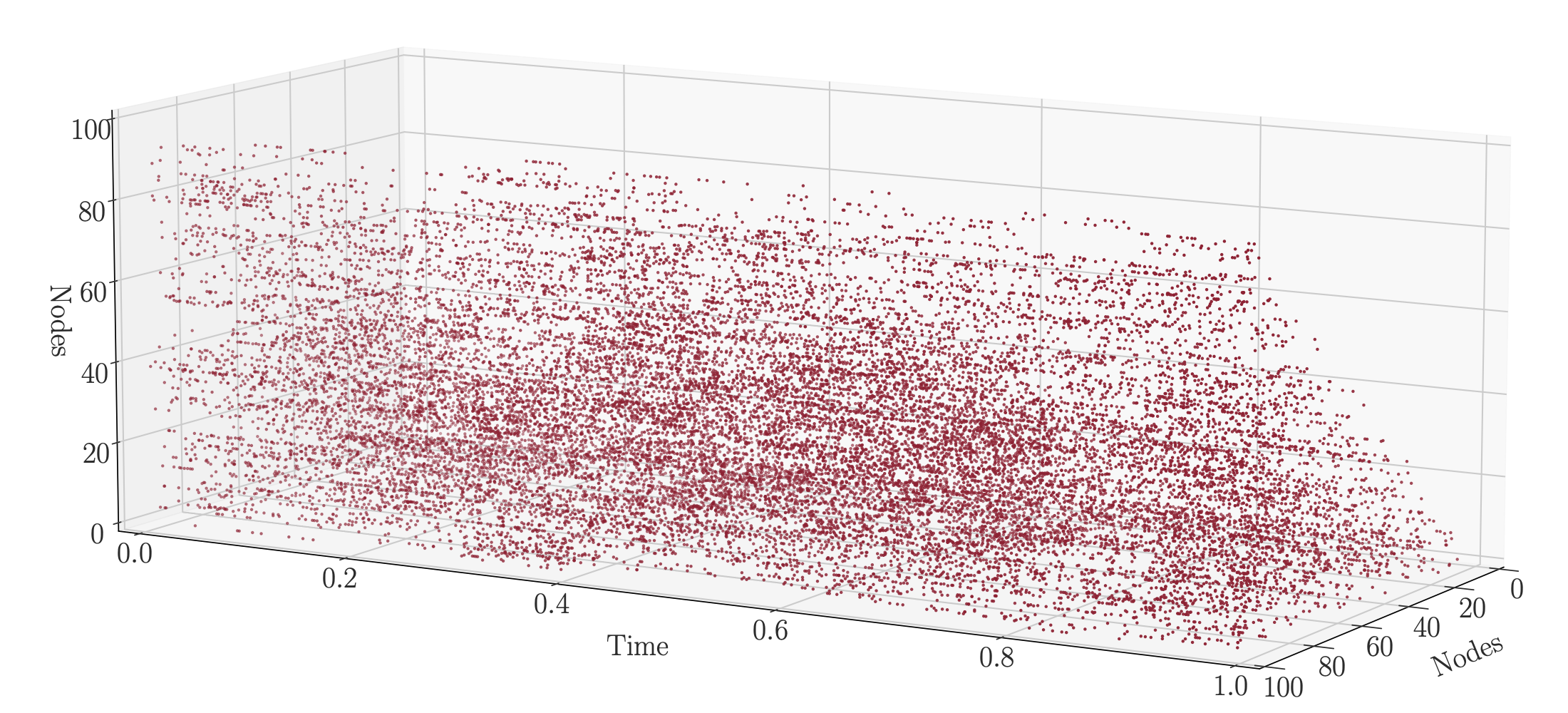}
    \caption{\textsl{Synthetic($\mu$)}}
  \end{subfigure}
  \begin{subfigure}[b]{0.49\textwidth}
    \includegraphics[width=\textwidth]{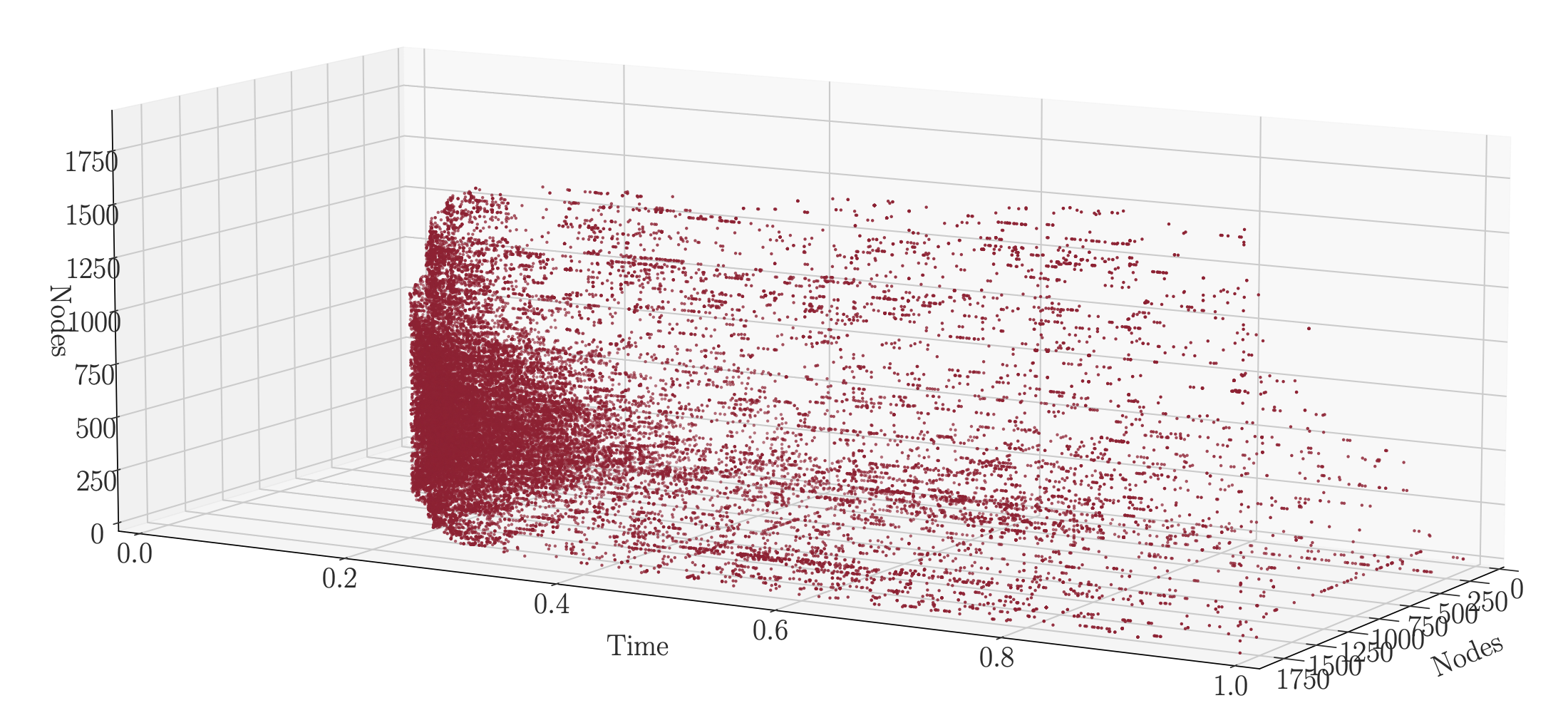}
    \caption{\textsl{College}}
  \end{subfigure}
  \hfill
  \begin{subfigure}[b]{0.49\textwidth}
    \includegraphics[width=\textwidth]{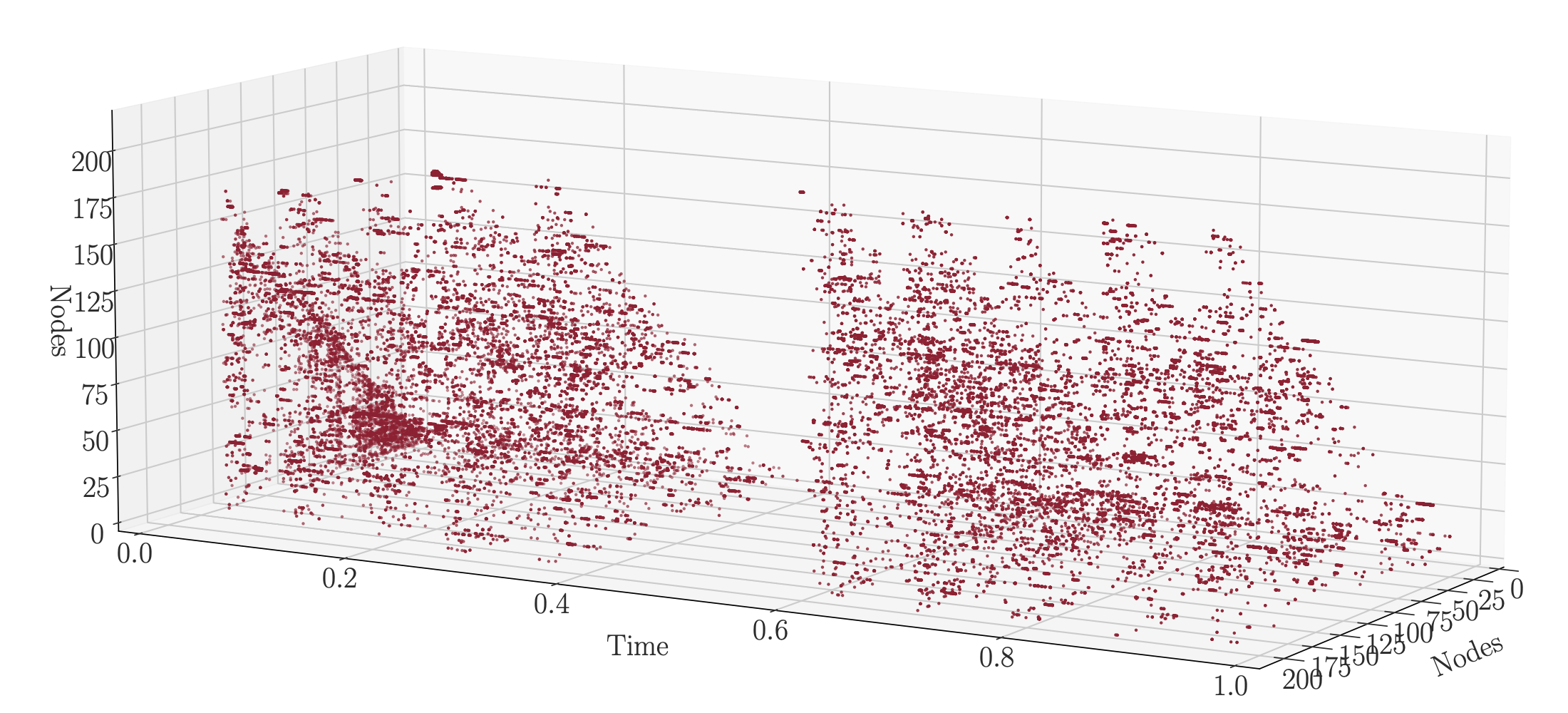}
    \caption{\textsl{Contacts}}
  \end{subfigure}
  \begin{subfigure}[b]{0.49\textwidth}
    \includegraphics[width=\textwidth]{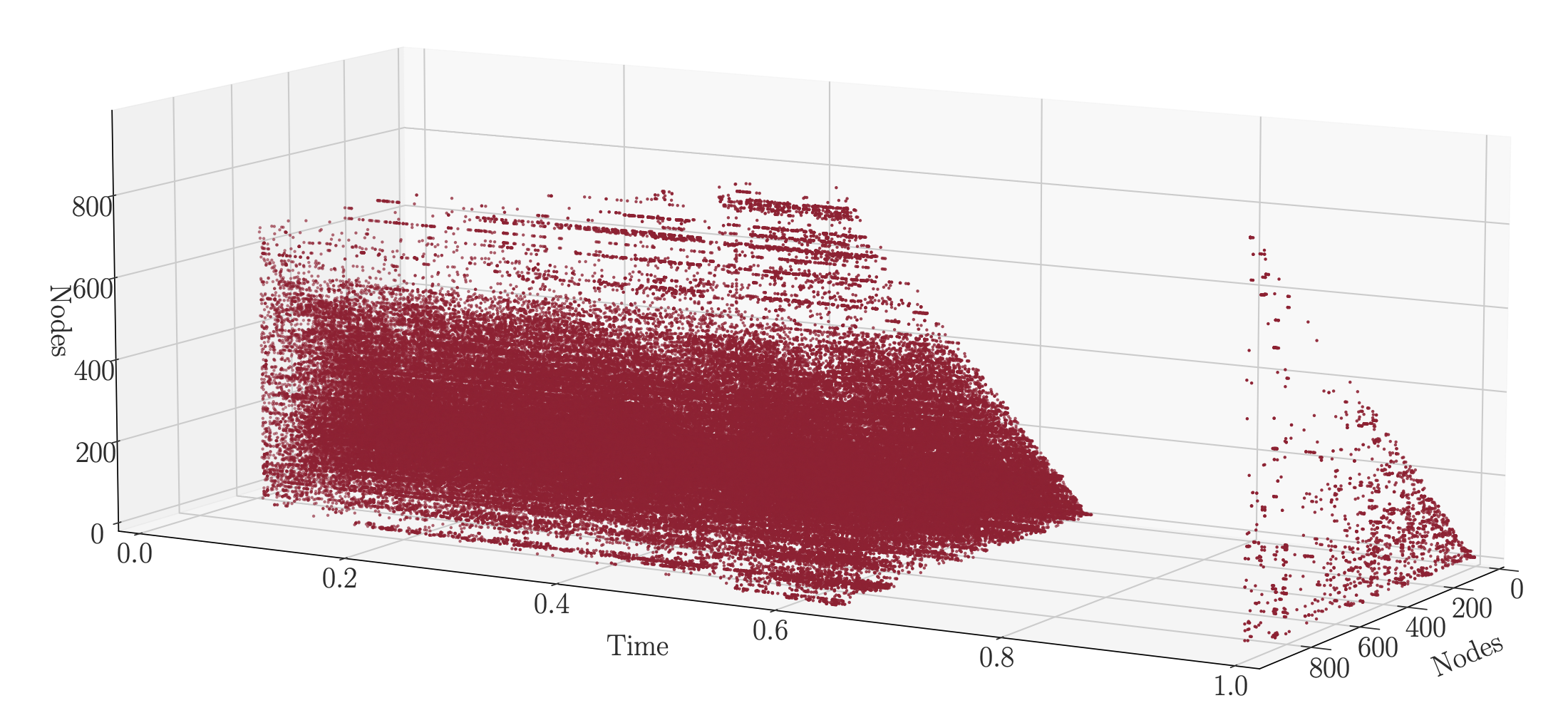}
    \caption{\textsl{Email}}
  \end{subfigure}
  \hfill
  \begin{subfigure}[b]{0.49\textwidth}
    \includegraphics[width=\textwidth]{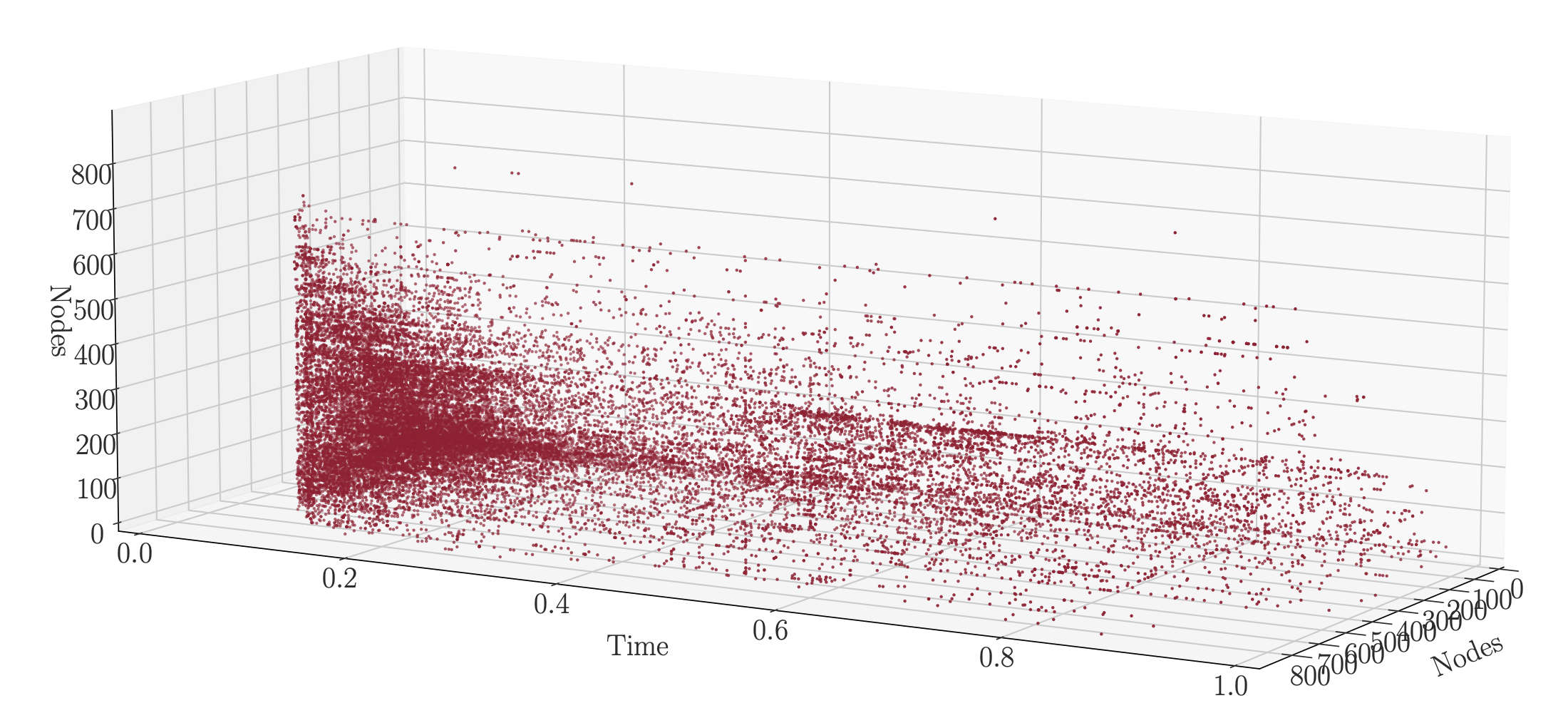}
    \caption{\textsl{Forum}}
  \end{subfigure}
  \centering
  \begin{subfigure}[b]{0.49\textwidth}
  \centering
    \includegraphics[width=\textwidth]{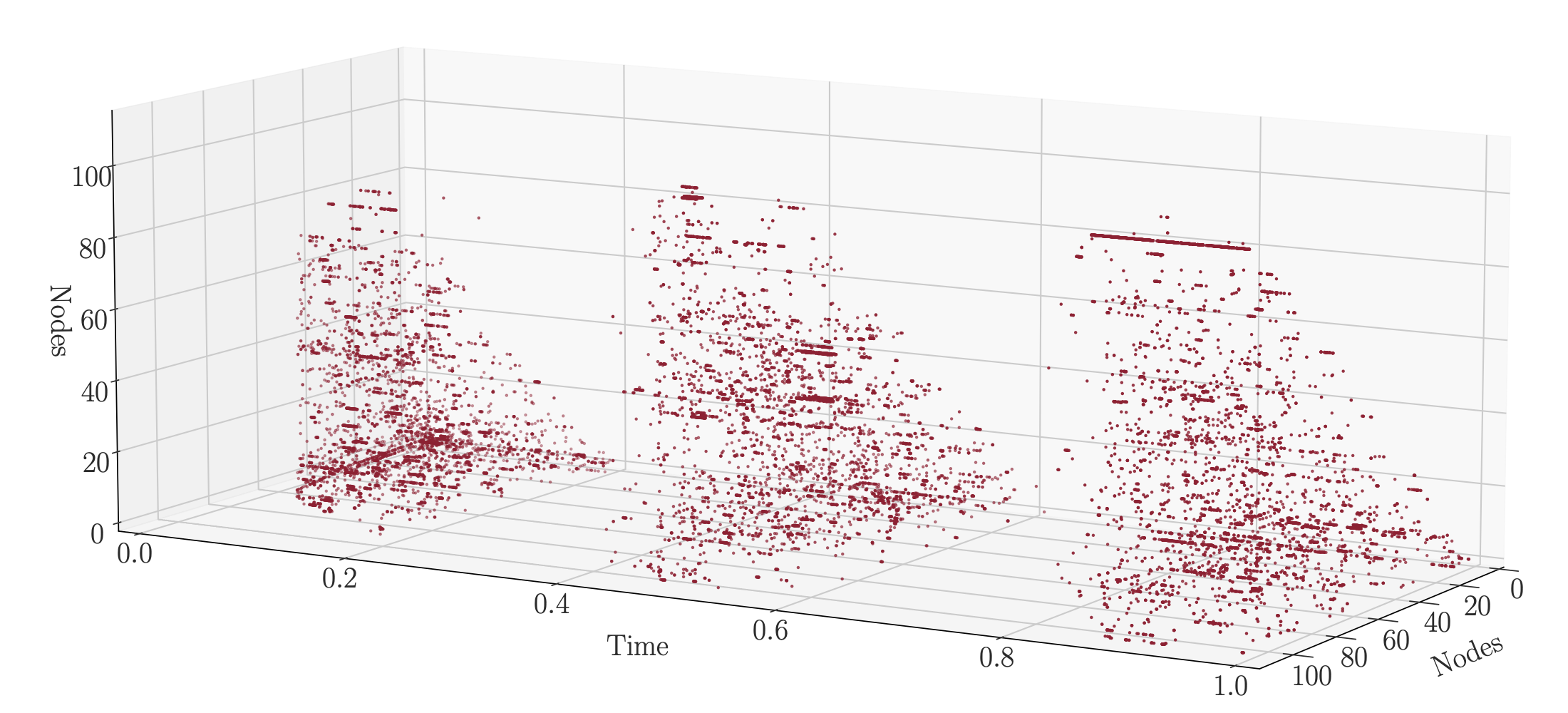}
    \caption{\textsl{Hypertext}}
  \end{subfigure}
  \caption{Distribution of the links through time.}
  \label{fig:event_distribution}
\end{figure}

\newpage
\textbf{Synthetic datasets.} We generate two artificial networks in order to evaluate the behavior of the models in controlled experimental settings. \textbf{(i)} \textsl{Synthetic($\pi$)} is sampled from the prior distribution stated in Subsection \ref{subsec:problem_formulation}. The hyper-parameters, $\bm{\beta}$, $K$ and $B$ are set to $\bm{0}$, $20$ and $100$, respectively. \textbf{(ii)} \textsl{Synthetic($\mu$)} is constructed based on the temporal block structures. The timeline is divided into $10$ intervals, and the node set is split into $20$ groups. The links within each group are sampled from the Poisson distribution with the constant intensity of $5$. 

\textbf{Real Networks.} The \textbf{(iii)} \textsl{Hypertext} network \cite{dataset_hypertext} was built on the radio badge records showing the interactions of the conference attendees for $2.5$ days, and each event time indicates $20$ seconds of active contact. Similarly, \textbf{(iv)} the \textsl{Contacts} network \cite{dataset_contacts} was generated concerning the interactions of the individuals in an office environment. \textbf{(v)} \textsl{Forum} \cite{dataset_fb_forum} is comprised of the activity data of university students on an online social forum system. The \textbf{(vi)} \textsl{CollegeMsg} network \cite{dataset_college} indicates the private messages among the students on an online social platform. Finally, \textbf{(vii)} \textsl{Eu-Email} \cite{dataset_email-eu} was constructed based on the exchanged e-mail information among the members of European research institutions.

\begin{table}
\centering
\caption{Statistics of networks. $|\mathcal{V}|$: Number of nodes, $M$: Number of pairs having at least one link, $|\mathcal{E}|$: Total number of links, $\left|\mathcal{E}_{ij}\right|_{max}$: Max. number of links a pair of nodes has.}
\label{tab:dataset_statistics}
\begin{tabular}{@{}rcccc@{}}
\toprule
 & $\left|\mathcal{V}\right|$ & $M$ & $\left|\mathcal{E}\right|$ & $\left|\mathcal{E}_{ij}\right|_{max}$ \\ \midrule
\textsl{Synthetic($\mu$)} & 100 & 4,889 & 180,658 & 124 \\
\textsl{Synthetic($\pi$)} & 100 & 3,009 & 22,477 & 32 \\
\textsl{College} & 1,899 & 13,838 & 59,835 & 184 \\
\textsl{Contacts} & 217 & 4,274 & 78,249 & 1,302 \\
\textsl{Hypertext} & 113 & 2,196 & 20,818 & 1,281 \\
\textsl{Email} & 986 & 16,064 & 332,334 & 4,992 \\
\textsl{Forum} & 899 & 7,036 & 33,686 & 171 \\\bottomrule
\end{tabular}%
\end{table}

\subsection{Computational Problems, Model Complexity and Optimization Strategy}
\textbf{Log-likelihood function.} Note that we need to evaluate the log-intensity term in Equation \ref{eq:objective_pivem} for each $(i,j)\in\mathcal{V}^2$ $(i<j)$ and event time $e_{ij}\in\mathcal{E}_{ij}$. Therefore, the computational cost required for the whole network is bounded by $\mathcal{O}\left(|\mathcal{V}|^2|\mathcal{E}|\right)$. However, we can alleviate it by computing certain coefficients at the beginning of the optimization process. If we define $\alpha_{ij} := (e_{ij}-\Delta_B(b^*-1))$, then it can be seen that the sum over the set of all events, $\mathcal{E}_{ij}^{b^*}$, lying inside $b^*$'th bin (i.e., the events in $[\Delta_B(b^*-1), \ \ \Delta_Bb^*)$ can be rewritten by:
\begin{align*}
\sum_{e_{ij}\in\mathcal{E}_{ij}^{b^*}}\!\!\!\!\log\lambda_{ij}(e_{ij}) \!&=\!\!\!\!\! \sum_{e_{ij}\in\mathcal{E}_{ij}}\!\!\!\!\Big(\beta_i \!+\! \beta_j - ||\mathbf{r}_{i}(e_{ij}) - \mathbf{r}_{j}(e_{ij})||^2 \Big)
\\
&=\!\!\!\!\!\sum_{e_{ij}\in\mathcal{E}_{ij}^{b^*}}\!\!\!\!\Big(\beta_i \!+\! \beta_j\Big) \!+\!\!\! \sum_{e_{ij}\in\mathcal{E}_{ij}}\Big|\Big| \Delta\mathbf{x}^{(0)}_{ij} + \Delta_B\!\!\sum_{b=1}^{b^*-1}\Delta\mathbf{v}_{ij}^{(b)}+\Delta\mathbf{v}_{ij}^{(b^*)}(e_{ij}\!-\!\Delta_B(b^*\!-\!1)) \Big|\Big|^2
\\
&=\!\!\!\!\!\sum_{e_{ij}\in\mathcal{E}_{ij}^{b^*}}\!\!\!\!\Big(\beta_i \!+\! \beta_j\Big) \!+\!\!\! \sum_{e_{ij}\in\mathcal{E}_{ij}}\Bigg(\alpha_{ij}^2\Big|\Big|\Delta\mathbf{v}_{ij}^{(b^*)}\Big|\Big|^2 + \Big( \Delta\mathbf{x}^{(0)}_{ij} + \Delta_B\!\! \sum_{b=1}^{b^*-1}\Delta\mathbf{v}_{ij}^{(b)} \Big)^2 
\\
&\qquad\qquad\qquad\qquad\qquad\qquad + 2\alpha_{ij}\Big\langle\Delta\mathbf{x}^{(0)}_{ij} + \Delta_B\sum_{b=1}^{b^*-1}\Delta\mathbf{v}_{ij}^{(b)},\Delta\mathbf{v}_{ij}^{(b^*)}\Big\rangle \Bigg) 
\\
&=\left|\mathcal{E}_{ij}^{b^*}\right|(\beta_i \!+\! \beta_j) \!+\alpha_{2}\Big|\Big|\Delta\mathbf{v}_{ij}^{(b^*)}\Big|\Big|^2 + \sum_{e_{ij}\in\mathcal{E}_{ij}}\!\!\!\!\Big( \Delta\mathbf{x}^{(0)}_{ij} + \Delta_B\!\! \sum_{b=1}^{b^*-1}\Delta\mathbf{v}_{ij}^{(b)} \Big)^2 
\\
&\qquad\qquad\qquad\qquad\qquad\qquad + 2\alpha_{1}\Big\langle\Delta\mathbf{x}^{(0)}_{ij} + \Delta_B\sum_{b=1}^{b^*-1}\Delta\mathbf{v}_{ij}^{(b)},\Delta\mathbf{v}_{ij}^{(b^*)}\Big\rangle
\end{align*}
where $\alpha_{1}^{(b^*)} :=\sum_{e_{ij}\in\mathcal{E}_{ij}}\alpha_{ij}$ and $\alpha_{2}^{(b^*)} :=\sum_{e_{ij}\in\mathcal{E}_{ij}}\alpha_{ij}^2$. We can follow the same strategy for each bin, then the computational complexity can be reduced to $\mathcal{O}\left(|\mathcal{V}|^2B\right)$ 

Since we use the squared Euclidean distance in the integral term of our objective, we can derive the exact formula for the computation (please see Lemma \ref{lemma:integral} for the details). We need to evaluate it for all node pairs, so it requires at most $\mathcal{O}\left(|\mathcal{V}|^2\right)$ operations. Hence, the complexity of the log-likelihood function is $\mathcal{O}\left(|\mathcal{V}|^2B\right)$. Instead of optimizing the whole network at once, we are applying the batching strategy over the set of nodes in order to reduce the memory requirements, so we sample $\mathcal{S}$ nodes for each epoch. Hence, the overall complexity of the log-likelihood is $\mathcal{O}\left(\mathcal{S}^2B\mathcal{I}\right)$ where $\mathcal{I}$ is the number of epochs.

\textbf{Computation of the prior function.} The covariance matrix, $\mathbf{\Sigma}\in\mathbb{R}^{BND\times BND}$, of the prior is defined by $\mathbf{\Sigma}:=\lambda^2\left(\sigma_{\mathbf{\Sigma}}^2\mathbf{I} + \mathbf{K}\right)^{-1}$ with a scaling factor $\lambda \in \mathbb{R}$ and a noise variance $\sigma_{\mathbf{\Sigma}}^2\in \mathbb{R}^{+}$. The multivariate normal distribution is parametrized with a noise term $\sigma_{\mathbf{\Sigma}}^2\mathbf{I}$ and a matrix $\mathbf{K}\in\mathbb{R}^{BND \times BND}$ having a low-rank form. In other words, $\mathbf{K}$ is written by $\mathbf{B}\otimes\mathbf{C}\otimes\mathbf{D}$ where $\mathbf{B}$ is block diagonal matrix combined with parameter $c_{\mathbf{x}^0}$ and the RBF kernel $\exp\left( -{(c_{b} - c_{b^{'}})^2} / {\sigma_{\mathbf{B}}^2}\right) \in \mathbb{R}^{B\times B}$ for $c_b := (t_{b-1} - t_{b})/2 $. The matrix aiming for capturing the node interactions, $\mathbf{C}:= \mathbf{Q}\mathbf{Q}^{\top}\in\mathbb{R}^{N\times N}$ is defined with a low-rank matrix $\mathbf{Q}\in\mathbb{R}^{N \times k}$ whose rows equal to $1$ $(k \ll N)$, and we set $\mathbf{D} := \mathbf{I}\ \mathbf{I}^{\top}\in\mathbb{R}^{D\times D}$. By considering the Cholesky decomposition \cite{aggarwal2020linear} of $\mathbf{B}:= \mathbf{L}\mathbf{L}^{\top}$ since $\mathbf{B}$ is symmetric positive semi-definite, we can factorize $\mathbf{K}:= \mathbf{K}_{f}\mathbf{K}_f^{\top}$ where $\mathbf{K}_{f}:=\mathbf{L}\otimes \mathbf{Q} \otimes \mathbf{I}$.

Note that the precision matrix, $\mathbf{\Sigma}^{-1}$, can be written by using the \textit{Woodbury matrix identity} \cite{aggarwal2020linear} as follows:
\begin{align*}
    \mathbf{\Sigma}^{-1} = \lambda^{-2}\left(\sigma_{\mathbf{\Sigma}}^2\mathbf{I} + \mathbf{K}_f\mathbf{K}_f^{\top}\right)^{-1} = \lambda^{-2}\left({\sigma_{\mathbf{\Sigma}}^2}^{-1}\mathbf{I} - {\sigma_{\mathbf{\Sigma}}^2}^{-1}\mathbf{K}_f\mathbf{R}^{-1}\mathbf{K}_f^{\top}{\sigma_{\mathbf{\Sigma}}^2}^{-1}\right)
\end{align*}
where the capacitance matrix $\mathbf{R} := \mathbf{I}_{BKD} + {\sigma_{\mathbf{\Sigma}}^2}^{-1}\mathbf{K}_f^{\top}\mathbf{K}_f$.

The log-determinant of $\lambda^2\mathbf{\Sigma}$ can be also simplified by applying \textit{Matrix Determinant lemma} \cite{aggarwal2020linear}:
\begin{align*}
\log({det(\mathbf{\Sigma} )}) &= (BND)\log\big( \lambda^2 \big) + \log\big(det\big(\sigma_{\mathbf{\Sigma}}^2\mathbf{I}_{BND}+ \mathbf{K}_f\mathbf{K}_f^{\top}\big) \big) 
\\
&= (BND)\log\big( \lambda^2 \big) + \log\big(det\big( \mathbf{I}_{BKD} + {\sigma_{\mathbf{\Sigma}}^{2}}^{-1}\mathbf{K}_f^{\top}\mathbf{K}_f \big)\big) + (BND)\log\big(\sigma_{\mathbf{\Sigma}}^2 \big) 
\\
&= (BND)\big( \log(\lambda^2) + \log\big(\sigma_{\mathbf{\Sigma}}^2 \big) \big) + \log(det(\mathbf{R}))
\end{align*}

Note that the most cumbersome points in the computation of the prior are the calculations of the inverse and determinant of the terms and some matrix multiplication operations. Since $R$ is a matrix of size $BKD\times BKD$, its inverse and determinant can be found in at most $\mathcal{O}(B^3K^3D^3)$ operations. We also need the term, $\mathbf{K}_f\mathbf{R}^{-1}\mathbf{R}$, which can also be computed in $\mathcal{O}(B^3D^3K^2|\mathcal{V}|)$ steps, so the number of operations required for the prior can be bounded by $\mathcal{O}(B^3D^3K^2|\mathcal{V}|)$. It is worth noticing that we cannot directly apply the batching strategy for the computation of the inverse of the capacitance matrix, $\mathbf{R}$. However, we can compute it once and then we can utilize it for the calculation of the log-prior for different sets of node samples, then we can recompute it when we decide to update the parameters again. To sum up, the complexity of our proposed approach is $\mathcal{O}(B\mathcal{I}\mathcal{S}^2 + B^3D^3K^2\mathcal{S}\mathcal{I})$ where $\mathcal{S}$ is the batch size and $\mathcal{I}$ is the number of epochs.

\textbf{Optimization of the proposed approach}. Our objective given in Equation \eqref{eq:objective_pivem} is not a convex function, thus the learning strategy that we follow is of great importance in order to escape from local minima of poor quality representations. We start by randomly initializing the model's hyper-parameters from $[-1, 1]$ except for the velocity tensor, which is set to $0$ at the beginning.  We adapt a sequential learning strategy for the learning of these parameters.  In other words, we first optimize the initial position and bias terms together, $\{\mathbf{x}^{(0)},\bm{\beta}\}$, for $33$ epochs; then, we include the velocity tensor, $\{\mathbf{v}\}$, into the optimization process and repeat the training for the same number of epochs. Finally, we add the prior parameters and learn all model hyper-parameters together. We have employed \textit{Adam optimizer} \cite{kingma2017adam} with a learning rate of $0.1$.

In our experiments, we set the parameter $K=25$, and bins count $B=100$ to have enough capacity to track node interactions. In order to find an optimal regularization term $\lambda$ value and to determine the influence of the prior in the objective, we apply an annealing strategy for the model. We first mask $20\%$ of the dyads during the optimization of Equation \eqref{eq:objective_pivem}. Furthermore, we train the model by starting with $\lambda=10^6$ and learn all parameters using the sequential optimization strategy. We then gradually reduce $\lambda$ to one-tenth upon optimizing all model parameters for $100$ epochs. The same procedure is repeated until $\lambda=10^{-6}$. We choose the $\lambda$ value minimizing the log-likelihood of the masked pairs (i.e., based on the predictive log-likelihood evaluated on these pairs).

The final node embeddings are then obtained by performing this annealing strategy without any mask until the found ideal $\lambda$ value. We repeat this procedure $5$ times with different initializations, and we consider the best-performing method/seed value in learning the final embeddings. The relative standard deviation of the experiments is always less than $0.5$ for all the networks, and we display the negative log-likelihood of the masked pairs for the annealing strategy with $5$ random runs in Figure \ref{fig:smiling_curves}. The blue curves demonstrate the same annealing strategy but in the opposite order. In other words, we start from a very restrictive model with low $\lambda$ value and increase $\lambda$ to have a more flexible model.

\begin{figure}[!b]
\begin{subfigure}[b]{0.45\textwidth}
    \includegraphics[width=\textwidth]{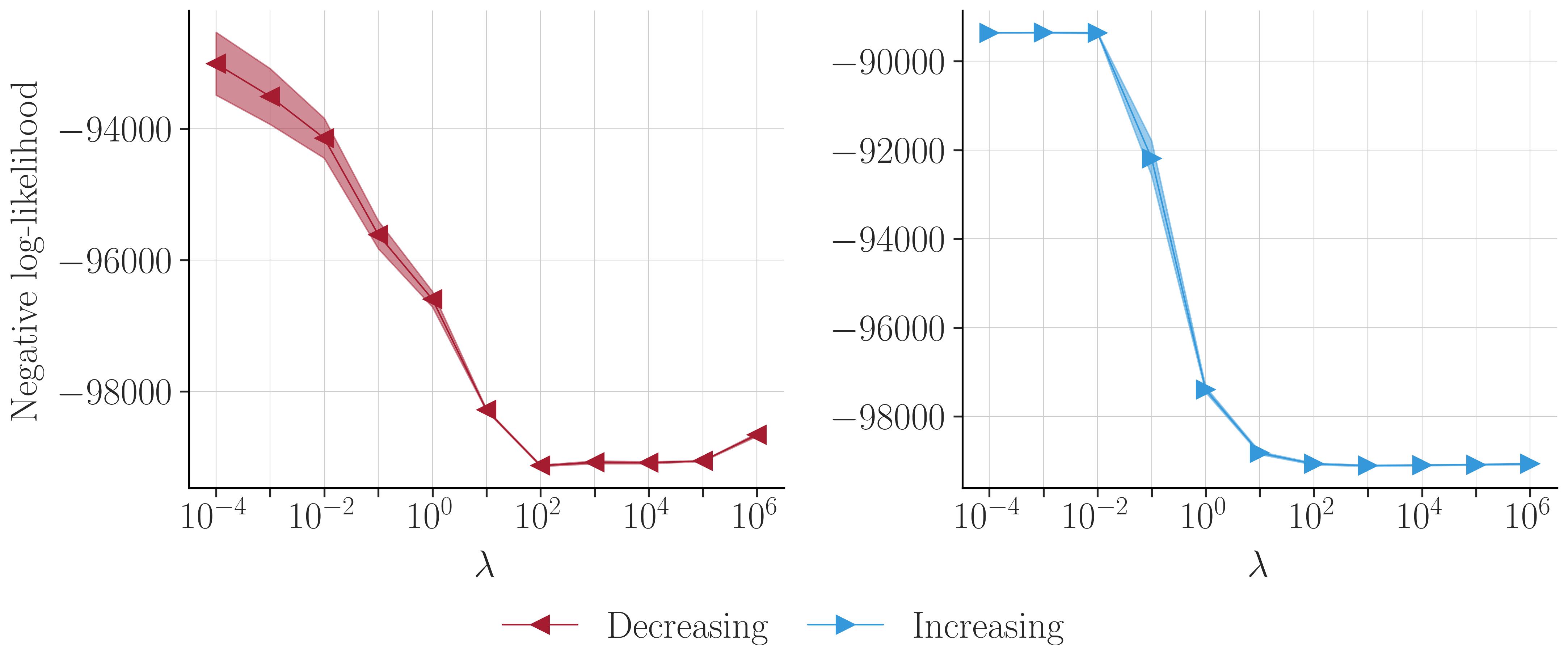}
    \caption{\textsl{Synthetic($\pi$)}}
  \end{subfigure}
  \hfill
  \begin{subfigure}[b]{0.45\textwidth}
    \includegraphics[width=\textwidth]{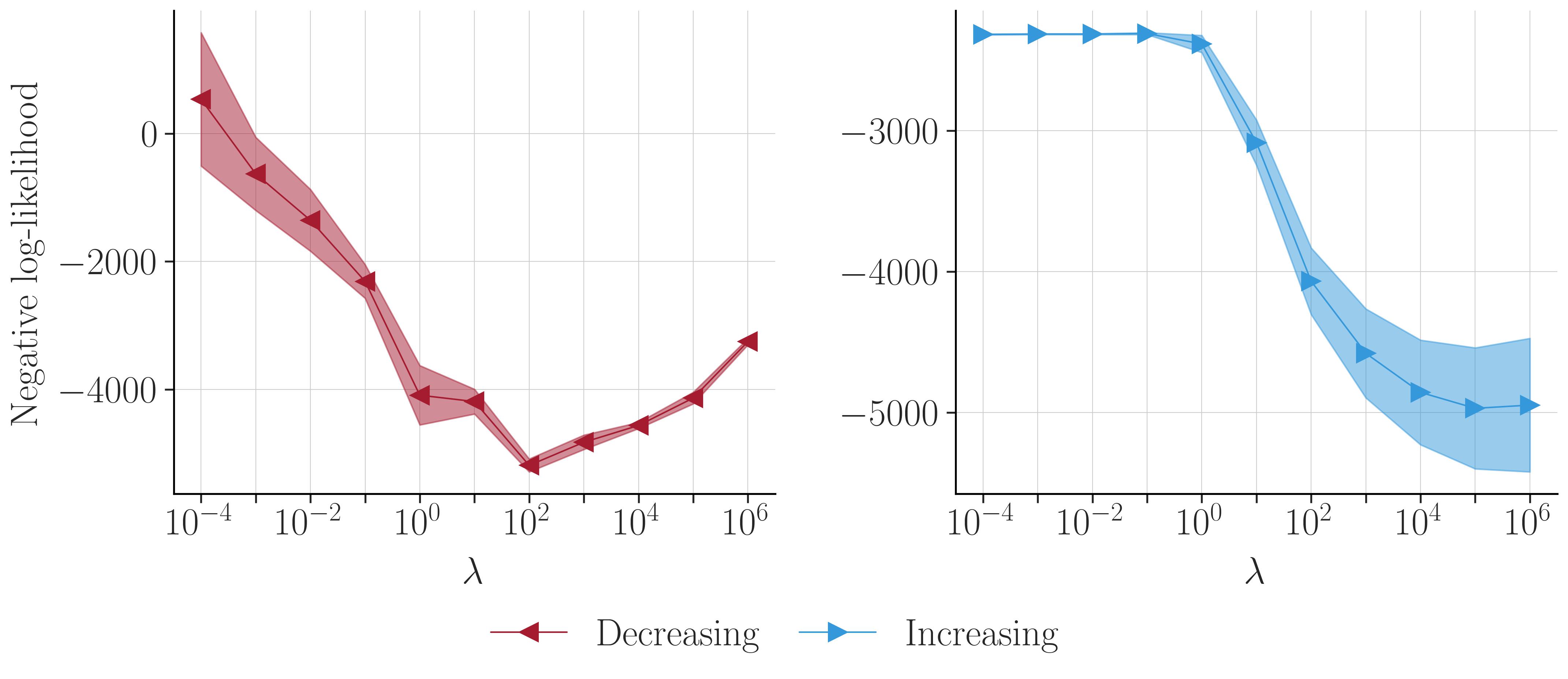}
    \caption{\textsl{Synthetic($\mu$)}}
  \end{subfigure}
  \begin{subfigure}[b]{0.45\textwidth}
    \includegraphics[width=\textwidth]{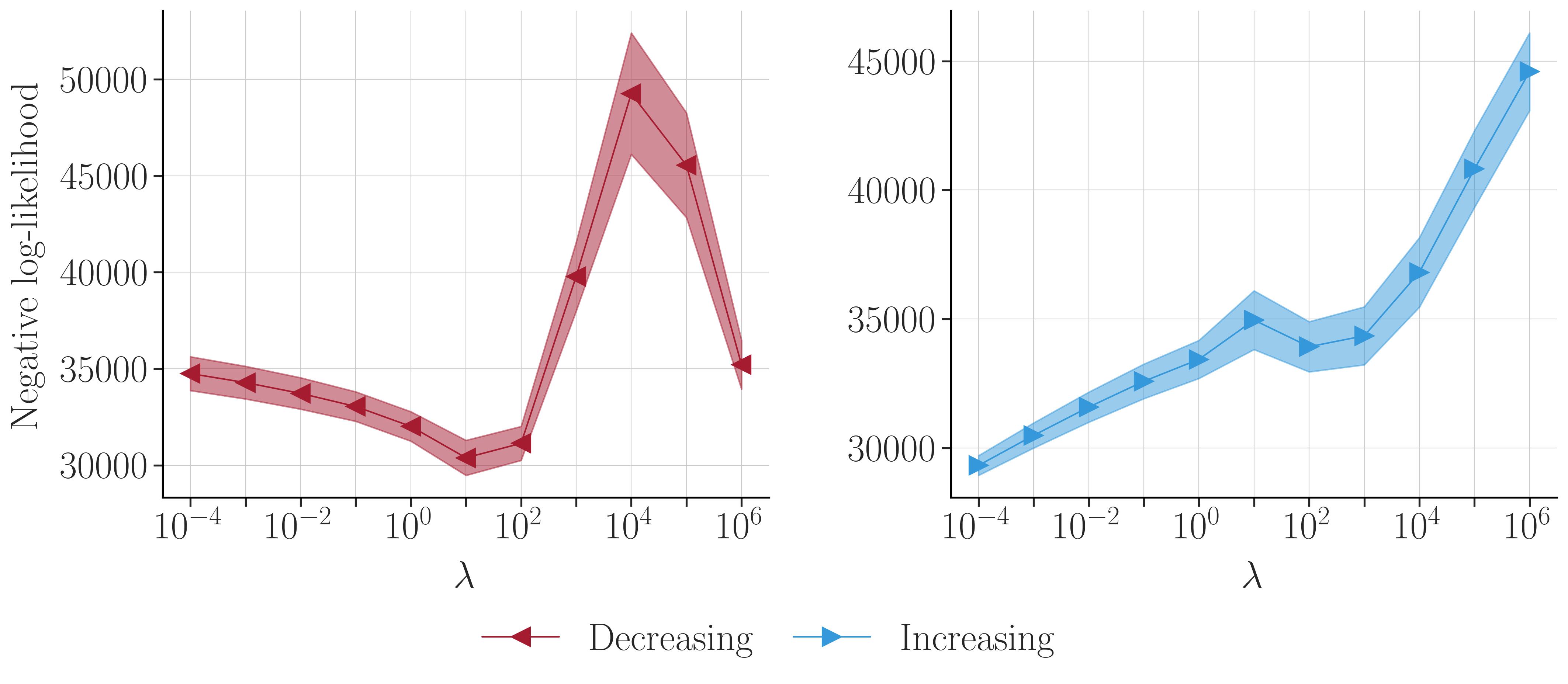}
    \caption{\textsl{College}}
  \end{subfigure}
  \hfill
  \begin{subfigure}[b]{0.45\textwidth}
    \includegraphics[width=\textwidth]{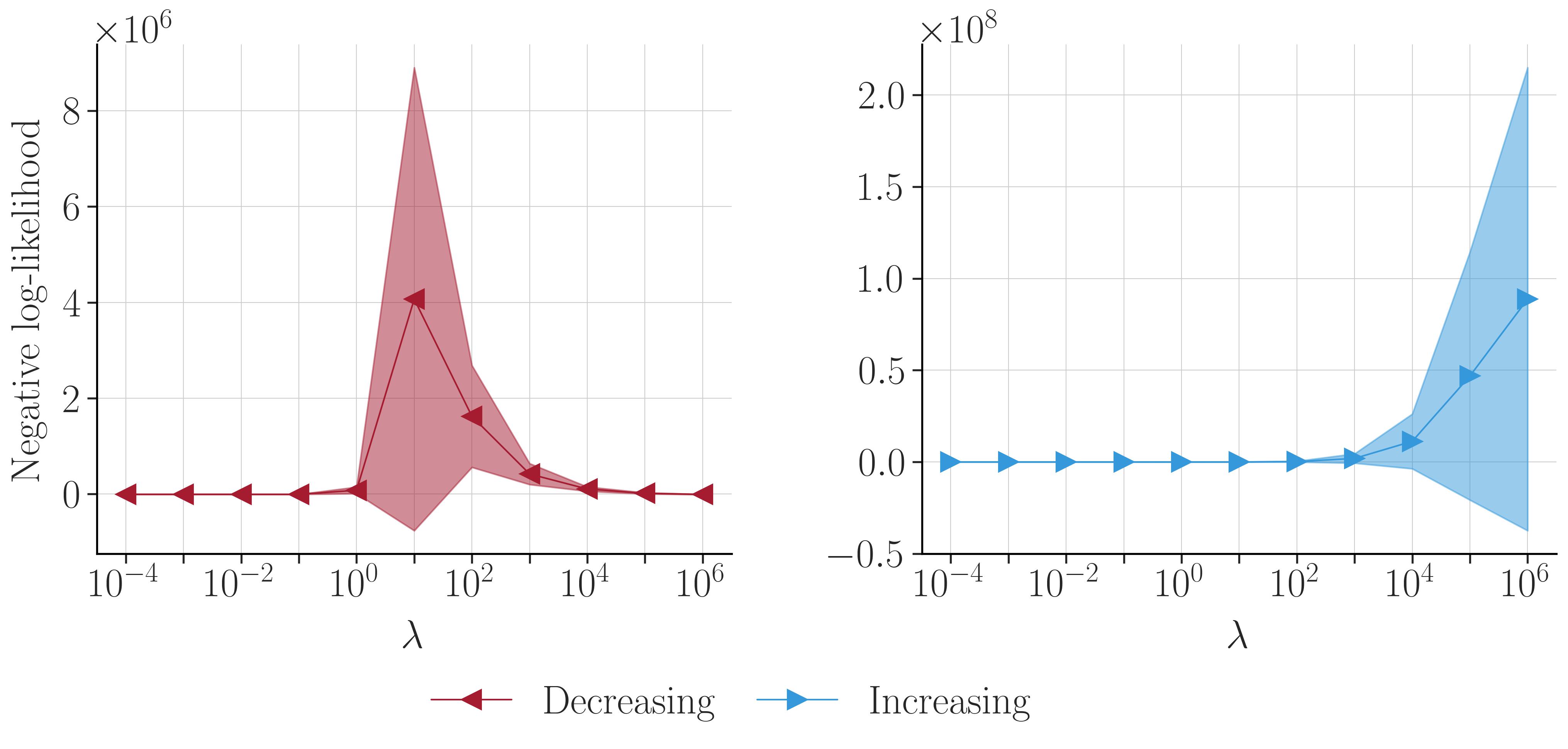}
    \caption{\textsl{Contacts}}
  \end{subfigure}
  \begin{subfigure}[b]{0.45\textwidth}
    \includegraphics[width=\textwidth]{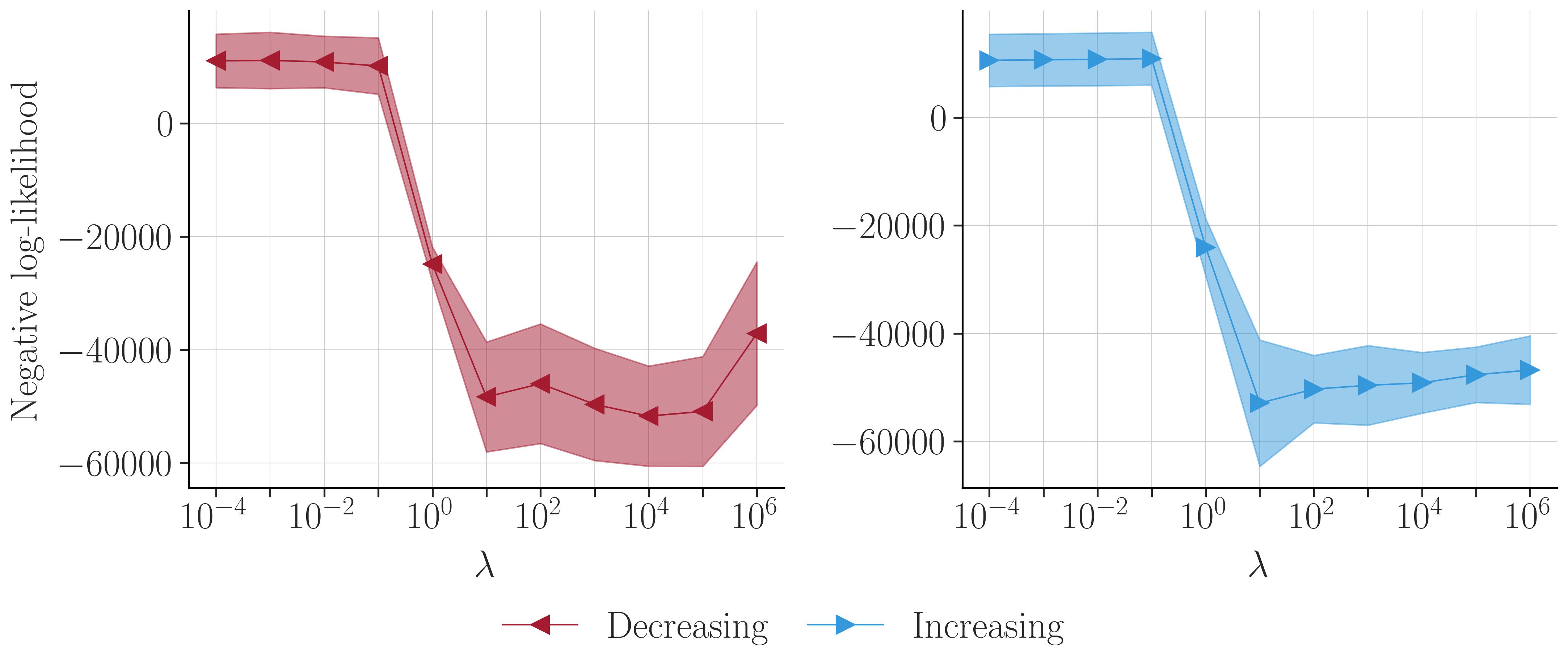}
    \caption{\textsl{Email}}
  \end{subfigure}
  \hfill
  \begin{subfigure}[b]{0.45\textwidth}
    \includegraphics[width=\textwidth]{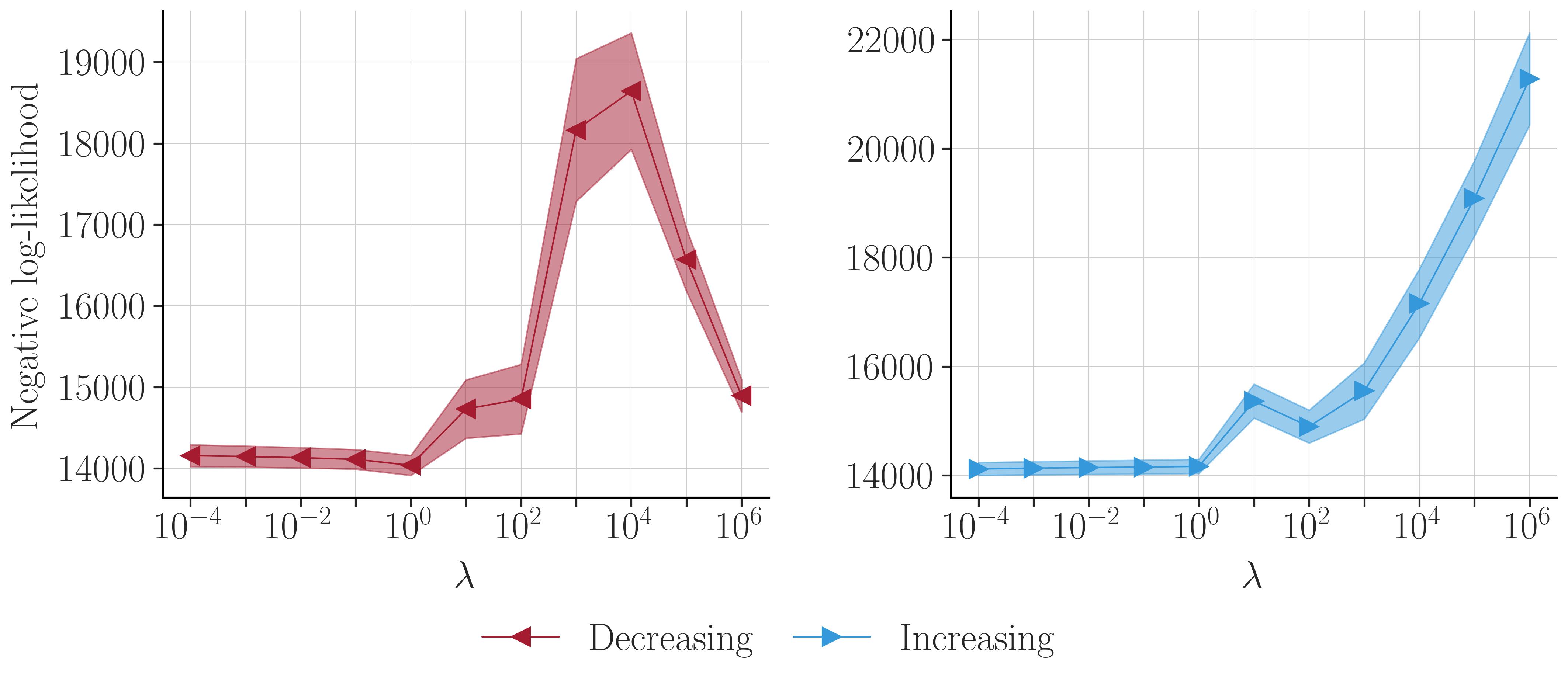}
    \caption{\textsl{Forum}}
  \end{subfigure}
  \centering
  \begin{subfigure}[b]{0.45\textwidth}
  \centering
    \includegraphics[width=\textwidth]{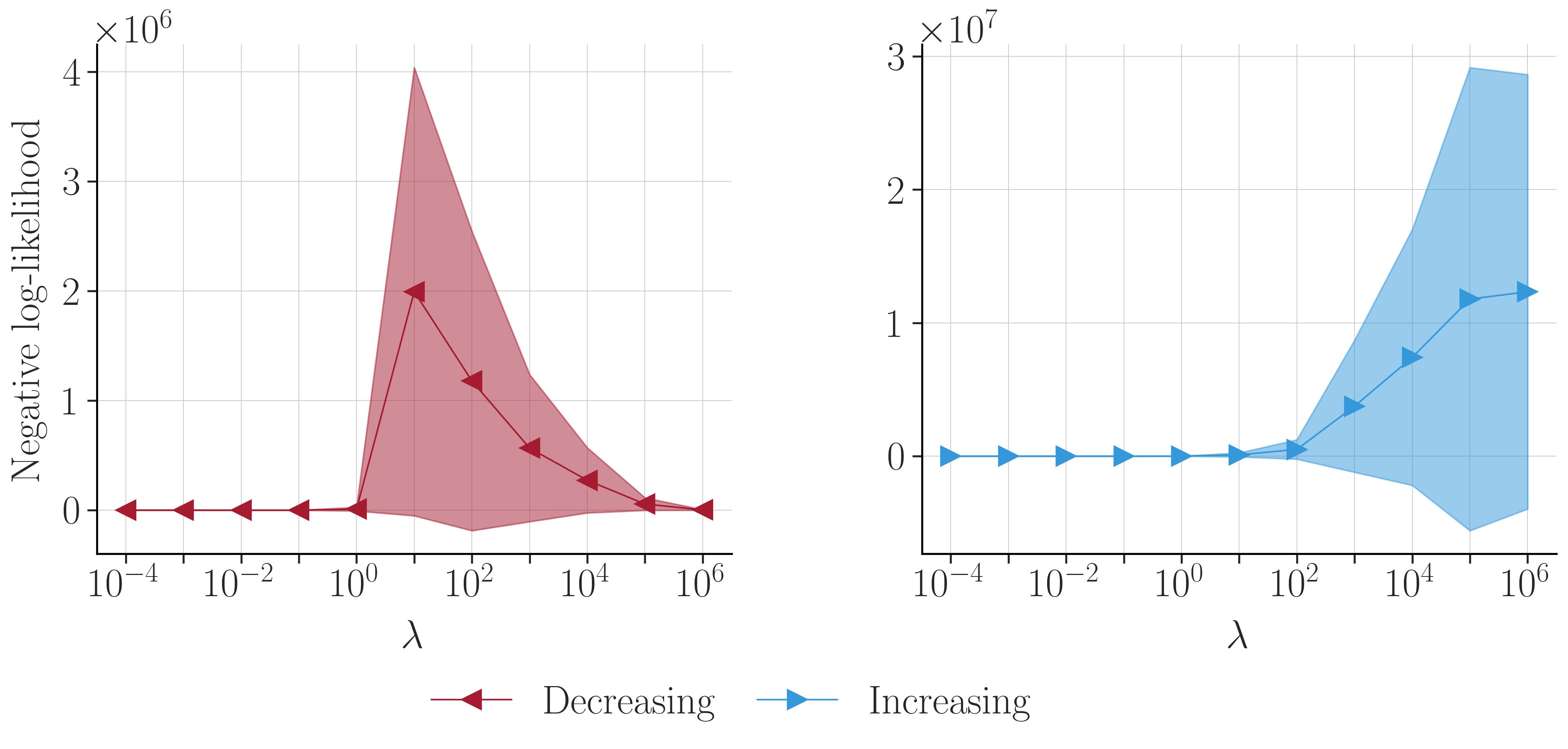}
    \caption{\textsl{Hypertext}}
  \end{subfigure}
  \caption{Negative log-likelihood of the masked pairs for the annealing strategy applied for tuning $\lambda$ parameter with $5$ random runs.}
  \label{fig:smiling_curves}
\end{figure}

The considered annealing strategy thereby quantifies the impact for different strengths of imposing the GP prior. It corresponds to a highly constrained model akin to static representations for small values of $\lambda$ in which the GP prior has close to zero variance of the parameters to highly flexible dynamic representations almost entirely driven by the likelihood function for high values of $\lambda$. The annealing strategy thus highlights the GP prior's impact and the optimal regime imposing such prior.

\subsection{Theoretical Results}

\begin{lemma}
For given fixed bias terms $\{\beta_i\}_{i\in\mathcal{V}}$, the node embeddings, $\{\mathbf{r}_{i}(t)\}_{i\in\mathcal{V}}$, learned by the objective given in Equation \ref{eq:log_likelihood} within a bounded set of radius $R_t$ during a time interval $[t_l, t_u]$ satisfy
\begin{align*}
\log\!\left(\frac{(t_u \!- t_l) }{ -\log p_{ij}^{0} }\right) \!+\! (\beta_{i} \!+\! \beta_j) \leq \frac{1}{(t_u \!-\! t_l)}\!\int_{t_l}^{t_u}\!\!\!\!|| \mathbf{r}_{i}(t)\!-\!\mathbf{r}_{j}(t) ||^2 dt \!\leq\! \log\!\left(\!\frac{(t_u - t_l) }{-\!\log(1-p_{ij}^{>}) }\!\!\right) \!\!+\! (\beta_{i} \!+\! \beta_j) \!+\! R_t
\end{align*}
where 
$p_{ij}^0$ and $p_{ij}^>$ are the probabilities of having zero and more than zero links for the nodes $i$ and $j$.
\end{lemma}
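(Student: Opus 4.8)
The plan is to reduce everything to a single scalar, the integrated intensity $\Lambda_{ij} := \int_{t_l}^{t_u}\lambda_{ij}(t)\,dt$, and then get the left inequality from Jensen and the right inequality from a crude nonnegativity-plus-confinement estimate.

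First I would invoke property \textbf{(iii)} of the nonhomogeneous Poisson process (Definition \ref{def:nhpp}): the number of events of the pair $(i,j)$ on $[t_l,t_u]$ is Poisson with mean $\Lambda_{ij}$, so $p_{ij}^0 = \exp(-\Lambda_{ij})$ and $p_{ij}^{>} = 1-p_{ij}^0$. Hence $-\log p_{ij}^0 = \Lambda_{ij}$ and, since $1-p_{ij}^{>}=p_{ij}^0$, also $-\log(1-p_{ij}^{>}) = \Lambda_{ij}$; in particular the expressions flanking the claimed sandwich are built from the same quantity. Substituting the exponential link of Equation \eqref{eq:intensity_function} and writing $d_{ij}(t) := \|\mathbf{r}_i(t)-\mathbf{r}_j(t)\|^2$ gives $\Lambda_{ij} = e^{\beta_i+\beta_j}\int_{t_l}^{t_u}e^{-d_{ij}(t)}\,dt$. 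With the averages $\bar{d} := \tfrac{1}{t_u-t_l}\int_{t_l}^{t_u}d_{ij}(t)\,dt$ (the middle term of the statement) and $m := \tfrac{1}{t_u-t_l}\int_{t_l}^{t_u}e^{-d_{ij}(t)}\,dt$ this reads $\Lambda_{ij} = (t_u-t_l)\,e^{\beta_i+\beta_j}\,m$, i.e. $-\log m = \log\!\big(\tfrac{t_u-t_l}{-\log p_{ij}^0}\big) + (\beta_i+\beta_j)$, which is exactly the term appearing on both sides of the desired bounds.

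For the lower bound I would apply Jensen's inequality to the convex map $x\mapsto e^{-x}$ against the uniform probability measure on $[t_l,t_u]$, obtaining $m \ge \exp(-\bar{d})$. Taking logarithms and using the identity of the previous step yields $\bar{d} \ge -\log m = \log\!\big(\tfrac{t_u-t_l}{-\log p_{ij}^0}\big) + (\beta_i+\beta_j)$, the left inequality. For the upper bound I would combine two elementary facts: since $d_{ij}(t)\ge 0$ we have $e^{-d_{ij}(t)}\le 1$, hence $m\le 1$, hence $-\log m\ge 0$, i.e. the common expression $\log\!\big(\tfrac{t_u-t_l}{-\log(1-p_{ij}^{>})}\big)+(\beta_i+\beta_j)$ is nonnegative; and because the trajectories stay in a bounded region of radius $R_t$ on $[t_l,t_u]$ the squared distance obeys $d_{ij}(t)\le R_t$, so $\bar{d}\le R_t$. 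Adding the nonnegative term gives $\bar{d}\le R_t\le \log\!\big(\tfrac{t_u-t_l}{-\log(1-p_{ij}^{>})}\big)+(\beta_i+\beta_j)+R_t$.

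The only genuinely delicate part is the bookkeeping in the first step: translating the zero/nonzero event probabilities into $\Lambda_{ij}$ via the Poisson property and recording that $1-p_{ij}^{>}=p_{ij}^0$, after which the lower bound is a one-line Jensen estimate and the upper bound is immediate. A secondary point I would make precise is the exact reading of ``bounded set of radius $R_t$'' so that $d_{ij}(t)\le R_t$ holds with the stated constant (otherwise one picks up the harmless factor from bounding each embedding's ambient norm separately). Neither point is a real obstacle, so the main work is presenting the reductions cleanly.
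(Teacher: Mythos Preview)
Your argument is correct. The lower bound via Jensen's inequality matches the paper exactly. For the upper bound the paper takes a slightly different route: it states and proves an auxiliary ``reverse Jensen'' inequality (Lemma~\ref{lemma:reverse_jensens_inequality}), namely $\tfrac{1}{t_u-t_l}\int e^{-f}\le e^{R_t}e^{-\bar f}$ with $R_t=\sup f$, and then reads off $\bar d\le R_t+(-\log m)$. Unpacking that lemma, its proof uses precisely the two facts you invoke---$f\ge 0$ gives $e^{-f}\le 1$, and $\bar f\le\sup f=R_t$---but combines them multiplicatively before taking logs, whereas you add the resulting log-inequalities directly. Your packaging is a bit more elementary since it bypasses the auxiliary lemma, at the cost of making the bound look slightly looser than it is (you discard $-\log m\ge 0$ rather than keeping it tied to $\bar d$); the paper's version makes clearer that the $R_t$ slack is exactly the gap in a Jensen-type estimate. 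Your caveat about the reading of ``radius $R_t$'' is well placed: the paper in fact takes $R_t=\sup_t\|\mathbf r_i(t)-\mathbf r_j(t)\|^2$, so the constant is the squared-distance bound rather than an ambient radius.
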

\begin{proof}
Let $p_{ij}^{0} := \mathbb{P}\{X_{ij} = 0\}$, and it can be written that
\begin{align*}
p_{ij}^{0} = \exp(-\lambda_{ij}) &= \exp\left(-\int_{t_l}^{t_u}\exp\left( \beta_i + \beta_j - || \mathbf{r}_{i}(t)-\mathbf{r}_{j}(t) ||^2 \right)dt\right)
\\
&=\exp\left(-\exp( \beta_i + \beta_j)\int_{t_l}^{t_u}\exp\left(-|| \mathbf{r}_{i}(t)-\mathbf{r}_{j}(t) ||^2 \right) dt\right)
\end{align*}
Then, the lower bound can be derived by
\begin{align*}
\frac{ -\log(p_{ij}^{0}) }{ (t_u - t_l) }\frac{1}{\exp( \beta_i + \beta_j)} &= \frac{1}{(t_u-t_l)}\int_{t_l}^{t_u}\exp\left(-|| \mathbf{r}_{i}(t)-\mathbf{r}_{j}(t) ||^2 \right) dt
\\
&\geq \exp\left( -\frac{1}{(t_u-t_l)}\int_{t_l}^{t_u}|| \mathbf{r}_{i}(t)-\mathbf{r}_{j}(t) ||^2 dt \right)
\end{align*}
where the second line follows from the Jensen's inequality, and it can be written that:
\begin{align*}
\frac{1}{(t_u-t_l)}\int_{t_l}^{t_u}|| \mathbf{r}_{i}(t)-\mathbf{r}_{j}(t) ||^2 dt &\geq \log\left(\frac{(t_u - t_l) }{ -\log p_{ij}^{0} }\right) + ( \beta_i + \beta_j).
\end{align*}

Based on a similar strategy that we have applied for the lower bound, we can write
\begin{align*}
\frac{ -\log(1-p_{ij}^{>}) }{(t_u - t_l) }\frac{1}{\exp( \beta_i + \beta_j)} &= \frac{1}{(t_u - t_l)}\int_{t_l}^{t_u}\exp\left(-|| \mathbf{r}_{i}(t)-\mathbf{r}_{j}(t) ||^2 \right)dt
\\
&\leq\exp\left(R_t\right) \exp\left(\!\!-\frac{1}{(t_u \!-\! t_l)}\int_{t_l}^{t_u}\!\!\!\!|| \mathbf{r}_{i}(t)\!-\!\mathbf{r}_{j}(t) ||^2dt \right)
\end{align*}
where the inequality follows from Lemma \ref{lemma:reverse_jensens_inequality} in which $f(t)$ corresponds to $||\mathbf{r}_i(t)-\mathbf{r}_j(t)||^2$ and $R_t:=\sup_{t\in[t_l,t_u]}||\mathbf{r}_i(t)-\mathbf{r}_j(t)||^2$.
Then, it can be concluded that
\begin{align*}
\frac{1}{(t_u - t_l)}\int_{t_l}^{t_u}|| \mathbf{r}_{i}(t)-\mathbf{r}_{j}(t) ||^2 dt &\leq \log\left(\exp( \beta_i + \beta_j)\exp\left(R_t\right)\frac{(t_u - t_l) }{ -\log(1-p_{ij}^{>}) }\right) 
\\
&\leq (\beta_i + \beta_j) + R_t + \log\left(\frac{(t_u - t_l) }{ -\log(1-p_{ij}^{>}) }\right) 
\end{align*}
\end{proof}
\newpage
\begin{lemma}\label{lemma:reverse_jensens_inequality}
Let $f:[t_l, t_u] \rightarrow \mathbb{R}^{+}$ be a real-valued non-negative continuous function, then it satisfies
\begin{align*}
    \frac{1}{(t_u-t_l)}\int_{t_l}^{t_u}\exp(-f(t))dt \leq \exp(R_t)\exp\left(-\frac{1}{(t_u-t_l)} \int_{t_l}^{t_u}f(t)dt \right)
\end{align*}
for some constant $R_t := \sup_{t\in [t_l, t_u]}f(t)\in\mathbb{R}$.
\end{lemma}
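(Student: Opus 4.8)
The plan is to establish this reverse-Jensen-type bound by a direct pointwise estimate, after absorbing the exponential of the mean of $f$ into the integrand. Write $\bar{f} := \frac{1}{t_u-t_l}\int_{t_l}^{t_u} f(t)\,dt$ for the average of $f$ over $[t_l,t_u]$. Since $f$ is continuous on the compact interval $[t_l,t_u]$, both $\bar{f}$ and $R_t = \sup_{t\in[t_l,t_u]} f(t)$ are finite, $\exp(-f)$ is integrable, and one has the elementary inequality $\bar{f} \le R_t$ (the mean never exceeds the supremum).

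The key step is to rewrite the target inequality equivalently as $\exp(\bar{f})\cdot\frac{1}{t_u-t_l}\int_{t_l}^{t_u}\exp(-f(t))\,dt \le \exp(R_t)$, i.e. as $\frac{1}{t_u-t_l}\int_{t_l}^{t_u}\exp\bigl(\bar{f}-f(t)\bigr)\,dt \le \exp(R_t)$. Now I would invoke the non-negativity of $f$ together with $\bar{f}\le R_t$: for every $t\in[t_l,t_u]$, $\bar{f}-f(t) \le \bar{f} \le R_t$, hence $\exp\bigl(\bar{f}-f(t)\bigr)\le \exp(R_t)$ pointwise. Integrating this pointwise bound over $[t_l,t_u]$, dividing by $t_u-t_l$, and pulling the constant $\exp(\bar{f})$ back out of the integral yields precisely the claimed estimate after rearrangement.

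There is no serious obstacle here; the only thing that needs care is the reformulation that moves $\exp(\bar{f})$ inside the integral as $\exp(\bar{f}-f(t))$, so that the trivial bound $\bar{f}-f(t)\le R_t$ — which uses both $f\ge 0$ and $\bar{f}\le R_t$ — becomes available. The routine facts (finiteness of $R_t$, integrability of $\exp(-f)$, and $\bar f\le R_t$) all follow immediately from continuity of $f$ on a compact interval, so I would state them in one line and proceed directly to the pointwise estimate.
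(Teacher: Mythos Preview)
Your proposal is correct and is essentially identical to the paper's own proof: the paper also defines the mean $\hat{f}$, moves $\exp(\hat{f})$ inside the integral to obtain $\exp(-f(t)+\hat{f})$, and then applies the two-step pointwise bound $-f(t)+\hat{f}\le\hat{f}\le R_t$ (using $f\ge 0$ and $\hat{f}\le\sup f$) before integrating. The only cosmetic difference is that the paper writes the two inequalities on separate lines rather than chaining them in one pointwise estimate.
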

\begin{proof}
Let $\hat{f} := \frac{1}{(t_u-t_l)} \int_{t_l}^{t_u}f(t)dt$ for some $t_u,t_l\in\mathbb{R}$ where $t_l<t_u$, then it can be written that
\begin{align*}
\frac{1}{(t_u-t_l)}\int_{t_l}^{t_u}\exp(-f(t) + \hat{f} )dt &\leq \frac{1}{(t_u-t_l)}\int_{t_l}^{t_u}\exp(\hat{f})dt 
\\
&= \frac{1}{(t_u-t_l)}\int_{t_l}^{t_u}\exp\left(\frac{1}{(t_u-t_l)} \int_{t_l}^{t_u}f(t^{'})dt^{'}\right)dt
\\
&\leq \frac{1}{(t_u-t_l)}\int_{t_l}^{t_u}\exp\left(\sup_{t^*\in [t_l,t_u]}f(t^*)\right)dt
\\
&= \exp\left(R_t\right)
\end{align*}
where $R_t:=\sup_{t\in [t_l,t_u]}f(t)$, and we have the first inequality since $f(t)$ is a non-negative function. In conclusion, we can obtain
\begin{align*}
\frac{1}{(t_u-t_l)}\int_{t_l}^{t_u}\exp(-f(t))dt \leq \exp\left(R_t\right)\exp\left(-\frac{1}{(t_u-t_l)} \int_{t_l}^{t_u}f(t)dt\right)
\end{align*}
\end{proof}

\begin{theorem}
Let $\mathbf{f}(t):[0, T]\rightarrow\mathbb{R}^D$ be a continuous embedding of a node. For any given $\epsilon>0$, there exists a continuous, piecewise-linear node embedding, $\mathbf{r}(t)$, satisfying $||\mathbf{f}(t) - \mathbf{r}(t)||_2 < \epsilon$ for all $t\in[0,T]$ where $\mathbf{r}(t):=\mathbf{r}^{(b)}(t)$ for all $(b-1)\Delta_B\leq t < b\Delta_B$, $\mathbf{r}(t):=\mathbf{r}^{(B)}(t)$ for $t=T$ and $\Delta_B = T/B$ for some $B\in\mathbb{N}^{+}$.
\end{theorem}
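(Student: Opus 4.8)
The plan is to take $\mathbf{r}$ to be the piecewise-linear interpolant of $\mathbf{f}$ at the bin endpoints and to control the error via uniform continuity. First I would invoke the Heine--Cantor theorem: since $[0,T]$ is compact and $\mathbf{f}$ is continuous, $\mathbf{f}$ is uniformly continuous, so for the given $\epsilon>0$ there is a $\delta>0$ such that $\|\mathbf{f}(s)-\mathbf{f}(t)\|_2<\epsilon$ whenever $|s-t|<\delta$ with $s,t\in[0,T]$. Then I would choose $B\in\mathbb{N}^{+}$ large enough that $\Delta_B=T/B<\delta$, which fixes the partition $0=t_0<t_1<\cdots<t_B=T$ with $t_b=b\Delta_B$.

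Next I would define the approximant bin by bin. On $[t_{b-1},t_b]$ set $\mathbf{r}^{(b)}(t):=\mathbf{f}(t_{b-1})+\tfrac{t-t_{b-1}}{\Delta_B}\big(\mathbf{f}(t_b)-\mathbf{f}(t_{b-1})\big)$, which is affine in $t$ and hence exactly of the piecewise-velocity form of Equation \eqref{eq:piecewise_definition} with initial position $\mathbf{x}^{(0)}=\mathbf{f}(0)$ and velocities $\mathbf{v}^{(b)}=\big(\mathbf{f}(t_b)-\mathbf{f}(t_{b-1})\big)/\Delta_B$. Since $\mathbf{r}^{(b)}(t_{b-1})=\mathbf{f}(t_{b-1})=\mathbf{r}^{(b-1)}(t_{b-1})$, the pieces agree at the shared endpoints, so gluing them (and setting $\mathbf{r}(T):=\mathbf{r}^{(B)}(T)=\mathbf{f}(T)$, consistent with the statement's conventions on the half-open bins) yields a continuous, piecewise-linear map $\mathbf{r}$.

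Finally I would bound the error. Fix $t\in[0,T)$ lying in a bin $[t_{b-1},t_b)$ and write $\theta:=(t-t_{b-1})/\Delta_B\in[0,1)$, so that $\mathbf{r}(t)=(1-\theta)\mathbf{f}(t_{b-1})+\theta\,\mathbf{f}(t_b)$; then $\|\mathbf{f}(t)-\mathbf{r}(t)\|_2 \le (1-\theta)\|\mathbf{f}(t)-\mathbf{f}(t_{b-1})\|_2 + \theta\|\mathbf{f}(t)-\mathbf{f}(t_b)\|_2$ by the triangle inequality. Both $|t-t_{b-1}|$ and $|t-t_b|$ are at most $\Delta_B<\delta$, so each norm on the right is $<\epsilon$, and the convex combination is therefore $<\epsilon$; the case $t=T$ is immediate since $\mathbf{r}(T)=\mathbf{f}(T)$. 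This gives $\|\mathbf{f}(t)-\mathbf{r}(t)\|_2<\epsilon$ for all $t\in[0,T]$, as required.

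\textbf{Expected main obstacle.} There is no deep difficulty here: the result is a routine application of uniform continuity together with the triangle inequality for convex combinations. The only point demanding a little care is bookkeeping --- verifying that the interpolant genuinely has the piecewise-constant-velocity parametrization \eqref{eq:piecewise_definition} used by the model and that continuity is maintained across bin boundaries under the half-open-interval and endpoint conventions in the statement; everything else is mechanical.
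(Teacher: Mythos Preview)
Your proposal is correct and follows essentially the same route as the paper: invoke Heine--Cantor for uniform continuity, choose $B$ so that $\Delta_B$ is below the resulting modulus, take $\mathbf{r}$ to be the piecewise-linear interpolant of $\mathbf{f}$ at the bin endpoints, and bound the error on each bin via the triangle inequality. The only cosmetic difference is that the paper uses $\epsilon/2$ in the uniform-continuity step and then applies a cruder triangle inequality yielding $\epsilon/2+\epsilon/2$, whereas your convex-combination estimate lets you work directly with $\epsilon$; the arguments are otherwise identical.
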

\begin{proof}
Let $\mathbf{f}(t):[0, T]\rightarrow\mathbb{R}^D$ be a continuous embedding so it is also uniformly continuous by the Heine–Cantor theorem since $[0, T]$ is a compact set. Then, we can find some $B\in\mathbb{N}^{+}$ such that for every $t, \tilde{t}\in[0,T]$ with $| t - \tilde{t}|\leq \Delta_B:=T/B$ implies $|| \mathbf{f}(t) - \mathbf{f}(\tilde{t})||_2 < \epsilon/2$ for any given $\epsilon>0$. 

Let us define $\mathbf{r}^{(b)}(t) = \mathbf{r}^{(b-1)}\left((b-1)\Delta_B\right) + \mathbf{v}_b(t - (b-1)\Delta_B)$ recursively for each $b\in\{1,\ldots,B\}$ where $\mathbf{r}^{(0)}(0) := \mathbf{x}_0=\mathbf{f}(0)$, and $\mathbf{v}_b := \frac{ \mathbf{f}\left(b\Delta_B\right)-\mathbf{f}\left((b-1)\Delta_B\right) }{ \Delta_B }$. Then it can be seen that we have $\mathbf{r}^{(b)}\left(b\Delta_B\right) = \mathbf{f}(b\Delta_B)$ for all $b \in \{1,\ldots,B\}$ because
\begin{align*}
\mathbf{r}^{(b)}(b\Delta_B) &= \mathbf{r}^{(b-1)}\left((b-1)\Delta_B\right) + \mathbf{v}_b\left(b\Delta_B-\Delta_B(b-1)\right)
\\
&= \mathbf{r}^{(b-1)}\left((b-1)\Delta_B\right) + \mathbf{v}_b \Delta_B
\\
&= \mathbf{r}^{(b-1)}\left((b-1)\Delta_B\right) + \left(\frac{ \mathbf{f}\left(b\Delta_B\right)-\mathbf{f}\left((b-1)\Delta_B\right) }{ \Delta_B }\right)\Delta_B
\\
&= \mathbf{r}^{(b-1)}\left((b-1)\Delta_B\right) + \left(\mathbf{f}\left(b\Delta_B\right)-\mathbf{f}\left((b-1)\Delta_B\right)\right)
\\
&= \mathbf{r}^{(b-2)}\left((b-2)\Delta_B\right) + \left(\mathbf{f}\left((b-1)\Delta_B\right)-\mathbf{f}((b-2)\Delta_B)\right)+  \left(\mathbf{f}\left(b\Delta_B\right)-\mathbf{f}\left((b-1)\Delta_B\right)\right)
\\
&= \mathbf{r}^{(b-2)}\left((b-2)\Delta_B\right) + \left(\mathbf{f}\left(b\Delta_B\right) -\mathbf{f}((b-2)\Delta_B)\right)
\\
&= \cdots
\\
&= \mathbf{r}^{(0)}(0) + \left(\mathbf{f}\left(b\Delta_B\right)-\mathbf{f}(0)\right)
\\
&= \mathbf{f}\left(b\Delta_B\right)
\end{align*}
where the last line follows from the fact that $\mathbf{r}^{(0)}(0)=\mathbf{x}_0=\mathbf{f}(0)$ by the definition.
Therefore, for any given point $t\in[0,T)$ for $b= \lfloor t / \Delta_b \rfloor+1$, it can be seen that
\begin{align*}
||\mathbf{f}(t) - \mathbf{r}(t)||_2 &= ||\mathbf{f}(t) - \mathbf{r}^{(b)}(t)||_2
\\
&= \Big|\Big| \mathbf{f}(t) - \Big( \mathbf{r}^{(b-1)}((b-1)\Delta_B) + \mathbf{v}_b(t - (b-1)\Delta_B)\Big) \Big|\Big|_2
\\
&= \Bigg|\Bigg| \mathbf{f}(t) - \Bigg(\mathbf{r}^{(b-1)}\left((b-1)\Delta_B\right) + \left(\frac{ \mathbf{f}(b\Delta_B)-\mathbf{f}\left((b-1)\Delta_B\right) }{ \Delta_B }\right)(t - (b-1)\Delta_B )\Bigg) \Bigg|\Bigg|_2
\\
&= \Big|\Big| \left(\mathbf{f}(t) - \mathbf{r}^{(b-1)}\left((b-1)\Delta_B\right) \right) + \left( \mathbf{f}(b\Delta_B)-\mathbf{f}\left((b-1)\Delta_B\right)\right) \left(\frac{t - (b-1)\Delta_B}{ \Delta_B }\right) \Big|\Big|_2
\\
&\leq \Big|\Big| \mathbf{f}(t) - \mathbf{r}^{(b-1)}\left((b-1)\Delta_B\right) \Big|\Big| + \Big|\Big| \mathbf{f}(b\Delta_B)-\mathbf{f}\left((b-1)\Delta_B\right) \Big|\Big|
\\
&< \frac{\epsilon}{2} + \frac{\epsilon}{2}
\\
&= \epsilon
\end{align*}
where the inequality in the fifth line holds since we have $\left|\frac{t - (b-1)\Delta_B}{\Delta_B}\right| \leq 1$
\end{proof}

\begin{lemma}[Integral Computation]\label{lemma:integral}
The integral of the intensity function, $\lambda_{ij}(t)$, from $t_l$ to $t_u$ is equal to
\begin{align*}
\int_{t_l}^{t_u} \exp\left(\beta_{ij} - \norm{\Delta \mathbf{x}_{ij} + \Delta \mathbf{v}_{ij}t }^2\right)  = \frac{\sqrt{\pi}\exp{\left(\beta_{ij} + r_{ij}^2 - \norm{\Delta\mathbf{x}_{ij}}^2 \right)}}{ 2\norm{\Delta\mathbf{v}_{ij}} } \mathrm{erf}\Big(\norm{\Delta\mathbf{v}_{ij}}t+r_{ij}\Big) \Bigg|_{t=t_l}^{t=t_u}
\end{align*}
where $\beta_{ij} := \beta_i + \beta_j$, $\Delta\mathbf{x}_{ij} := \mathbf{x}_i^{(0)} - \mathbf{x}_j^{(0)}$, $\Delta\mathbf{v}_{ij} := \mathbf{v}_i^{(1)} - \mathbf{v}_j^{(1)}$ and $r := \frac{\langle \Delta\mathbf{v}_{ij}, \Delta\mathbf{x}_{ij} \rangle}{\norm{\Delta\mathbf{v}_{ij}}}$.
\end{lemma}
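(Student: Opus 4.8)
The plan is to reduce the integral, which over a single time bin has the form $\int_{t_l}^{t_u}\exp(\beta_{ij}-\norm{\Delta\mathbf{x}_{ij}+\Delta\mathbf{v}_{ij}t}^2)\,dt$ with $\mathbf{r}_i(t)=\mathbf{x}_i^{(0)}+\mathbf{v}_i^{(1)}t$, to a standard Gaussian integral by completing the square in the exponent and then invoking the definition of the error function.

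First I would expand the squared Euclidean distance, which is a quadratic polynomial in $t$: by bilinearity of the inner product,
\[
\norm{\Delta\mathbf{x}_{ij}+\Delta\mathbf{v}_{ij}t}^2=\norm{\Delta\mathbf{x}_{ij}}^2+2t\langle\Delta\mathbf{x}_{ij},\Delta\mathbf{v}_{ij}\rangle+t^2\norm{\Delta\mathbf{v}_{ij}}^2,
\]
so that the integrand factors as $\exp(\beta_{ij}-\norm{\Delta\mathbf{x}_{ij}}^2)\cdot\exp(-\norm{\Delta\mathbf{v}_{ij}}^2t^2-2t\langle\Delta\mathbf{x}_{ij},\Delta\mathbf{v}_{ij}\rangle)$. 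With $r_{ij}:=\langle\Delta\mathbf{v}_{ij},\Delta\mathbf{x}_{ij}\rangle/\norm{\Delta\mathbf{v}_{ij}}$ one checks directly that $(\norm{\Delta\mathbf{v}_{ij}}t+r_{ij})^2=\norm{\Delta\mathbf{v}_{ij}}^2t^2+2t\langle\Delta\mathbf{v}_{ij},\Delta\mathbf{x}_{ij}\rangle+r_{ij}^2$, hence the second exponent equals $-(\norm{\Delta\mathbf{v}_{ij}}t+r_{ij})^2+r_{ij}^2$. Pulling the constant $\exp(\beta_{ij}+r_{ij}^2-\norm{\Delta\mathbf{x}_{ij}}^2)$ outside the integral reduces the problem to evaluating $\int_{t_l}^{t_u}\exp(-(\norm{\Delta\mathbf{v}_{ij}}t+r_{ij})^2)\,dt$.

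Then I would change variables $u=\norm{\Delta\mathbf{v}_{ij}}t+r_{ij}$, $du=\norm{\Delta\mathbf{v}_{ij}}\,dt$, which turns the remaining integral into $\frac{1}{\norm{\Delta\mathbf{v}_{ij}}}\int\exp(-u^2)\,du$, and apply the antiderivative $\int\exp(-u^2)\,du=\frac{\sqrt{\pi}}{2}\,\mathrm{erf}(u)+C$ --- merely the definition $\mathrm{erf}(x)=\frac{2}{\sqrt{\pi}}\int_0^x e^{-s^2}\,ds$ rearranged. Substituting back and evaluating between the limits yields precisely the stated closed form. The computation is elementary, so the only point deserving care is the degenerate case $\Delta\mathbf{v}_{ij}=\mathbf{0}$, in which $r_{ij}$ and the prefactor $1/\norm{\Delta\mathbf{v}_{ij}}$ are undefined; there the integrand is constant and the integral simply equals $(t_u-t_l)\exp(\beta_{ij}-\norm{\Delta\mathbf{x}_{ij}}^2)$, which is also the limit of the general formula as $\norm{\Delta\mathbf{v}_{ij}}\to 0$ and is handled as a separate branch in the implementation.
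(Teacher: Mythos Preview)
Your proposal is correct and follows essentially the same route as the paper's proof: expand the squared norm, complete the square via $r_{ij}=\langle\Delta\mathbf{v}_{ij},\Delta\mathbf{x}_{ij}\rangle/\norm{\Delta\mathbf{v}_{ij}}$, substitute $u=\norm{\Delta\mathbf{v}_{ij}}t+r_{ij}$, and invoke the definition of $\mathrm{erf}$. Your added remark on the degenerate case $\Delta\mathbf{v}_{ij}=\mathbf{0}$ is a welcome observation that the paper's proof does not make explicit.
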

\begin{proof}
\begin{align}
\int_{t_l}^{t_u} \exp\left(- \norm{\Delta \mathbf{x}_{ij} + \Delta \mathbf{v}_{ij}t }^2\right) &= \int_{t_l}^{t_u} \exp\left(- \norm{\Delta\mathbf{v}_{ij}}^2t^2 - 2\left\langle\Delta\mathbf{x}_{ij}, \Delta\mathbf{v}_{ij} \right\rangle t - \norm{\Delta\mathbf{x}_{ij}}^2 \right)\mathrm{d}t\nonumber
\\
&= \int_{t_l}^{t_u} \exp\left( -\left( \norm{\Delta\mathbf{v}_{ij}}t + r_{ij} \right)^2  + r_{ij}^2 - \norm{\Delta\mathbf{x}_{ij}}^2 \right)\mathrm{d}t\label{eq_eq_euc_proof1}
\end{align}
where $r_{ij} := \frac{\langle \Delta\mathbf{v}_{ij}, \Delta\mathbf{x}_{ij} \rangle}{\norm{\Delta\mathbf{v}_{ij}}}$. The substitution $u = \norm{\Delta\mathbf{v}_{ij}}t + r_{ij}$ yields $\mathrm{d} u = \norm{\Delta\mathbf{v}_{ij}}\mathrm{d} t$. Furthermore, we have
\begin{align}
\int_{t_l}^{t_u}\!\!\!\!\! \exp\left(-\left( \norm{\Delta\mathbf{v}_{ij}}t \!+\! r_{ij} \right)^2 \right)\mathrm{d}t &= \frac{1}{\norm{\Delta\mathbf{v}_{ij}}}\int_{\norm{\Delta\mathbf{v}_{ij}}t_l+r_{ij}}^{\norm{\Delta\mathbf{v}_{ij}}t_u+r_{ij}} \exp\left( -u^2 \right)\!\mathrm{d}u\nonumber
\\
&=  \frac{1}{\norm{\Delta\mathbf{v}_{ij}}}\frac{\sqrt{\pi}}{2} \left(\frac{2}{\sqrt{\pi}} \int_{\norm{\Delta\mathbf{v}_{ij}}t_l+r_{ij}}^{\norm{\Delta\mathbf{v}_{ij}}t_u+r_{ij}} \exp\left( -u^2 \right)\mathrm{d}u \right)\nonumber
\\
&= \frac{\sqrt{\pi}}{2\norm{\Delta\mathbf{v}_{ij}}} \mathrm{erf}\Big(\norm{\Delta\mathbf{v}_{ij}}t+r_{ij}\Big)\Bigg|_{t=t_l}^{t=t_u}\label{eq_eq_euc_proof2}
\end{align}

By using Equations \ref{eq_eq_euc_proof1} and \ref{eq_eq_euc_proof2}, it can be obtained that
\begin{align}
\int_{t_l}^{t_u} \exp\left(- \norm{\Delta \mathbf{x}_{ij} + \Delta \mathbf{v}_{ij}t }^2\right) &= \exp{\left( r_{ij}^2 - \norm{\Delta\mathbf{x}_{ij}}^2 \right)}\int_{t_l}^{t_u} \exp\left( -\left( \norm{\Delta\mathbf{v}_{ij}}t + r_{ij} \right)^2 \right)\mathrm{d}t\nonumber
\\
&= \frac{\sqrt{\pi}\exp{\left( r_{ij}^2 - \norm{\Delta\mathbf{x}_{ij}}^2 \right)}}{ 2\norm{\Delta\mathbf{v}_{ij}} }\Bigg[ \mathrm{erf}\Big(\norm{\Delta\mathbf{v}_{ij}}t+r_{ij}\Big)\Bigg|_{t=t_l}^{t=t_u}\nonumber
\end{align}
Therefore, we can conclude that
\begin{align*}
\int_{t_l}^{t_u} \exp\left(\beta_{ij} - \norm{\Delta \mathbf{x}_{ij} + \Delta \mathbf{v}_{ij}t }^2\right) = \frac{\sqrt{\pi}\exp{\left(\beta_{ij} + r_{ij}^2 - \norm{\Delta\mathbf{x}_{ij}}^2 \right)}}{ 2\norm{\Delta\mathbf{v}_{ij}} }\mathrm{erf}\Big(\norm{\Delta\mathbf{v}_{ij}}t+r_{ij}\Big)\Bigg|_{t=t_l}^{t=t_u}
\end{align*}
\end{proof}

\subsection{Extension to Weighted, Directed, and Bipartite Networks}
In this section, we will discuss how the proposed approach, \textsc{PiVeM}, can be extended for weighted, directed, and bipartite networks.

\textbf{Weighted networks.} Our approach can be simply adapted for positive integer-weighted networks by replacing each weighted link in the network with multiple unit events corresponding to the integer weight at the specific time point of the integer-weighted link. Then, \textsc{PiVeM} can be run as is for the reinterpreted version of the network without making any modifications to the structure of the approach.

\textbf{Directed and Bipartite networks.} Let $\mathcal{G}=((\mathcal{V},\mathcal{U}), \mathcal{E})$ be a bipartite network with the parts $\mathcal{V}=\{v_{1},\ldots,v_{N_1}\}$ and $\mathcal{U}=\{u_{1},\ldots,u_{N_2}\}$. We can rewrite the objective given in Equation \ref{eq:objective_pivem} by considering only the pairs $(i,j)\in\mathcal{V}\times\mathcal{U}$ belonging to different parts:

\begin{align}\label{eq:objective_pivem_bipartite}
    \hat{\Omega}= \argmax_{\Omega}\sum_{i\in\mathcal{V}}\sum_{j\in\mathcal{U}}\left(\sum_{e_{ij \in \mathcal{E}_{ij}}}\log\lambda_{ij}(e_{ij}) - \int_{0}^{T}\lambda_{ij}(t)dt\right) + \log\mathcal{N}\left(\left[\begin{array}{c}
    \mathbf{x}^{(0)}\\
    \mathbf{v}
    \end{array} \right]; \mathbf{0}, \bm{\Sigma}\right)
\end{align}
where the intensity function, $\lambda_{ij}(e_{ij})$ is defined as follows:
\begin{align}
\lambda_{ij}(t) := \exp{\left(\beta_i + \eta_j - ||\mathbf{r}^{*}_{i}(t) - \mathbf{r}^{**}_{j}(t)||^2\right),}
\end{align}
where $\beta_i$ and $\eta_j$ indicate the bias/random effect terms for the nodes belonging respectively to $\mathcal{V}$ an $\mathcal{U}$. Similarly, we can introduce distinct initial position $\mathbf{x}^{*}_i$, $\mathbf{x}^{**}_j$ and velocity tensors $\mathbf{v}^{*}_i$, $\mathbf{v}^{**}_j$ to define the node representations, $\mathbf{r}^{*}_{i}(t)\in\mathbb{R}^D$ and  $\mathbf{r}^{**}_{j}(t)\in\mathbb{R}^D$ at time $t$. Note that we can write $\mathbf{x}_i^{(0)} = \mathbf{x}^{*}_i \oplus \mathbf{x}^{**}_i$, and $\mathbf{v}_i = \mathbf{v}^{*}_i \oplus \mathbf{v}^{**}_i$ where $\oplus$ indicates the tensor concatenation operation.

For the directed case, we specify the model similar to the bipartite case but define the likelihood function as
\begin{align}\label{eq:objective_pivem_direcgted}
\hat{\Omega}= \argmax_{\Omega}\sum_{i\neq j}\left(\sum_{e_{ij \in \mathcal{E}_{ij}}}\log\lambda_{ij}(e_{ij}) - \int_{0}^{T}\lambda_{ij}(t)dt\right) + \log\mathcal{N}\left(\left[\begin{array}{c}
    \mathbf{x}^{(0)}\\
    \mathbf{v}
    \end{array} \right]; \mathbf{0}, \bm{\Sigma}\right).
\end{align}

\newpage
\subsection{Table of Symbols}
The detailed list of the symbols used throughout the manuscript and their corresponding definitions  can be found in Table 
\ref{tab:table_of_symbols}.

\begin{table}[!h]
\centering
\caption{Table of symbols}
\label{tab:table_of_symbols}
\begin{tabular}{rl@{}}
\toprule
\textbf{Symbol} & \textbf{Description} \\ \midrule
$\mathcal{G}$ & Graph \\
$\mathcal{V}$ & Vertex set \\
$\mathcal{E}$ & Edge set \\
$\mathcal{E}_{ij}$ & Edge set of node pair $(i,j)$ \\
$N$ & Number of nodes \\
$D$ & Dimension size \\
$\mathcal{I}_T$ & Time interval \\
$T$ & Time length \\
$B$ & Number of bins \\ 
$\beta_i$ & Bias term of node $i$ \\
$\mathbf{x}$ & Initial position matrix \\
$\mathbf{v}^{(b)}$ & Velocity matrix for bin $b$ \\
$\mathbf{r}_{i}(t)$ & Position of node $i$ at time $t$ \\
$\lambda_{ij}(t)$ & Intensity of node pair $(i,j)$ at time $t$\\
$e_{ij}$ & An event time of node pair $(i,j)$\\
$\Sigma$ & Covariance matrix\\
$\lambda$ & Scaling factor of the covariance\\
$\sigma_{\mathbf{\Sigma}}$ & Noise variance\\
$\sigma_{\mathbf{B}}$ & Lengthscale variable of RBF kernel\\
$\otimes$ & Kronecker product\\
$\mathbf{I}$ & Identity matrix\\
$\mathbf{B}$ & Bin interaction matrix\\
$\mathbf{C}$ & Node interaction matrix\\
$\mathbf{D}$ & Dimension interaction matrix\\
$\mathbf{R}$ & Capacitance matrix\\
$K$ & Latent dimension of $\mathbf{C}$\\
\bottomrule
\end{tabular}%
\end{table}


\clearpage
\newpage

\end{document}